\def\eqref#1{equation~\ref{#1}}
\def\1{\bm{1}}
\DeclareMathAlphabet{\mathsfit}{\encodingdefault}{\sfdefault}{m}{sl}
\SetMathAlphabet{\mathsfit}{bold}{\encodingdefault}{\sfdefault}{bx}{n}
\title{Dynamical versus Bayesian Phase Transitions in a Toy Model of Superposition}
\author{ 
        Zhongtian Chen\thanks{These authors contributed equally to this work.}\;\;\thanks{School of Mathematics and Statistics, University of Melbourne.}\\
	\texttt{zhongtianc@student.unimelb.edu.au} \\
        \And
        Edmund Lau\footnotemark[1]\;\;\footnotemark[2]\\
	\texttt{elau1@student.unimelb.edu.au} \\
        \And
	Jake Mendel\\
	\texttt{jakeamendel@gmail.com} \\
        \And
	Susan Wei\,\footnotemark[2]\\
	\texttt{susan.wei@unimelb.edu.au} \\
	\And
	  Daniel Murfet\,\footnotemark[2]\\
	\texttt{d.murfet@unimelb.edu.au} \\
}
\date{}
\newtheorem{theorem} {Theorem}[section]
\newtheorem{lemma} {Lemma}[section]
\newtheorem{remark}{Remark}[section]
\newtheorem{corollary} {Corollary}[section]
\DeclareMathOperator{\ReLU}{ReLU}
\DeclareMathOperator{\RLCT}{learning\ coefficient}
\DeclareMathOperator{\ConvHull}{ConvHull}
\newcommand{\ncols}{c}
\newcommand{\nrows}{r}
\newcommand{\paramspace}{\mathcal{W}}
\begin{document}

\maketitle

\begin{abstract}
We investigate phase transitions in a Toy Model of Superposition (TMS) \citep{elhage2022superposition} using Singular Learning Theory (SLT). We derive a closed formula for the theoretical loss and, in the case of two hidden dimensions, discover that regular $k$-gons are critical points. We present supporting theory indicating that the local learning coefficient (a geometric invariant) of these $k$-gons determines phase transitions in the Bayesian posterior as a function of training sample size. We then show empirically that the same $k$-gon critical points also determine the behavior of SGD training. The picture that emerges adds evidence to the conjecture that the SGD learning trajectory is subject to a sequential learning mechanism. Specifically, we find that the learning process in TMS, be it through SGD or Bayesian learning, can be characterized by a journey through parameter space from regions of high loss and low complexity to regions of low loss and high complexity. 
\end{abstract}


\section{Introduction}

The apparent simplicity of the Toy Model of Superposition (TMS) proposed in \citet{elhage2022superposition} conceals a remarkably intricate \textit{phase structure}. During training, a plateau in the loss is often followed by a sudden discrete drop, suggesting some development in the network's internal structure. To shed light on these transitions and their significance, this paper examines the dynamical transitions in TMS during SGD training, connecting them to phase transitions of the Bayesian posterior with respect to sample size $n$. While the former transitions have been observed in several recent works in deep learning \citep{olsson2022context,mcgrath2022acquisition,wei2022emergent}, their formal status has remained elusive. In contrast, phase transitions of the Bayesian posterior are mathematically well-defined in Singular Learning Theory (SLT) \citep{watanabeAlgebraicGeometryStatistical2009}.

Using SLT, we can show formally that the Bayesian posterior is subject to an \textit{internal model selection} mechanism in the following sense: the posterior prefers, for small training sample size $n$, critical points with low complexity but potentially high loss. The opposite is true for high $n$ where the posterior prefers low loss critical points at the cost of higher complexity. The measure of complexity here is very specific: it is the \textit{local learning coefficient}, $\lambda$, of the critical points, first alluded to by \citet[\S 7.6]{watanabeAlgebraicGeometryStatistical2009} and clarified recently in \citet{quantifdegen}. We can think of this internal model selection as a discrete dynamical process: at various critical sample sizes the posterior concentration ``jumps'' from one region $\paramspace_\alpha$ of parameter space to another region $\paramspace_\beta$. We refer to an event of this kind as a \emph{Bayesian phase transition} $\alpha \rightarrow \beta$.

For the TMS model with two hidden dimensions we show that these Bayesian phase transitions actually occur and do so between phases dominated by weight configurations representing regular polygons (termed here \emph{$k$-gons}). The main result of SLT, the asymptotic expansion of the free energy \citep{watanabe2018}, predicts phase transitions as a function of the loss and local learning coefficient of each phase. For TMS, we are in the fortunate position of being able to derive theoretically the exact local learning coefficient of the $k$-gons which are most commonly encountered during MCMC sampling of the posterior, and thereby verify that the mathematical theory correctly predicts the empirically observed phases and phase transitions. Altogether, this forms a mathematically well-founded toolkit for reasoning about phase transitions in the Bayesian posterior of TMS.  

\begin{figure}[h]
    \centering
    \includegraphics[width=0.8\linewidth]{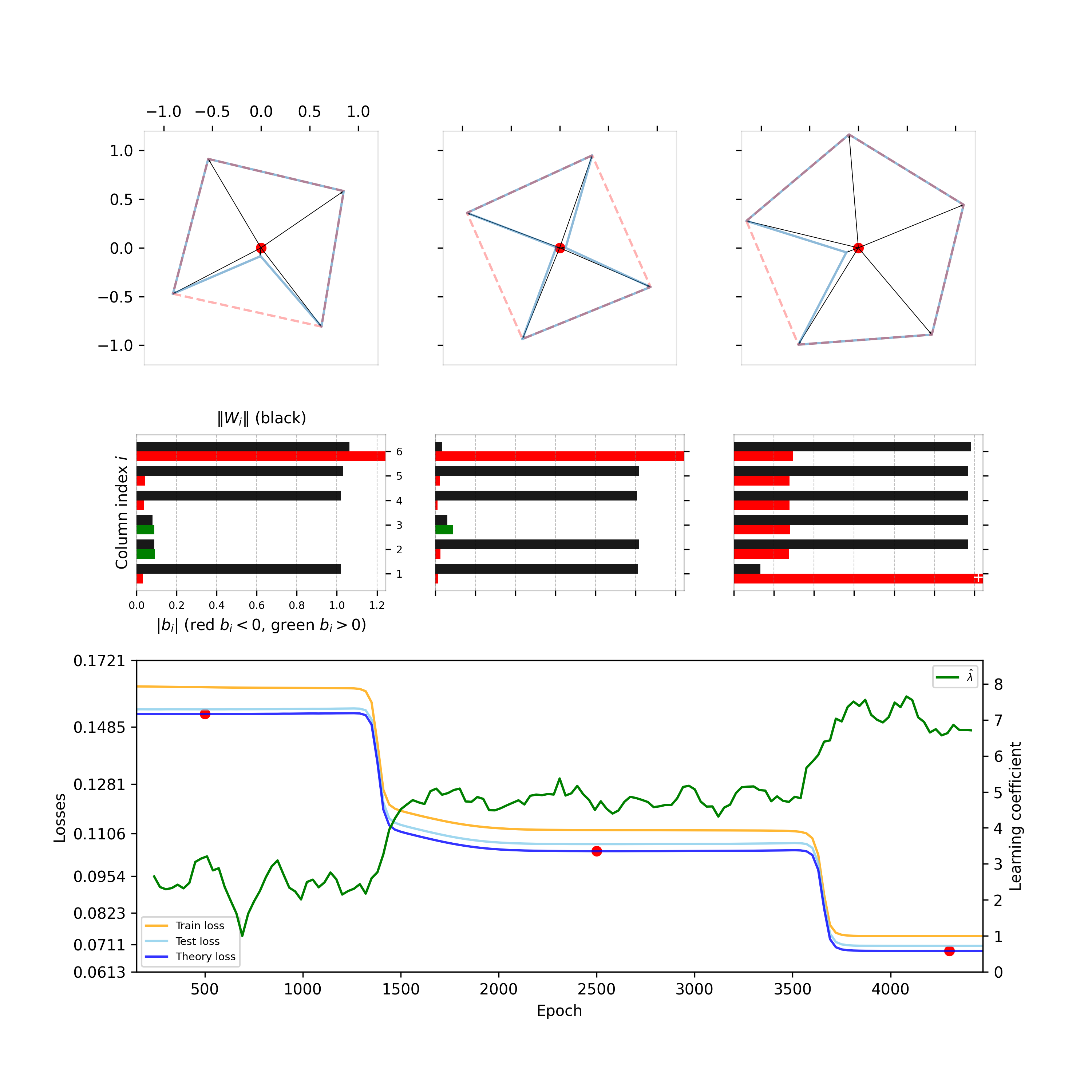}
    \caption{In TMS for $\nrows = 2$ hidden dimensions and $\ncols = 6$ feature dimensions, SGD seems to perform an internal form of Occam's Razor: at the beginning of training, high loss solutions are tolerated because they have low complexity (low local learning coefficient $\hat \lambda$) but at the end of training low loss solutions are attractive despite their high complexity (high local learning coefficient $\hat \lambda$). The top row shows a visualization of the columns $W_i$ of three snapshots (timestamps shown as red dots in the loss plot). For more examples and a guide to reading these plots, see Appendix \ref{appendix:SGD_examples}.}
    \label{fig:headline_sgd_nrows2_ncols6}
\end{figure}

It has been observed empirically in TMS that SGD training also undergoes ``phase transitions'' \citep{elhage2022superposition} in the sense that we often see steady plateaus in the training (and test) loss separated by sudden transitions, associated with geometric transformations in the configuration of the columns of the weight matrix. Figure \ref{fig:headline_sgd_nrows2_ncols6} shows a typical example. We refer to these as \emph{dynamical transitions}. A striking pattern emerges when we observe the evolution of the loss and the estimated local learning coefficient, $\hat \lambda$, over the course of training: we see ``opposing staircases" where each drop in the training and test loss is accompanied by a jump in the (estimated) local complexity measure. In essence, during the training process, as SGD reduces the loss, it exhibits an increasing tolerance for complex solutions. On these grounds we propose the \textbf{Bayesian antecedent hypothesis}, which says that these dynamical transtions have ``standing behind them'' a Bayesian phase transition. 

We begin in Section \ref{section:tms_potential} by recalling the TMS, and present a closed form for the population loss in the high sparsity limit. In our \textbf{first contribution}, we provide a partial classification of critical points of the population loss (Section \ref{section:tms_critical_pts}) and document the local learning coefficients of several of these critical points (Section \ref{section:local_learning_maintext}).
In our \textbf{second contribution}, we experimentally verify that the main phase transition predicted by the internal model selection theory, using the theoretically derived local learning coefficients, actually takes place (Section \ref{section:posterior_mcmc_exp}).
In Section \ref{section:pt_t} we present experimental results on dynamical transitions in TMS. Our \textbf{third contribution} is to show empirically that SGD training in TMS transitions from high-loss-low-complexity solutions to low-loss-high-complexity solutions, where complexity is measured by the estimated local learning coefficient. This provides support for our proposed relation between Bayesian and dynamical transitions (Section \ref{section:pt_n_t}).

\section{Related work}\label{section:related_work}

The TMS problem is, with the nonlinearity removed and varying importance factors, solved by computing principal components; it has long been understood that the learning dynamics of computing principal components is determined by a unique global minimum and a hierarchy of saddle points of decreasing loss \citep{baldi1989neural}, \cite[\S 13.1.3]{amari2016information}. In recent decades an extensive literature has emerged on \emph{Deep Linear Networks} (DLNs) building on these results, and applying them to explain phenomena in the development of both natural and artificial neural networks \citep{saxe2019mathematical}. Under some hypotheses the saddles of a DLN are strict \citep{kawaguchi2016deep} and all local minima are global; this suggests a picture of gradient descent dynamics moving through neighbourhoods of saddles of ever-decreasing index until reaching a global minima. This has been termed ``saddle-to-saddle'' dynamics by \citet{jacot2021saddle}. Through careful analysis of training dynamics it has been shown for DLNs that there is a general tendency of optimization trajectories towards solutions of lower loss and higher ``complexity'', which is generally defined in an ad-hoc way depending on the data distribution \citep{arora2018convergence,li2020towards,eftekhari2020training,advani2020high}. For example, it has been shown that gradient-based optimization introduces a form of implicit regularization towards low-rank solutions in deep matrix factorization \citep{arora2019implicit}.

Viewing the optimization process as a search for solutions which begins at candidates of low complexity, the tendency to gradually increase complexity ``only when necessary'' has been put forward as a potential explanation for the generalization performance of neural networks \citep{gissin2019implicit}. This intuition is backed by results such as \citep{gidel2019implicit,saxe2013exact}, which show that for DLNs the singular values of the model are learned separately at different rates, with features corresponding to larger singular values learned first. 

Outside of the DLN models, saddle-to-saddle dynamics of SGD training have been studied in toy non-linear models often referred to as single-index or multi-index models. In these models, the target function for input $x \in \mathbb R^d$ is generated by a non-linear, low dimensional function $\varphi: \mathbb R^k \to \mathbb R$ via $f(x) = \varphi(\theta^T x)$ with $\theta \in \mathbb R^{d \times k}$ where $k \ll d$. Single-index refers to $k=1$. In a very recent work, \citet{abbeSGDLearningNeural2023} showed that for a particular multi-index model with certain restrictions on the input data distribution, SGD follows a saddle-to-saddle dynamic where the learning process adaptively selects target functions of increasing complexity. Their Figure 1 tells the same story as our Figure \ref{fig:headline_sgd_nrows2_ncols6}: at the beginning of training, low complexity solutions are preferred, and the opposite preference develops as training progresses. 

One attempt to put these intuitions in a broader context is \citep{zhang2018energy} which relates the above phenomena to entropy-energy competition in statistical physics. However this approach suffers from a lack of theoretical justification due to an incorrect application of the Laplace approximation \citep{wei2022deep,quantifdegen}. The internal model selection principle (Section \ref{section:theory_phase_transitions}) in singular learning theory provides the correct form of entropy-energy competition for neural networks and potentially gives a theoretical backing for the intuitions developed in the DLN literature. 

\section{Toy Model of Superposition}\label{section:tms}

\subsection{The TMS Potential}\label{section:tms_potential}

We recall the Toy Model of Superposition (TMS) setup from \citep{elhage2022superposition} and derive a closed-form expression for the population loss in the high sparsity limit. The TMS is an autoencoder with input and output dimension $\ncols$ and hidden dimension $\nrows < \ncols$:
\begin{gather}
    f: X \times \mathcal{W} \longrightarrow \mathbb{R}^{\ncols}\,,\nonumber\\
    f(x, w) = \ReLU(W^T W x + b)\,,\label{eq:neural_network_defn}
\end{gather}
where $w = (W,b) \in \paramspace \subseteq M_{\nrows,\ncols}(\mathbb{R}) \times \mathbb{R}^{\ncols}$ and inputs are taken from $x \in X = [0,1]^{\ncols}$. We suppose that the (unknown) true generating mechanism of $x$ is given by the distribution
\begin{gather}
q(x) = \sum_{i=1}^\ncols \frac{1}{\ncols} \delta_{x \in C_i} \label{eq:true_dist}
\end{gather}
where $C_i$ denotes the $i$th coordinate axis intersected with $X$. Sampling from $q(x)$ can be described as follows: uniformly sample a coordinate $1 \le i \le \ncols$ and then uniformly sample a length $0 \le \mu \le 1$, return $\mu e_i$ where $e_i$ is the $i$th unit vector. This is the high sparsity limit of the TMS input distribution of \citet{elhage2022superposition}, see also \citet{henighan2023superposition}. We posit the probability model
\begin{gather}
p(x|w) \propto \exp\big(-\tfrac{1}{2}\Vert x - f(x,w) \Vert^2 \big),    \label{eq:likelihood_function}
\end{gather}
which leads to the expected negative log likelihood $- \int q(x) \log p(x|w) \,dx$. Dropping terms constant with respect to $w$ we arrive at the population loss function
$$
L(w) = \int q(x) \Vert x - f(x, w) \Vert^2 dx\,.
$$
Given $W \in M_{\nrows, \ncols}(\mathbb{R})$ we denote by $W_1,\ldots,W_\ncols$ the columns of $W$. We set
\begin{enumerate}
    \item $P_{i, j} = \{ (W, b) \in M_{\nrows, \ncols}(\mathbb{R}) \times \mathbb{R}^\ncols \, | \, W_i \cdot W_j > 0 \text{ and } -W_i \cdot W_j \leq b_i \leq 0 \}$;  
    \item $P_i = \{ (W, b) \in M_{\nrows, \ncols}(\mathbb{R}) \times \mathbb{R}^\ncols \, | \, \Vert W_i \Vert^2 > 0 \text{ and } -\Vert W_i \Vert^2 \leq b_i \leq 0 \}$;
    \item $Q_{i, j} = \{ (W, b) \in M_{\nrows, \ncols}(\mathbb{R}) \times \mathbb{R}^\ncols \, | \, -W_i \cdot W_j > b_i > 0 \}$
\end{enumerate}
For $w = (W,b)$ we set $\delta(P_{i,j})$ to be $1$ if $w \in P_{i,j}$ and $0$ otherwise, similarly for $\delta(P_i), \delta(Q_{i,j})$.

\begin{lemma}\label{lemma:L_vs_H} For $w = (W,b) \in M_{\nrows,\ncols}(\mathbb{R}) \times \mathbb{R}^\ncols$ we have $L(w) = \frac{1}{3\ncols} H(w)$ where
\begin{equation}
    H(W, b) = \sum_{i=1}^\ncols \delta(b_i \leq 0) H^{-}_i(W,b) + \delta(b_i > 0) H^{+}_i(W,b)
\end{equation}
and
\begin{align*}
    H^{-}_i(W,b) &= \sum_{j \ne i} \delta(P_{i, j}) \bigg [\frac{1}{W_i \cdot W_j}(W_i \cdot W_j + b_i)^3 \bigg ] \\
    & \quad + \delta(P_i) \bigg [ \frac{b_i^3}{\Vert W_i \Vert^4} + \frac{b_i^3}{\Vert W_i \Vert^2} \bigg ] + (1 - \delta(P_i)) + \delta(P_i) N_i\\
    H^{+}_i(W,b) &= \sum_{j \ne i} \delta(Q_{i, j}) \bigg [ -\frac{1}{W_i \cdot W_j} b_i^3 \bigg ]\\
    &\quad + \sum_{j \neq i} (1-\delta(Q_{i,j})) \bigg [ (W_i \cdot W_j)^2 + 3(W_i \cdot W_j)b_i + 3b_i^2 \bigg ] +  N_i
\end{align*}
where $N_i = (1 - \Vert W_i \Vert^2)^2 - 3(1 - \Vert W_i \Vert^2)b_i + 3b_i^2$
\end{lemma}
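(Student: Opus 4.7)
The plan is to compute $L(w)$ directly from the very simple form of $q(x)$. Writing out the expectation yields $L(w) = \tfrac{1}{\ncols}\sum_{i=1}^{\ncols} \int_0^1 \| \mu e_i - \ReLU(W^T W \mu e_i + b) \|^2 \, d\mu$, and since the $k$-th component of $W^T W \mu e_i + b$ is $\mu W_i \cdot W_k + b_k$, expanding the squared norm splits each summand into a \emph{diagonal} contribution $\int_0^1 (\mu - \ReLU(\mu \|W_i\|^2 + b_i))^2 \, d\mu$ coming from the $k = i$ output coordinate, and a family of \emph{off-diagonal} contributions $\int_0^1 \ReLU(\mu W_i \cdot W_k + b_k)^2 \, d\mu$ indexed by $k \neq i$.

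Next I would regroup the double sum over $(i,k)$ of off-diagonal integrals by swapping the dummy indices and using the symmetry $W_i \cdot W_k = W_k \cdot W_i$, so that every integral involving a given $b_i$ is collected into a single contribution $\phi_i(W,b)$. Explicitly, $\phi_i$ is the diagonal integral at $i$ plus $\sum_{j \neq i} \int_0^1 \ReLU(\mu W_i \cdot W_j + b_i)^2 \, d\mu$, and then $L(w) = \tfrac{1}{\ncols}\sum_i \phi_i(W,b)$. It therefore remains to verify that $3\phi_i = H^{-}_i$ when $b_i \leq 0$ and $3\phi_i = H^{+}_i$ when $b_i > 0$.

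The last step is to evaluate each of these one-dimensional integrals by case analysis on the sign of $b_i$ and on whether the affine map $\mu \mapsto \mu \alpha + b_i$ (with $\alpha = W_i \cdot W_j$ off-diagonally and $\alpha = \|W_i\|^2$ diagonally) changes sign inside $[0,1]$. The sets $P_{i,j}, P_i, Q_{i,j}$ are tailored precisely to flag the two behaviours of the ReLU: on $P_{i,j}$ the off-diagonal ReLU turns on partway through $[0,1]$, and the substitution $u = \mu W_i \cdot W_j + b_i$ yields the cubic $(W_i \cdot W_j + b_i)^3/(3\, W_i \cdot W_j)$; on $P_i$ the diagonal integral receives an analogous cubic contribution together with the polynomial $N_i/3$; outside $P_i$ (with $b_i \leq 0$) the ReLU vanishes identically on $[0,1]$ and the diagonal integral is simply $\int_0^1 \mu^2 \, d\mu = 1/3$, accounting for the free $(1 - \delta(P_i))$ term. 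For $b_i > 0$ the diagonal integrand has no kink and produces $N_i/3$ unconditionally, while off-diagonally the ReLU switches \emph{off} exactly on $Q_{i,j}$, yielding $-b_i^3/(3\, W_i \cdot W_j)$, and otherwise stays positive throughout, producing the quadratic expansion $[(W_i \cdot W_j)^2 + 3(W_i \cdot W_j) b_i + 3 b_i^2]/3$. Multiplying by 3 matches $H^{\pm}_i$ term by term. The main obstacle I expect is purely bookkeeping: carefully tracking the borderline configurations (such as $\|W_i\|^2 = 0$, $W_i \cdot W_j = 0$, or the ReLU threshold landing exactly at $0$ or $1$) so that every parameter lies in exactly one subcase and contributes exactly as claimed.
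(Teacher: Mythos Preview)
Your proposal is correct and follows essentially the same route as the paper: expand $L$ over the support of $q$, split into diagonal and off-diagonal one-dimensional integrals, and evaluate each by a case analysis on the sign of $b_i$ and on where the affine argument of the ReLU crosses zero, which is exactly what the sets $P_{i,j}$, $P_i$, $Q_{i,j}$ encode. The only minor difference is that you make the index-swap/regrouping step (collecting all terms with bias $b_i$ into a single $\phi_i$) explicit, whereas the paper leaves this relabelling implicit and simply matches the computed integrals to the claimed $H_i^{\pm}$ at the end.
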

\begin{proof}
    See Appendix \ref{section:high_sparsity_limit}.
\end{proof}

We refer to $H(w)$ as the \textbf{TMS potential}. While this function is analytic at many of the critical points of relevance when $r=2$, it is not analytic at the $4$-gons (see Appendix \ref{section:4gons}). 

\subsection{$k$-gon critical points}
\label{section:tms_critical_pts}

We prove that various $k$-gons are critical points for $H$ when $\nrows = 2$. Recall that $w^* \in \paramspace$ is a \emph{critical point} of $H$ if $\nabla H\vert_{w=w^*} = 0$. The function $H$ is clearly $O(\nrows)$-invariant: if $O$ is an orthogonal matrix then $H(OW, b) = H(W, b)$. The potential is also invariant to jointly permuting the columns and biases. Due to these \emph{generic symmetries} we may without loss of generality assume that the columns $W_i$ of $W$ are ordered anti-clockwise in $\mathbb{R}^2$ with zero columns coming last. 

For $i = 1, \dots, \ncols$, let $\theta_i \in [0, 2\pi)$ denote the angle between nonzero columns $W_i$ and $W_{i+1}$, where $\ncols+1$ is defined to be $1$. Let $l_i \in \mathbb{R}_{\geq 0}$ denote $\Vert W_i \Vert$. In this parametrization $W$ has coordinate $(l_1, \dots, l_\ncols, \theta_1, \dots, \theta_\ncols, b_1, \ldots, b_\ncols)$ with constraint $\theta_1 + \cdots + \theta_\ncols = 2\pi$. Since $O(2)$ has dimension $1$ any critical point of $H$ is automatically part of a $1$-parameter family. For convenience we refer to a critical point as non-degenerate (resp. minimally singular) if it has these properties \emph{modulo the generic symmetries}, that is, in the $\theta, l, b$ parametrization. Thus, a critical point is non-degenerate (resp. minimally singular) if in a local neighbourhood in the $\theta, l, b$ parametrization $H$ can be written as a full sum of squares with nonzero coefficients (resp. a non-full sum of squares). For background on the minimal singularity condition see \cite[\S 4]{wei2022deep} and \cite[Appendix A]{quantifdegen}.

We call $w \in M_{2, \ncols}(\mathbb{R}) \times \mathbb{R}^\ncols$ a \textbf{standard $k$-gon} for $k \in \{4,5,6,7,8\}$ and $k \leq \ncols$ if it has coordinate
\begin{align*}
    &l_1 = \cdots = l_k = l^*, &l_{k+1} = \cdots = l_\ncols = 0\,, \\
    &\theta_1 = \cdots = \theta_{k-1} = \tfrac{2\pi}{k}, &\theta_{k} + \cdots + \theta_\ncols = \tfrac{2\pi}{k}\,,\\
    &b_1 = \cdots = b_k = b^*\,, &b_{k+1} < 0\,, \ldots, b_c < 0
\end{align*}
where $l^* \in \mathbb{R}_{> 0},b^* \in \mathbb{R}_{\le 0}$ are the unique joint solution to $-(l^*)^2 \cos(s \frac{2\pi}{k}) \le b^*$ where $s$ is the unique integer in $[ \tfrac{k}{4} - 1, \tfrac{k}{4} )$ (see Theorem \ref{theorem:classify_gons}). For values of $l^*, b^*$ see Table \ref{tab:kgonparam}. Any parameter of this form is proven to be a critical point of $H$ in Appendix \ref{section:theory_local_learning}. For $k$ as above and $0 \le \sigma \le c - k$ a \textbf{$k^{\sigma +}$-gon} is a parameter with the same specification as the standard $k$-gon except that $\sigma$ of the biases $b_{k+1},\ldots,b_c$ are equal to $1/(2c)$ and the rest have arbitrary negative values. We usually write for example $k^{++}$ when $\sigma = 2$, noting that the $k^{0+}$-gon is the standard $k$-gon. These parameters are proven to be critical points of $H$ when $k \ge 5$ in Appendix \ref{section:positive_bias} and for $k = 4$ in Appendix \ref{section:n_greater_4}.

For $k = 4$ there are a number of additional ``exotic'' $4$-gons. They are parametrized by $0 \le \sigma \le c - k$ and $0 \le \phi \le 4$. A \textbf{$4^{\sigma+, \phi-}$-gon} has the same specification as the $4^{\sigma +}$-gon, except that a subset of the biases $I \subseteq \{1,2,3,4\}$ of size $|I| = \phi$ are special in the following sense: for any $i \notin I$ the bias $b_i$ has the optimal value $b_i = b^* = 0$ and the corresponding length is standard $l_i = l^* = 1$, but if $i \in I$ then $b_i < 0$ and $l_i$ is subject only to the constraint $l_i^2 < -b_i$. We write for example $4^{++-}$ for the $4^{2+, 1-}$-gon. These are proven to be critical points of $H$ in Appendix \ref{section:n_greater_4}. In Appendix \ref{section:fantastic_beasts}, we provide visualizations and a quick guide for recognizing these critical points and their variants. 

What we know is the following: the standard $k$-gon for $k = \ncols$ is a non-degenerate critical point (modulo the generic symmetries) for $\ncols \in \{5, 6, 7, 8\}$ in the sense that in local coordianates in the $l, \theta, b$-parametrization near the critical point $H$ can be written as a full sum of squares (Section \ref{section:regulargon_kequalsl}). For $\ncols > 8$ and $\ncols$ being a multiple of $4$, we conjecture that the $\ncols$-gon is a critical point (Section \ref{section:8gons}), and we also conjecture that for $\ncols > 8$ and $\ncols$ not a multiple of $4$ there is no $\ncols$-gon which is a critical point. When $k \in \{5,6,7,8\}$ and $k < \ncols$ the standard $k$-gon is minimally singular (Appendix \ref{section:regularkgon_klessl}).


\subsection{Local learning coefficients}\label{section:local_learning_maintext}

The local learning coefficient was proposed in \citet{quantifdegen} as a general measure to quantify the degeneracy of a critical point in singular models. Table \ref{tab:m2ncols6} summarises theoretical local learning coefficients $\lambda$ and losses $L$ for some critical points\footnote{The $4$-gons are on the boundary of multiple chambers (see Appendix \ref{section:4gons}).}. For more theoretical values see Table \ref{tab:appendix_kgoncoeff} and Appendix \ref{section:theory_local_learning}, and for empirical estimates Appendix \ref{appendix:lambdahat_details}. In minimally singular cases (including $5, 5^+, 6$) the local learning coefficient agrees with a simple dimension count (half the number of normal directions to the level set, which is locally a manifold). This explains why the coefficient increases by $\tfrac{3}{2}$ as we move from the $5$-gon to the $6$-gon: this transition fixes one column of $W$ ($2$ parameters) and the corresponding entry in the bias $b$, and so reduces by $3$ the number of free parameters, increasing the learning coefficient by $\tfrac{3}{2}$ (for further discussion see Appendix \ref{section:intuition_complexity}).

\begin{table}[h!]
    \centering
    \begin{tabular}{||c | c | c ||} 
    \hline
    Critical points & Local $\RLCT$ $\lambda$ & Loss $L$ \\ 
    \hline
    $5$ &  $7$ & $0.06874$ \\
    \hline
    $5^+$ &  $8.5$ & $0.06180$ \\
    \hline
    $6$ &  $8.5$ & $0.04819$ \\
    \hline
    \end{tabular}
    \caption{Critical points and their theoretical $\lambda$ and $L$ values for the $\nrows=2, \ncols=6$ TMS potential.}
    \label{tab:m2ncols6}
\end{table}

\section{Bayesian Phase Transitions}\label{section:pt_n}

In Bayesian statistics there is a fundamental distinction between the learning process for regular models and singular models. In regular models, as the number of samples $n$ increases, the posterior concentrates at the MAP estimator and looks increasingly Gaussian. In singular models, which include neural networks, we expect rather that the learning process is dominated by \emph{phase transitions}, where at some critical values $n \approx n_{cr}$ the posterior ``jumps'' from one region of parameter space to another.\footnote{Another important class of phase transitions, where the posterior jumps as a hyperparameter in the prior or true distribution is varied, will not be discussed here; see \citep[\S 9.4]{watanabe2018}, \citep{liamthesis}.} This is a universal phenomena in singular learning theory \citep{watanabeAlgebraicGeometryStatistical2009,watanabephasetalk}.

\subsection{Internal Model Selection
}\label{section:theory_phase_transitions}

We present phase transitions of the Bayesian posterior in SLT building on \citep[\S 7.6]{watanabeAlgebraicGeometryStatistical2009}, \citep[\S 9.4]{watanabe2018}, \citet{watanabephasetalk}. We assume $(p(x|w),q(x),\varphi(w))$ is a model-truth-prior triplet with parameter space $\paramspace \subseteq \mathbb{R}^d$ satisfying the fundamental conditions of \citep{watanabeAlgebraicGeometryStatistical2009} and the relative finite variance condition \citep{watanabe2018}. Given a dataset $\mathcal D_n = \{x_1,\ldots,x_n\}$, we define the empirical negative log likelihood function $L_n(w) = -\frac{1}{n} \sum_{i=1}^n \log p(x_i|w)$. The posterior distribution $p(w|\mathcal D_n)$ is, up to a normalizing constant, given by $\exp(-nL_n(w)) \varphi(w).$ The marginal likelihood is the intractable normalizing constant of the posterior distribution. The free energy $F_n$ is defined to be the negative log of the marginal likelihood:
\begin{equation}
F_n = -\log \int_\paramspace \exp(-nL_n(w)) \varphi(w) dw\,.
\end{equation}
The asymptotic expansion in $n$ is \citep[\S 6.3]{watanabe2018} given by
\begin{equation}\label{eq:free_energy_formula}
F_n = nL_n(w_0) + \lambda \log n - (m-1)\log \log n + O_p(1)
\end{equation}
where $w_0$ is an optimal parameter, $\lambda$ is the learning coefficient and $m$ is the multiplicity. We refer to this as the \textbf{free energy formula}.

The philosophy behind using the marginal likelihood (or equivalently, the free energy) to perform model selection is well established. Thus we could use the first two terms in (\ref{eq:free_energy_formula}) to choose between two competing models on the basis of their fit (as measured by $nL_n$) and their complexity (as measured by $\lambda$). We can also apply the same principle to different regions of the parameter space in the same model.
Let $\{\paramspace_\alpha\}_\alpha$ be a finite collection of compact semi-analytic subsets of $\paramspace$ with nonempty interior, whose interiors cover $\paramspace$. We assume each $\paramspace_\alpha$ contains in its interior a point $w_\alpha^*$ minimising $L$ on $\paramspace_\alpha$ and that the triple $(p,q,\varphi)$ restricted to $\paramspace_\alpha$ in the obvious sense has relative finite variance. We refer to the $\alpha$ rather loosely as \emph{phases}. We can choose a partition of unity $\rho_\alpha$ subordinate to a suitably chosen cover, so as to define $\varphi_\alpha(w) = \rho_\alpha(w) \varphi(w)$ with
\begin{align*}
F_n &= -\log \int_\paramspace e^{-nL_n(w)} \varphi(w) dw = -\log \sum_{\alpha} \int_{\paramspace_\alpha} e^{-nL_n(w)} \varphi_\alpha(w) dw\\
&= -\log \sum_{\alpha} V_\alpha \int_{\paramspace_\alpha} e^{-nL_n(w)} \overline{\varphi}_\alpha(w) dw = -\log \sum_{\alpha} e^{-F_n(\paramspace_\alpha) - v_\alpha}
\end{align*}
where $\overline{\varphi}_\alpha = \tfrac{1}{V_\alpha} \varphi_\alpha$ for $V_\alpha = \int_{\paramspace_\alpha} \varphi_\alpha dw$, $v_\alpha = - \log(V_\alpha)$ and
\begin{equation}
F_n(\paramspace_\alpha) = - \log \int_{\paramspace_\alpha} e^{-nL_n(w)} \overline{\varphi}_\alpha(w) dw
\end{equation}
denotes the free energy of the restricted tuple $(p,q,\overline{\varphi}_\alpha,\paramspace_\alpha)$. We will refer to $F_n(\paramspace_\alpha)$ as the \textbf{local free energy}. Using the log-sum-exp approximation, we can write $F_n = -\log \sum_{\alpha} e^{-F_n(\paramspace_\alpha) - v_\alpha}\approx \min_\alpha \big[ F_n(\paramspace_\alpha) + v_\alpha \big]$. Since (\ref{eq:free_energy_formula}) applies to the restricted tuple $(p,q,\overline{\varphi}_\alpha,\paramspace_\alpha)$ we have
\begin{equation}\label{eq:free_energy_formula_alpha}
F_n(\paramspace_\alpha) = nL_n(w_\alpha^*) + \lambda_\alpha \log n - (m_\alpha - 1) \log \log n + O_p(1)
\end{equation}
which we refer to as the \textbf{local free energy formula}.\footnote{In general deriving the free energy formula requires some sophisticated mathematics \citep{watanabeAlgebraicGeometryStatistical2009,watanabe2018} but when the critical point $w_\alpha^*$ dominating the phase $\paramspace_\alpha$ is minimally singular, simpler techniques similar to \citep{balasubramanian1997statistical} suffice; see \cite[Appendix A]{quantifdegen}. Many, but not all, of the singularities appearing in this paper are minimally singular.} 

In this paper we absorb the volume constant $v_\alpha$ and terms of order $\log \log n$ or lower in (\ref{eq:free_energy_formula_alpha}) into a term $c_\alpha$ that we treat as effectively constant, giving
\begin{equation}
F_n \approx \min_\alpha \big[ n L_n(w_\alpha^*) + \lambda_\alpha \log n + c_\alpha \big]\,.
 \label{eq:model_selection_int}    
\end{equation}
A principle of \emph{internal model selection} is suggested by (\ref{eq:model_selection_int}) whereby the Bayesian posterior ``selects'' a phase $\alpha$ based on the local free energy of the phase, in the sense that this phase contains most of the probability mass of the posterior for this value of $n$ \citep[\S 7.6]{watanabeAlgebraicGeometryStatistical2009}.\footnote{We often replace $L_n(w_\alpha^*)$ by $L(w_\alpha^*)$ in comparing phases; see \citep[\S 9.4]{watanabe2018}.} At a given value of $n$ we can order the phases by their posterior concentration, or what is the same, their free energies $F_n(\paramspace_\alpha)$. We say there is a \emph{local phase transition} between phases $\alpha, \beta$ at \emph{critical sample size} $n_{cr}$, written $\alpha \rightarrow \beta$, if the position of $\alpha, \beta$ in this ordered list of phases swaps. That is, for $n \approx n_{cr}$ and $n < n_{cr}$ the Bayesian posterior prefers $\alpha$ to $\beta$, and the reverse is true for $n > n_{cr}$. We say that a phase $\alpha$ \emph{dominates the posterior} at $n$ if it has the highest posterior mass, that is, $F_n(\paramspace_\alpha) < F_n(\paramspace_\beta)$ for all $\beta \neq \alpha$. A \emph{global} phase transition is a local phase transition where $\alpha$ dominates the posterior for $n < n_{cr}$ and $\beta$ dominates for $n > n_{cr}$ with $n$ near $n_{cr}$. Generally when we speak of a phase transition in this paper we mean a \emph{local} transition.

Generically, phase transitions occur when, as $n$ increases, phases with lower loss and higher complexity are preferred; this expectation is verified in TMS in the next section. For more on the theory of Bayesian phase transitions see Appendix \ref{section:using_fef}.

\begin{figure}[ht]
        \centering
        \begin{minipage}{0.5\textwidth} 
            \centering
            \includegraphics[width=0.85\linewidth]{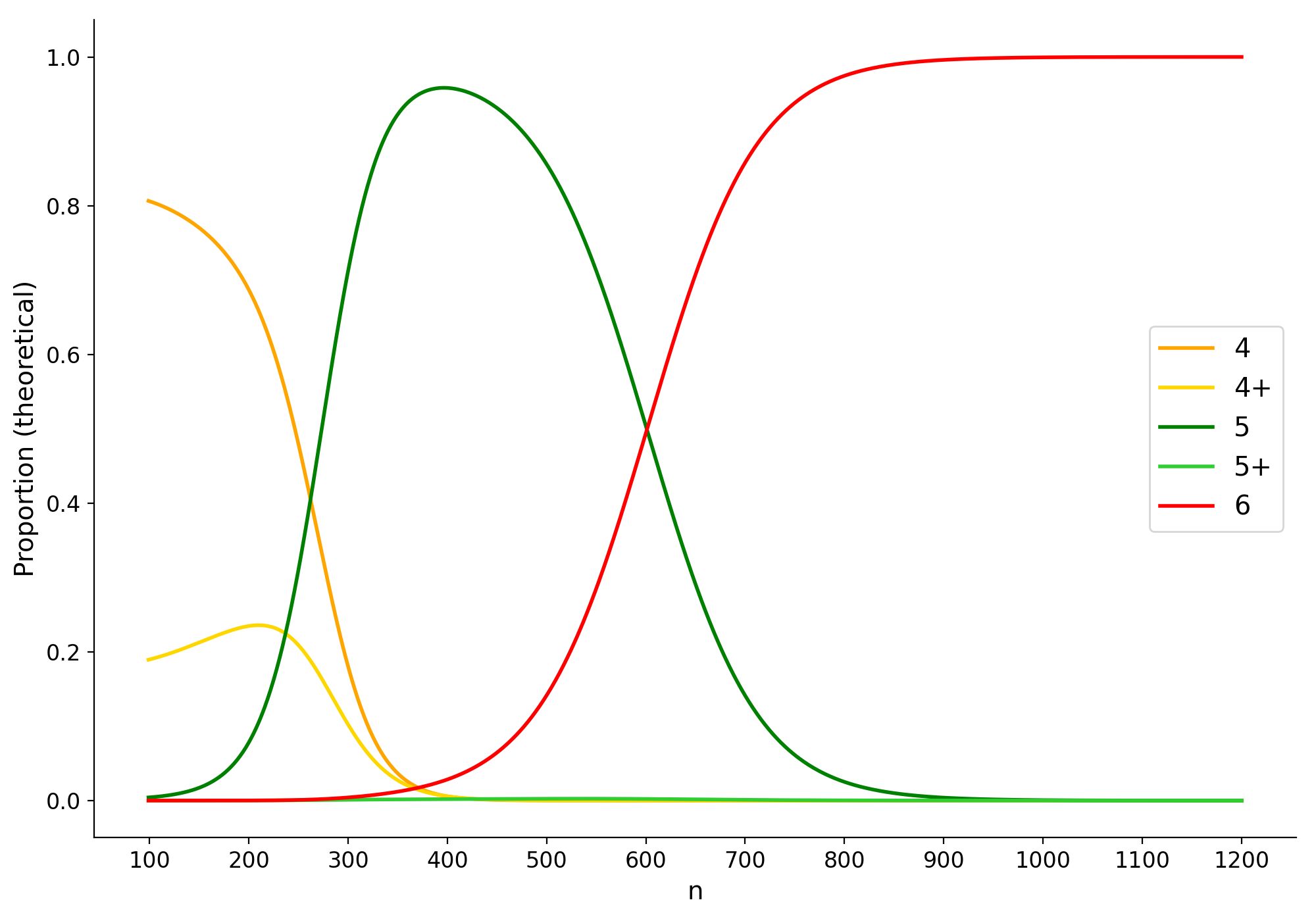}
        \end{minipage}\hfill
        \begin{minipage}{0.5\textwidth} 
            \centering
            \includegraphics[width=0.95\linewidth]{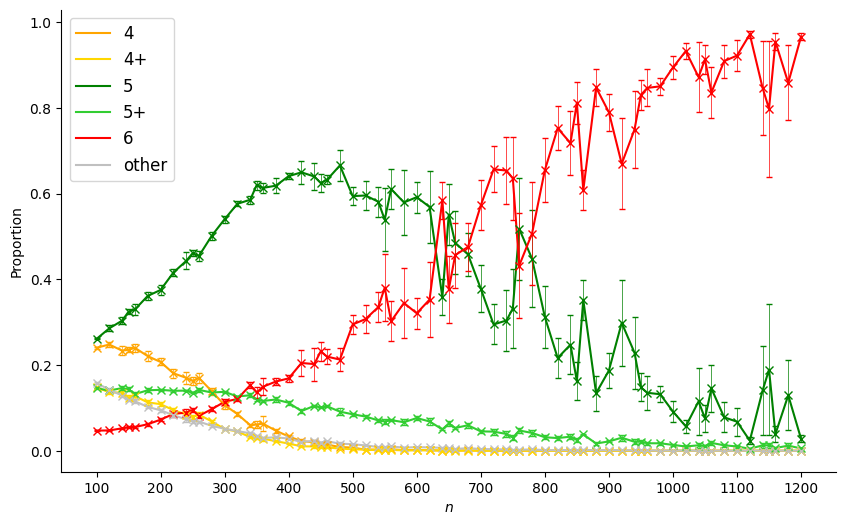}
        \end{minipage}
    
         \caption{Proportion of Bayesian posterior concentrated in regions $\paramspace_{k,\sigma}$ for $\nrows=2,\ncols=6$ according to the free energy formula (theory, left) and MCMC sampling of the posterior (experimental, right). Theory predicts, and experiments show, a phase transition $5 \rightarrow 6$ in the range $600 \le n \le 700$. }
         \label{fig:kgons_m2ncols6}
\end{figure}

\subsection{Experiments}\label{section:posterior_mcmc_exp}

There is a fundamental tension in the internal model selection story elaborated above: the free energy formula is asymptotic in $n$, but a theoretical discussion of phase transitions involves comparing local free energies $F_n(\paramspace_\alpha)$ at finite $n$. Whether or not this is valid, in a given range of $n$ and for a given system, is a question that may be difficult to resolve purely theoretically. We show experimentally for $\nrows=2,\ncols=6$ that a Bayesian phase transition actually takes place between the $5$-gon and the $6$-gon, within a range of $n$ values consistent with the free energy formula. 

In this section we focus on the case $\nrows = 2, \ncols = 6$. For $c \in \{ 4, 5 \}$ see Appendix \ref{section:c4c5}. We first define regions of parameter space $\{ \paramspace_\alpha \}_\alpha$. Given a matrix $W$ we write $\ConvHull(W)$ for the number of points in the convex hull of the set of columns. For $3 \le k \le c, 0 \le \sigma \le c - k$ we define
\[
\paramspace_{k,\sigma} = \big\{ w = (W,b) \in \paramspace \,|\, \ConvHull(W) = k \text{ and } b \text{ has } \sigma \text{ positive entries } \big\}\,.
\]
The set $\paramspace_{k,\sigma} \subseteq \mathbb{R}^d$ is semi-analytic and contains the $k^{\sigma +}$-gon in its interior. For $\alpha = (k,\sigma)$ we let $w_\alpha^*$ denote the parameter of the $k^{\sigma +}$-gon. We verify experimentally the hypothesis that this parameter dominates the Bayesian posterior of $\paramspace_\alpha$ (see Appendix \ref{appendix:mcmc_experiment}) by which we mean that most samples from the posterior for a relevant range of $n$ values are ``close'' to $w_\alpha^*$.\footnote{The $k^{\sigma +, \phi -}$-gons for $\phi > 0$ have high loss but may nonetheless dominate the posterior for very low $n$, however this is outside the scope of our experiments, which ultimately dictates the choice of the set $\paramspace_{k,\sigma}$.} In this sense the choice of phases $\paramspace_\alpha$ is appropriate for the range of sample sizes we consider.

We draw posterior samples using MCMC-NUTS \citep{Homan2014-jy} with prior distribution $\operatorname{N}(0,1)$ and sample sizes $n$. Each posterior sample is then classified into some $\paramspace_{k,\sigma}$ (our classification algorithm for the deciding the appropriate value of $k$ is not error-free.) For each $n$, $10$ datasets are generated and the average proportion of the $k$-gons, and standard error, is reported in Figure \ref{fig:kgons_m2ncols6}. Details of the theoretical proportion plot are given in Appendix \ref{section:theory_prop}.

Let $w^*_\alpha, w_\beta^*$ be $k$-gons dominating phases $\paramspace_\alpha, \paramspace_\beta$. A \emph{Bayesian phase transition} $\alpha \rightarrow \beta$ occurs when the difference between the free energies $F_n(\paramspace_\beta) - F_n(\paramspace_\alpha)$ swaps sign, from positive to negative, as explained in the previous section. 

The most distinctive feature in the experimental plot is the $5 \rightarrow 6$ transition in the range $600 \le n \le 700$. The free energy formula predicts this transition at $n_{cr} \approx 600$ (Appendix \ref{section:5to6theory}). An alternative visualization of the $5 \rightarrow 6$ transition using t-SNE is given in Appendix \ref{section:verifying_dominant}. As $n$ decreases past $400$ the MCMC classification becomes increasingly uncertain, and it is less clear that we should expect the free energy formula to be a good model of the Bayesian posterior, so we should not read too much into any correspondence between the plots for $n \le 400$ (see Appendix \ref{section:verifying_dominant}).

\begin{figure}[ht]
         \centering
         \includegraphics[width=0.45\textwidth]{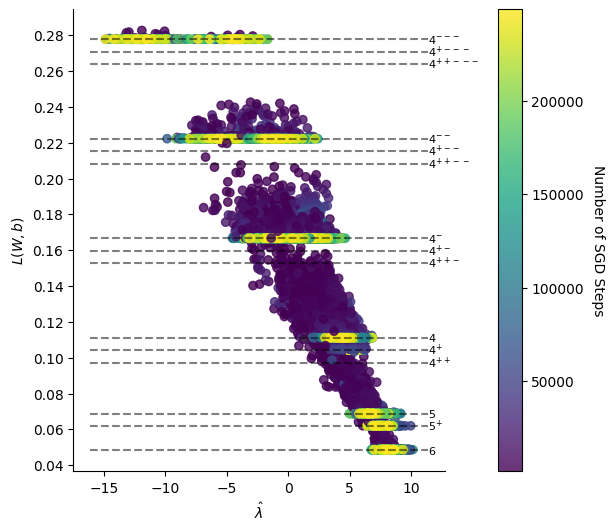}
         \includegraphics[width=0.45\textwidth]{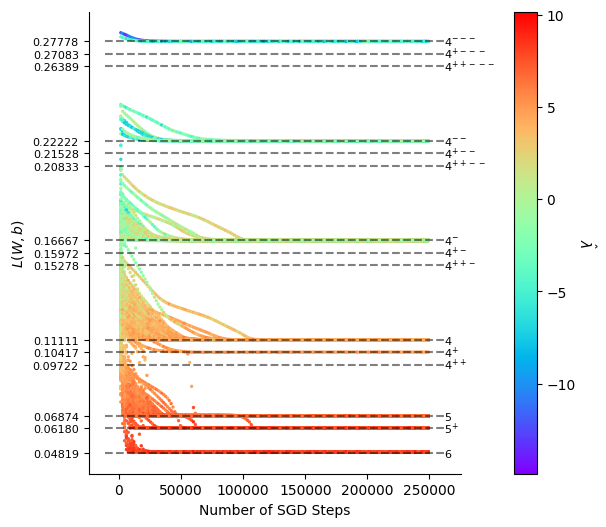}
         \caption{Visualization of $400$ SGD trajectories initialized at MCMC samples from the Bayesian posterior for $\nrows=2,\ncols=6$ at sample size $n = 100$. We see that SGD trajectories are dominated by plateaus at loss values corresponding to our classification of critical points (Appendix \ref{section:fantastic_beasts}) and that lower loss critical points have higher estimated local learning coefficients. Note that for highly singular critical points we see that $\hat\lambda$ is unable to provide non-negative values without additional hyperparameter tuning, but the ordinality (more positive is less degenerate) is nonetheless correct. See Appendix \ref{appendix:lambdahat_details} for details and caveats for the $\hat{\lambda}$ estimation.}
         \label{fig:staircase_plot}
\end{figure}

\section{Dynamical Phase Transitions}\label{section:pt_t}

A \emph{dynamical} transition $\alpha \rightarrow \beta$ occurs in a trajectory if it is near a critical point $w_\alpha^*$ of the loss at some time $\tau_1$ (e.g. there is a visible plateau in the loss) and at some later time $\tau_2 > \tau_1$ it is near $w_\beta^*$ without encountering an intermediate critical point. We conduct an empirical investigation into whether the $k$-gon critical points of the TMS potential dominate the behaviour of SGD trajectories for $r = 2, c = 6$, and the existence of dynamical transitions. 

There are two sets of experiments. In the first we draw a training dataset $\mathcal D_n = \{x_1,\ldots,x_n\}$ where $n=1000$ from the true distribution $q(x)$. We also draw a test set of size $5000$. We use minibatch-SGD initialized at a $4$-gon plus Gaussian noise of standard deviation $0.01$, and run for $4500$ epochs with batch size $20$ and learning rate $0.005$. This initialisation is chosen to encourage the SGD trajectory to pass through critical points with high loss after a small number of SGD steps, allowing us to observe phase transitions more easily. Along the trajectory, we keep track of each iterate's training loss, test set loss and theoretical test loss. Figure \ref{fig:headline_sgd_nrows2_ncols6} is a typical example, additional plots are collected in Figures \ref{fig:figure_tms_test_20230921_1}-\ref{fig:figure_tms_test_20230921_28}. In the second set of experiments, summarized in Figure \ref{fig:staircase_plot}, we take the same size training dataset but initialize SGD trajectories differently, at random MCMC samples from the Bayesian posterior at $n = 100$ (a small value of $n$). The number of epochs is $5000$.

In both cases we estimate the local learning coefficient of the training iterates $w_t$. This is a newly-developed estimator \citet{quantifdegen} that uses SGLD \citep{wellingBayesianLearningStochastic2011} to estimate a version of the WBIC \citep{watanabeWidelyApplicableBayesian2013} localized to $w_t$ and then forms an estimate of the local learning coefficient $\hat \lambda(w_t)$ based on the approximation that $\operatorname{WBIC}(w_t) \approx nL_n(w_t) + \lambda(w_t) \log n$ where $\lambda(w_t)$ is the local RLCT. In the language of \citet{quantifdegen}, we use a full-batch version of SGLD with hyperparameters $\epsilon = 0.001, \gamma = 1$ and $500$ steps to estimate the local WBIC. 

These experiments support the following description of SGD training for the TMS potential when $\nrows = 2, \ncols = 6$: trajectories are characterised by plateaus associated to the critical points described in Section \ref{section:tms_critical_pts} and further discussed in Appendix \ref{section:fantastic_beasts}. The dynamical transitions encountered are 
\begin{align}
4^{++---} \longrightarrow 4^{+--} \longrightarrow 4^{-}\,, &\qquad 4^{+---} \longrightarrow 4^{--}\,,\nonumber\\
4^{++--} \longrightarrow 4^{+-} \longrightarrow 4\,, & \qquad 4^{++-} \longrightarrow 4^{+} \longrightarrow 5 \longrightarrow 5^{+}\,. \label{eq:all_transitions_list}
\end{align}
The dominance of the classified critical points, and the general relationship of decreasing loss and increasing complexity, can be seen in Figure \ref{fig:staircase_plot}.

\subsection{Relation Between Bayesian and Dynamical Transitions}\label{section:pt_n_t}

Phases of the Bayesian posterior for TMS with $\nrows = 2, \ncols = 6$ are dominated by $k$-gons which are critical points of the TMS potential (Section \ref{section:pt_n}). The same critical points explain plateaus of the SGD training curves (Section \ref{section:pt_t}). This is not a coincidence: on the one hand SLT predicts that phases of the Bayesian posterior will be associated to singularities of the KL divergence, and on the other hand it is a general principle of nonlinear dynamics that singularities of a potential dictate the global behaviour of solution trajectories \citep{strogatz2018nonlinear,gilmore}.

However, the relation between \emph{transitions} of the Bayesian posterior and \emph{transitions} over SGD training is more subtle. There is no \emph{necessary} relation between these two kinds of transitions. A Bayesian transition $\alpha \rightarrow \beta$ might not have an associated dynamical transition if, for example, the regions $\paramspace_\alpha, \paramspace_\beta$ are distant or separated by high energy barriers. For example, the Bayesian phase transition $5 \rightarrow 6$ has not been observed as a dynamical transition (it may occur, just with low probability per SGD step). However, it seems reasonable to expect that for many dynamical transitions there exists a Bayesian transition between the same phases. We call this the \emph{Bayesian antecedent} of the dynamical transition if it exists. This leads us to:

\textbf{Bayesian Antecedent Hypothesis (BAH).} The dynamical transitions $\alpha \rightarrow \beta$ encountered in neural network training have Bayesian antecedents.

Since a dynamical transition decreases the loss, the main obstruction to having a Bayesian antecedent is that in a Bayesian phase transition $\alpha \rightarrow \beta$ the local learning coefficient should increase (Appendix \ref{section:bayesian_antecedents}). Thus the BAH is in a similar conceptual vein to the expectation, discussed in Section \ref{section:related_work}, that SGD prefers higher complexity critical points as training progresses. While the dynamical transitions in (\ref{eq:all_transitions_list}) are all associated with increases in our estimate of the local learning coefficient, we also know that at low $n$, the constant terms can play a nontrivial role in the free energy formula. Our analysis (Appendix \ref{section:bah_for_c6}) suggests that all dynamical transitions in (\ref{eq:all_transitions_list}) have Bayesian antecedents, with the possible exception of $4^{++---} \rightarrow 4^{+--}$ and $4^{+---} \rightarrow 4^{--}$ where the analysis is inconclusive.

\section{Conclusion}

Phase transitions and emergent structure are among the most interesting phenomena in modern deep learning \citep{wei2022emergent,barak2022hidden,liu2022towards} and provide an interesting avenue for fundamental progress in neural network interpretability \citep{olsson2022context,nanda2023progress} and AI safety \citep{devinterp}. Building on \citet{elhage2022superposition} we have shown that the Toy Model of Superposition with two hidden dimensions has, in the high sparsity limit, phase transitions in both stochastic gradient-based and Bayesian learning. We have shown that phases are in both cases dominated by $k$-gon critical points which we have classified, and we have proposed with the BAH a relation between transitions in SGD training and phase transitions in the Bayesian posterior. 

Our analysis of TMS also demonstrates the practical utility of the local complexity measure $\hat\lambda$ introduced in \citep{quantifdegen}, which is an all-purpose tool for measuring model complexity. In this paper we have shown that this tool reveals in TMS an interesting sequential learning mechanism underlying SGD training, consistent with observations derived in other settings including DLNs \citep{arora2019implicit,gissin2019implicit} and multi-index models \citep{abbeSGDLearningNeural2023}. However we emphasise that $\hat\lambda$ has not been specifically engineered for studying complexity in TMS. In principle it can be used to study the development of internal structure over training in \emph{any} neural network, from toy models like the ones considered here, through to large language models. 

\section*{Acknowledgement}
SW is the recipient of an Australian Research Council Discovery Early Career Researcher Award (project number DE200101253) funded by the Australian Government. SW is also partially funded by an unrestricted gift from Google. We would like to thank Matthew Farrugia-Roberts, Jesse Hoogland and Liam Carroll for discussions and valuable feedback on the manuscript.

\bibliography{references}
\bibliographystyle{iclr2024_conference}

\appendix
\counterwithin{figure}{section}
\numberwithin{table}{section}

%
%
\section{Fantastic Critical Points and Where to Find Them}\label{section:fantastic_beasts}

In this section we provide a brief guide, using loose intuitive language, to recognise known critical parameters and their variations. Rigorous derivation and details are given in Appendix \ref{section:theory_local_learning}, \ref{section:positive_bias}, and \ref{section:4gons}. 

Broadly speaking, known critical parameters, $(W^*, b^*)$, are classified by three discrete numbers: 
\begin{itemize}
    \item \textbf{$k$}: the number of vertices in the \emph{regular} polygon formed by the convex hull of the columns of the $W^*$ matrix, interpreted as a vector in $\mathbb{R}^2$.  The length of these vectors (for any $k \leq \ncols$) have to be at the optimal values derived in Appendix \ref{section:theory_local_learning} and listed in Table \ref{tab:kgonparam}. 
    
    \item \textbf{$\sigma$}: the number of positive values in the bias vector. These positive biases are required to take on the optimal value at $b^* = 1 / (2 \ncols)$ and have to occur at indices that \emph{do not} correspond to the $k$-gon vertices. 

    \item \textbf{$\phi$}: the number of large negative values in the bias vectors. So far, we've only observed $\phi > 0$ when $k = 4$, i.e. this discrete subcategory only applies to 4-gons. These biases have to occur at indices that \emph{do} correspond to the $4$-gon vertices. 
\end{itemize}
For $\nrows = 2, \ncols = 6$, the above description and constraints result in the $18$ families of critical points whose representative members are shown in Figure \ref{fig:allthekgons} and their loss or potential energy levels are shown in Figure \ref{fig:energylevels}. 

Next, we discuss possible variations within these families of critical points. Aside from the ever-present rotational and permutation symmetries discussed elsewhere, there are variations of these standard descriptions that allow the parameter to stay on the same critical submanifold. Figure \ref{fig:irregulargon} shows some examples of irregular versions of known critical points. One can cross-check that their potential values $L$ are the same as their regular counterpart. Most of these variation is the result of having negative values in the bias vectors allowing for extra variability without changing the loss value. To explain the examples in Figure \ref{fig:irregulargon}, 
\begin{itemize}
    \item $5$-gon (top left). In the standard $5$-gon family, the vestigial bias $b'$ can have arbitrary negative value and corresponding weight column can be any vector so long as its length $l'$ is smaller than $\sqrt{\min \{|b'|, |b^*|\}}$ where $b^*$ is the optimal negative bias for the main columns (see Table \ref{tab:kgonparam}). 
    \item $4$-gon (top right). The two vestigial biases can take arbitrary negative values. 
    \item $4^{+--}$-gon (bottom left). Having two negative biases with large magnitude afford a few other degrees of freedom. The weight columns $W_3, W_4$ with those large negative biases can be any vector as long as (1) they lengths is smaller than $\sqrt{|b_i|}$ for their respective biases and (2) they form obtuse angle relative other columns $W_1$ and $W_2$, i.e the other two vertices of the 4-gon. 
    \item $4^{--}$-gon (bottom right). Other than the variation in the main columns $W_3, W_6$ with large negative biases, the two vestigial columns can also be any vectors as long as they stay within the sector between $W_1$ and $W_2$ and their lengths are bounded by $\min \big\{\sqrt{|b_i|} \, | \, i = 3, 4, 5, 6 \big\}$.
\end{itemize}

\begin{table}[h!]
    \centering
    \begin{tabular}{||c | c | c ||} 
    \hline
    Critical point & $l^*$ & $b^*$\\ 
    \hline
    $4$-gon & 1 & 0\\
    \hline
    $5$-gon & $1.17046$ & $-0.28230$\\
    \hline
    $6$-gon & $1.32053$ & $-0.61814$\\
    \hline
    \end{tabular}
    \caption{Parameters of certain $k$-gons.}
    \label{tab:kgonparam}
\end{table}

\begin{figure}
    \centering
    \includegraphics[width=0.32\linewidth]{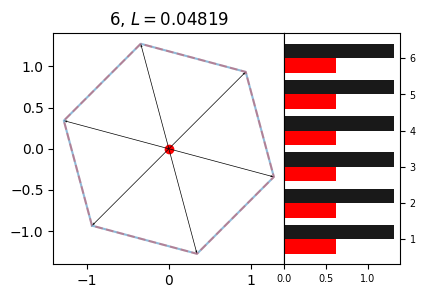}\includegraphics[width=0.32\linewidth]{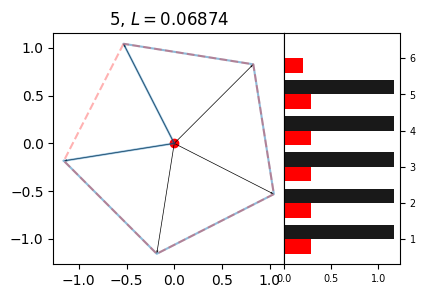}\includegraphics[width=0.32\linewidth]{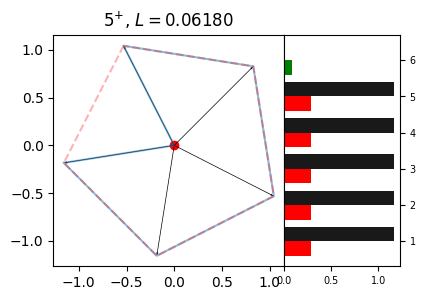}\\\includegraphics[width=0.32\linewidth]{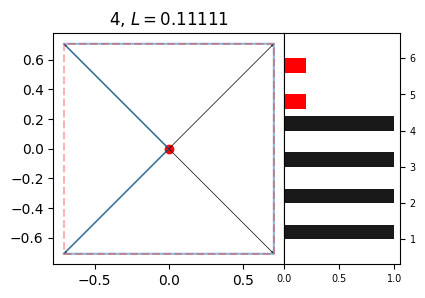}\includegraphics[width=0.32\linewidth]{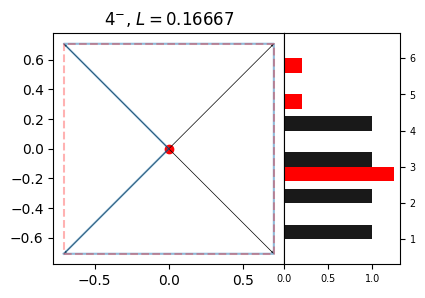}\includegraphics[width=0.32\linewidth]{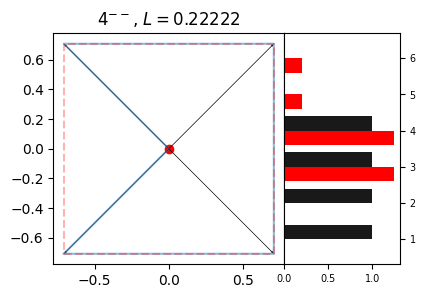}\\\includegraphics[width=0.32\linewidth]{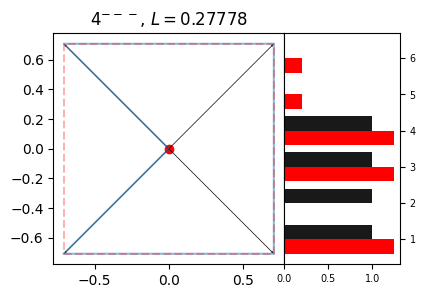}\includegraphics[width=0.32\linewidth]{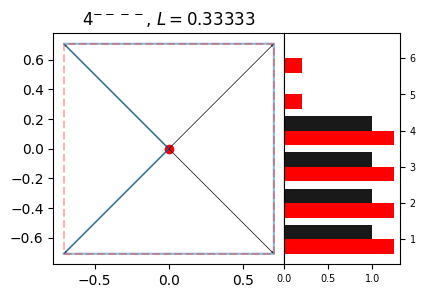}\includegraphics[width=0.32\linewidth]{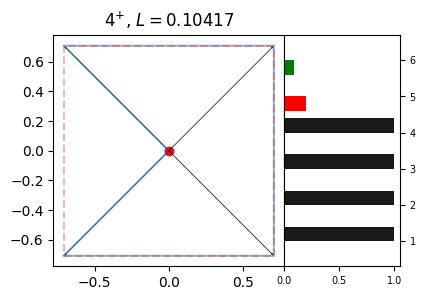}\\\includegraphics[width=0.32\linewidth]{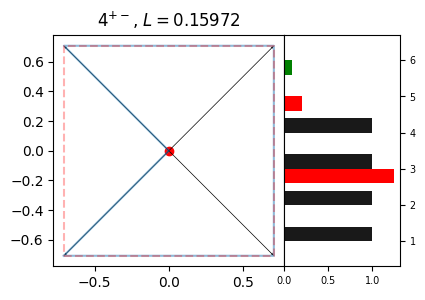}\includegraphics[width=0.32\linewidth]{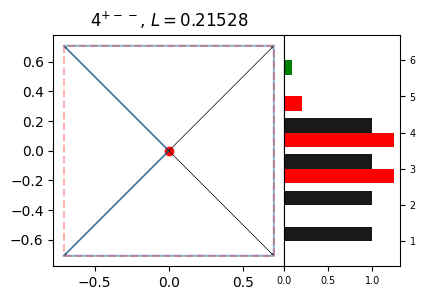}\includegraphics[width=0.32\linewidth]{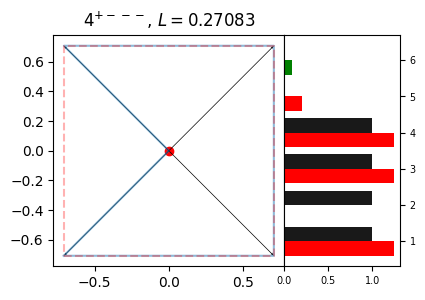}\\\includegraphics[width=0.32\linewidth]{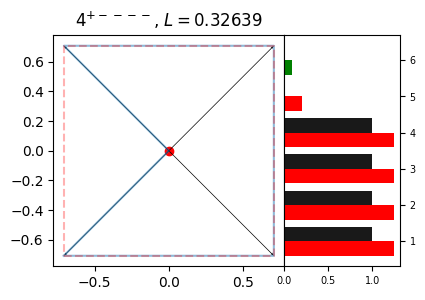}\includegraphics[width=0.32\linewidth]{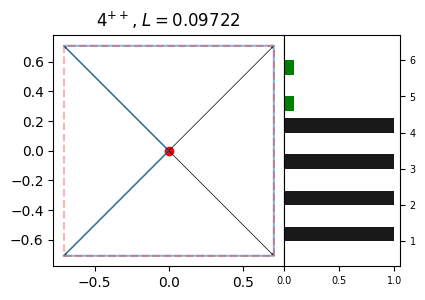}\includegraphics[width=0.32\linewidth]{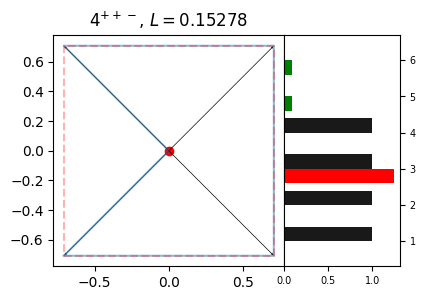}\\\includegraphics[width=0.32\linewidth]{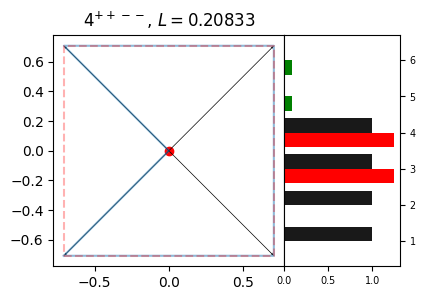}\includegraphics[width=0.32\linewidth]{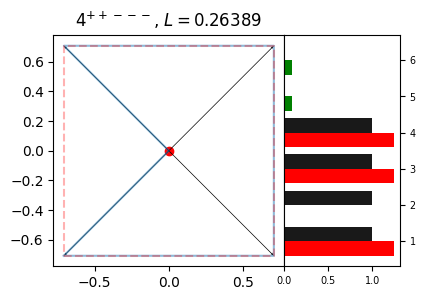}\includegraphics[width=0.32\linewidth]{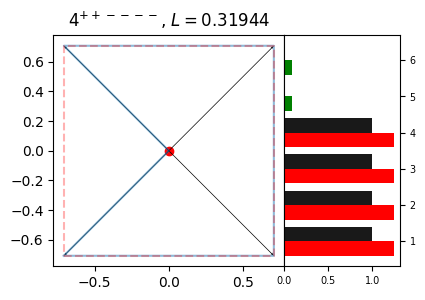}\\
    \caption{Representative of each known class of critical parameters in $\nrows=2, \ncols=6$.}
    \label{fig:allthekgons}
\end{figure}

\begin{figure}[h!]
    \centering
    \includegraphics[width=0.7\linewidth]{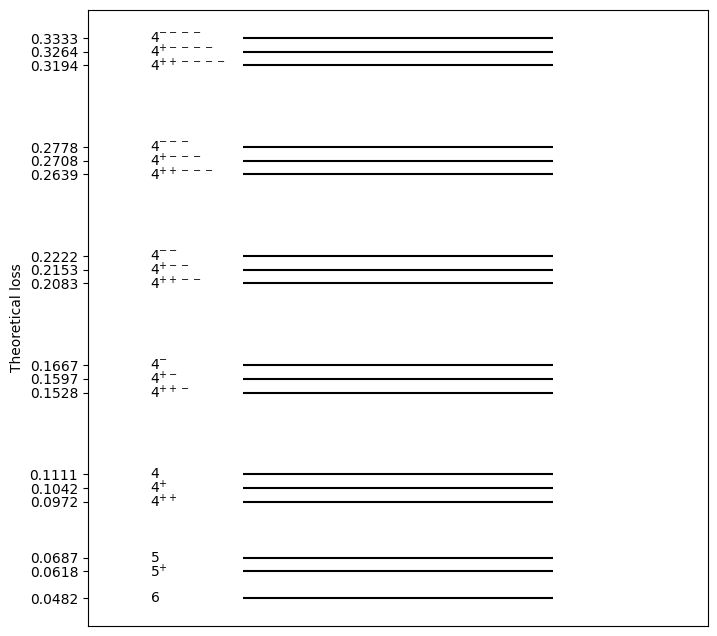}
    \caption{Potential energy levels $L$ for known critical points in $\nrows = 2, \ncols = 6$.}
    \label{fig:energylevels}
\end{figure}

\begin{figure}[h!]
    \centering
    \includegraphics[width=0.45\linewidth]{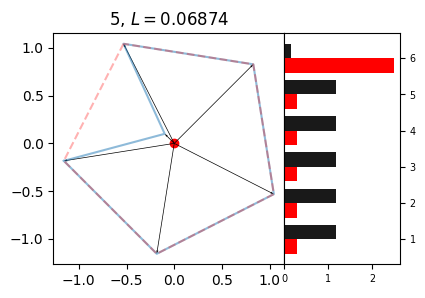}
    \includegraphics[width=0.45\linewidth]{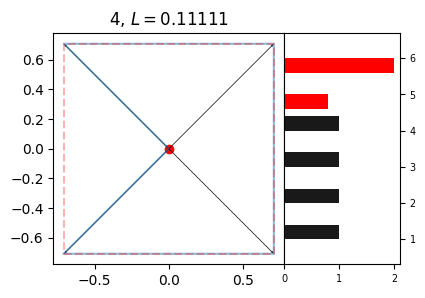}\\
    \includegraphics[width=0.45\linewidth]{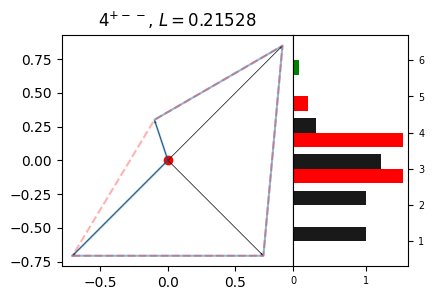}
    \includegraphics[width=0.45\linewidth]{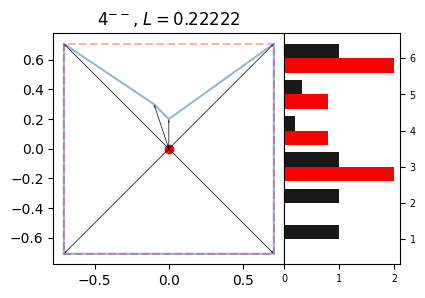}
    \caption{Irregular versions of known critical points in $\nrows=2, \ncols=6$.}
    \label{fig:irregulargon}
\end{figure}

\clearpage
\newpage
\section{Additional examples of SGD trajectories for the $\nrows=2,\ncols=6$ TMS potential}\label{appendix:SGD_examples}

In this Appendix we collect additional individual SGD trajectories for the $\nrows=2,\ncols=6$ TMS potential, with the same hyperparameters as discussed in Section \ref{section:pt_t}. These are all of the runs from $30$ random seeds that had a dynamical transition. We note that each of the critical points encountered in a plateau fall into the classification discussed in Appendix \ref{section:fantastic_beasts} and the estimator $\hat\lambda$ for the local learning coefficient jumps in each transition. Note that the transitions in Figure \ref{fig:headline_sgd_nrows2_ncols6} are $4^{++-} \rightarrow 4^{+} \rightarrow 5$.

First a brief guide to reading these figures, for example Figure \ref{fig:figure_tms_test_20230921_1}. The top row contains a visualization of the weight vector $W$, one black arrow per column. Adjacent columns are connected by a blue line, the red dashed line shows the convex hull. The middle row shows the parameter more quantitatively: columns $W_i$ for $1 \le i \le 6$ ordered from the negative $x$-axis in a counter-clockwise direction and $\Vert W_i \Vert$ (black) and $|b_i|$ (red, green) are shown. A white plus sign indicates a bias that exceeds $1.25 * \max_{i \in I} \Vert W_i \Vert^2$ where $I$ is the set of all columns $i$ where $b_i \ge \Vert W_i \Vert$. All columns share the same axes. Each column of the top and middle rows jointly display the same parameter $w = (W,b)$ which corresponds to (in order) the points marked during training by red dots in the bottom row. The bottom row shows losses and local learning coefficient, with the latter smoothed over a window of size $6$, where $\hat\lambda$ is measured every $30$ epochs.

We note that in some runs containing particularly degenerate $k$-gons, such as the $4^{++---}$ in Figure \ref{fig:figure_tms_test_20230921_1}, the estimator $\hat\lambda$ produces negative values for the standard hyperparameter $\epsilon = 0.001$. By adapting this hyperparameter to the level of degeneracy we can correct for this and avoid invalid estimates (see Appendix \ref{appendix:lambdahat_details}). But since we cannot predict the trajectory of SGD iterates, we choose to use a fixed hyperparameter $\gamma = 1.0$, $\epsilon = 0.001$, number of SGLD steps $=500$ in all Figures of this form. 

\begin{figure}[H]
    \centering
    \includegraphics[width=0.7\linewidth]{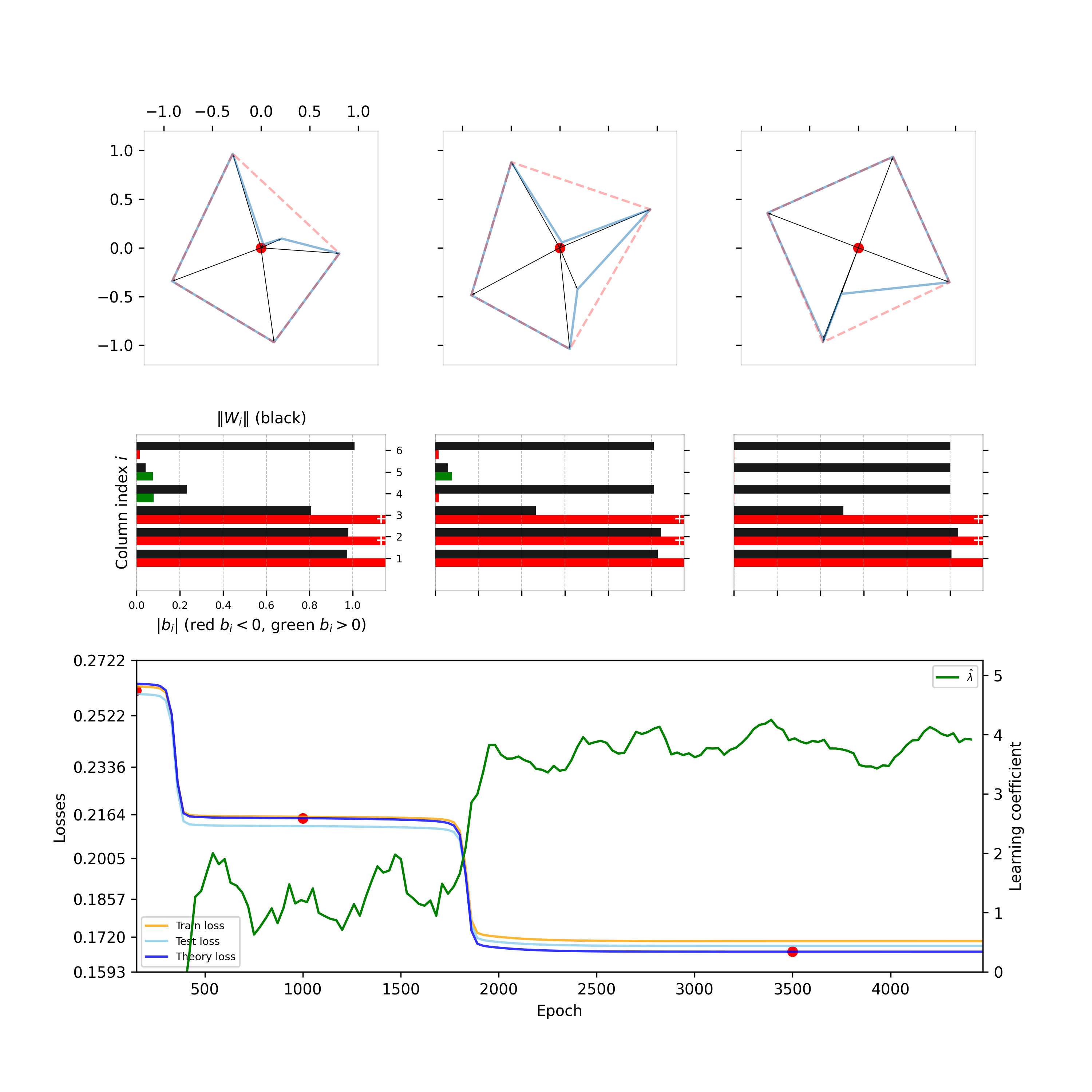}
    \caption{Trajectory with dynamical transitions $4^{++---} \rightarrow 4^{+--} \rightarrow 4^{-}$.}
    \label{fig:figure_tms_test_20230921_1}
\end{figure}

\begin{figure}[H]
    \centering
    \includegraphics[width=0.7\linewidth]{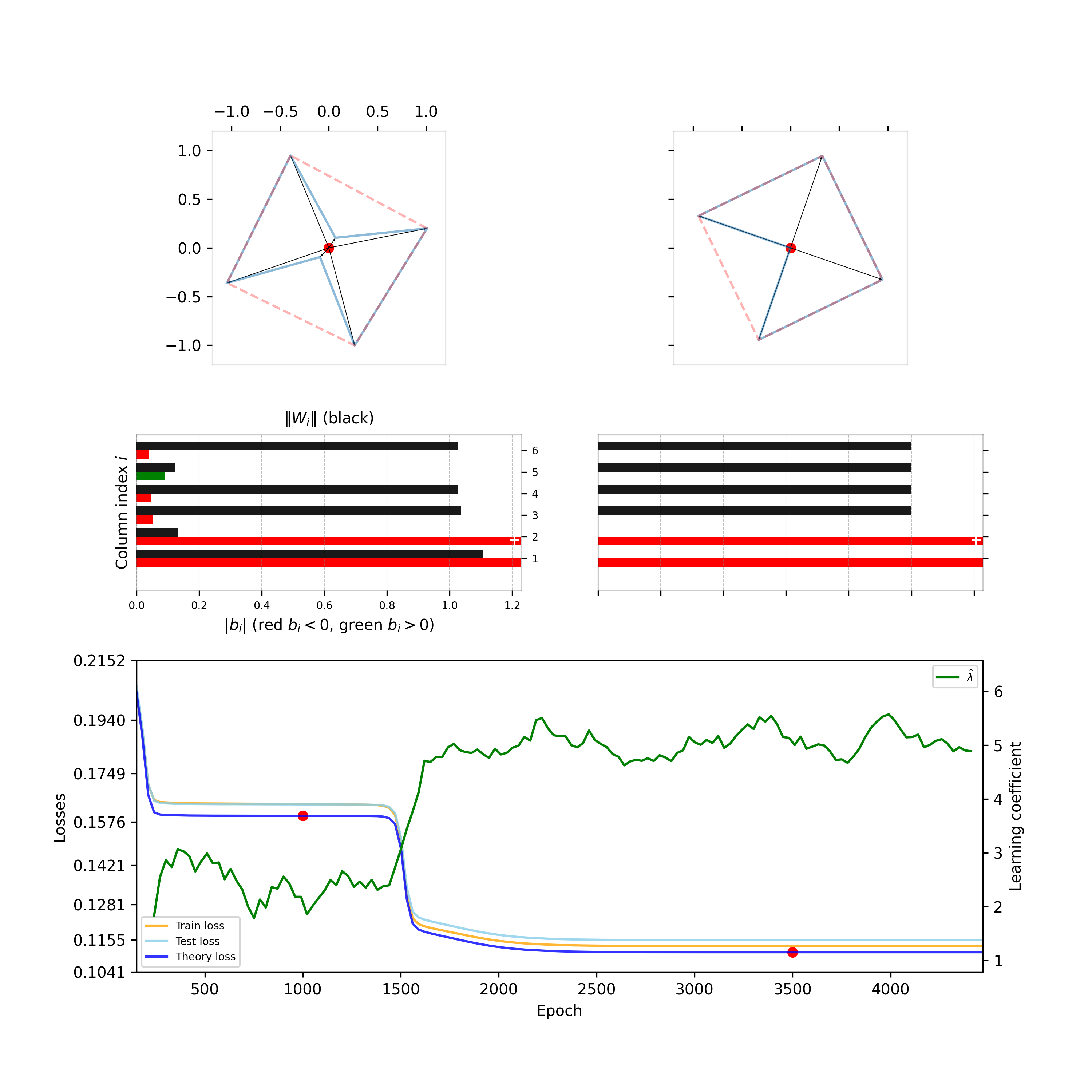}
    \caption{Trajectory with dynamical transition $4^{+-} \rightarrow 4$.
    }
    \label{fig:figure_tms_test_20230921_5}
\end{figure}

\begin{figure}[H]
    \centering
    \includegraphics[width=0.7\linewidth]{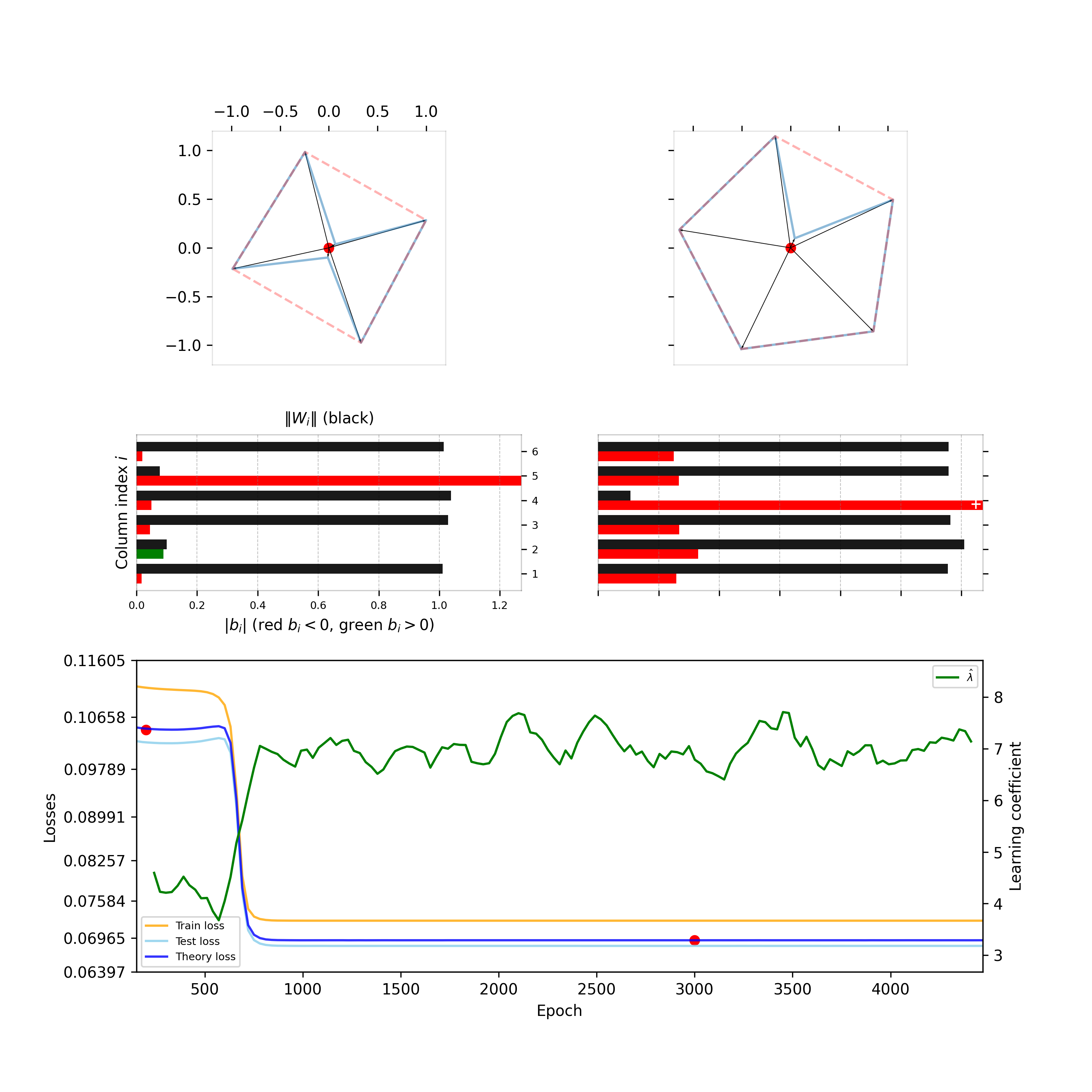}
    \caption{Trajectory with dynamical transition $4^{+} \rightarrow 5$.
    }
    \label{fig:figure_tms_test_20230921_7}
\end{figure}

\begin{figure}[H]
    \centering
    \includegraphics[width=0.7\linewidth]{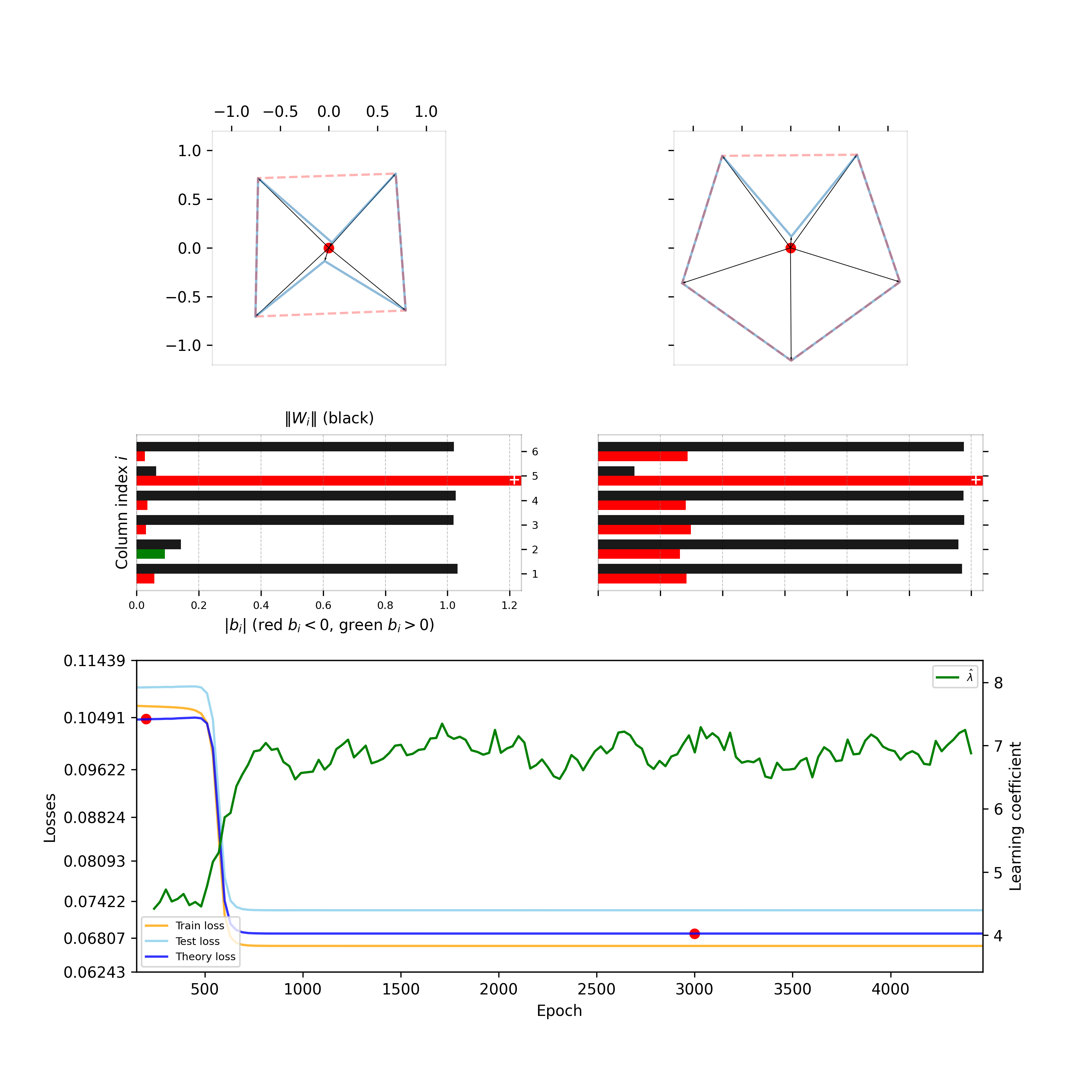}
    \caption{Trajectory with dynamical transition $4^{+} \rightarrow 5$.
    }
    \label{fig:figure_tms_test_20230921_13}
\end{figure}

\begin{figure}[H]
    \centering
    \includegraphics[width=0.7\linewidth]{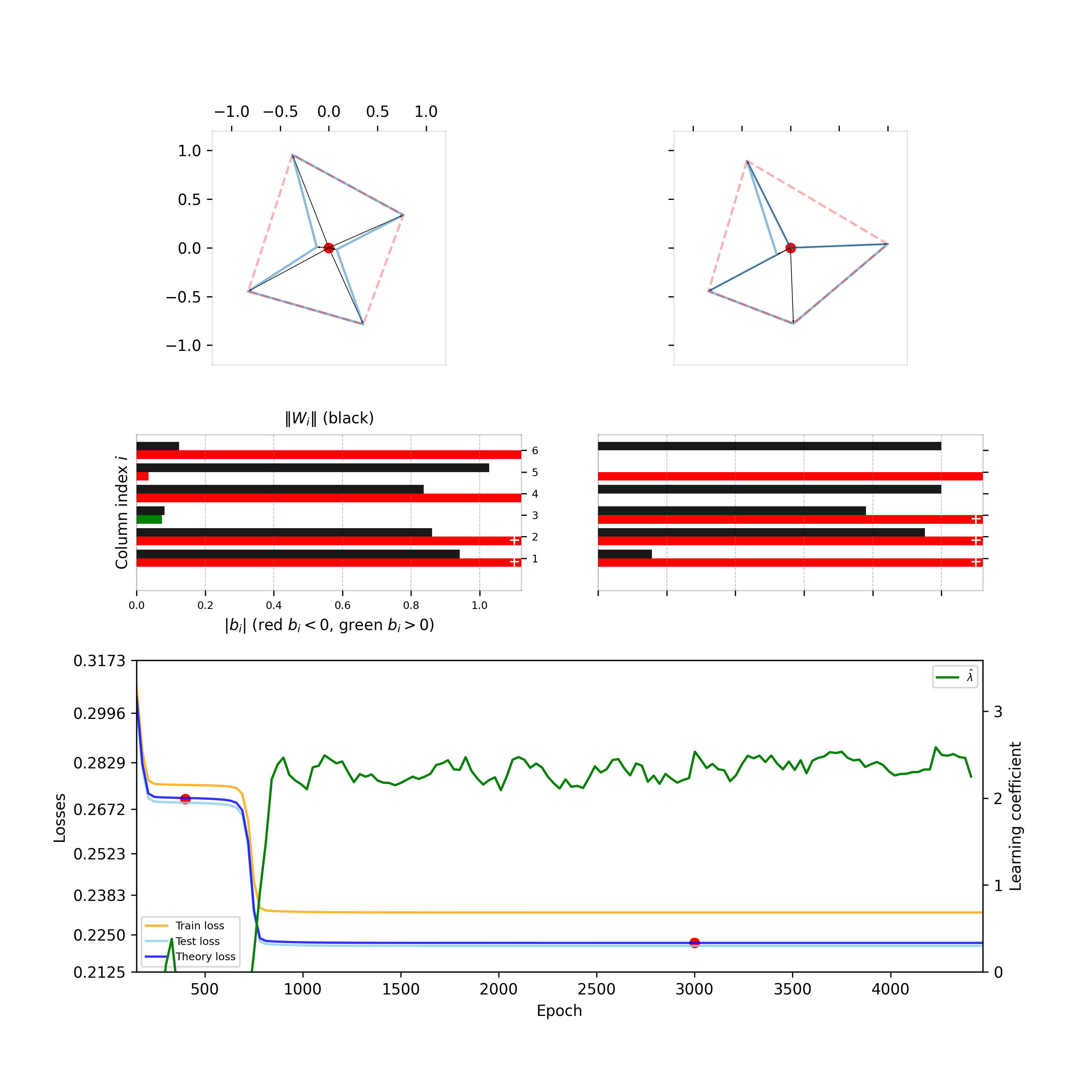}
    \caption{Trajectory with dynamical transition $4^{+---} \rightarrow 4^{--}$.
    }
    \label{fig:figure_tms_test_20230921_14}
\end{figure}

\begin{figure}[H]
    \centering
    \includegraphics[width=0.7\linewidth]{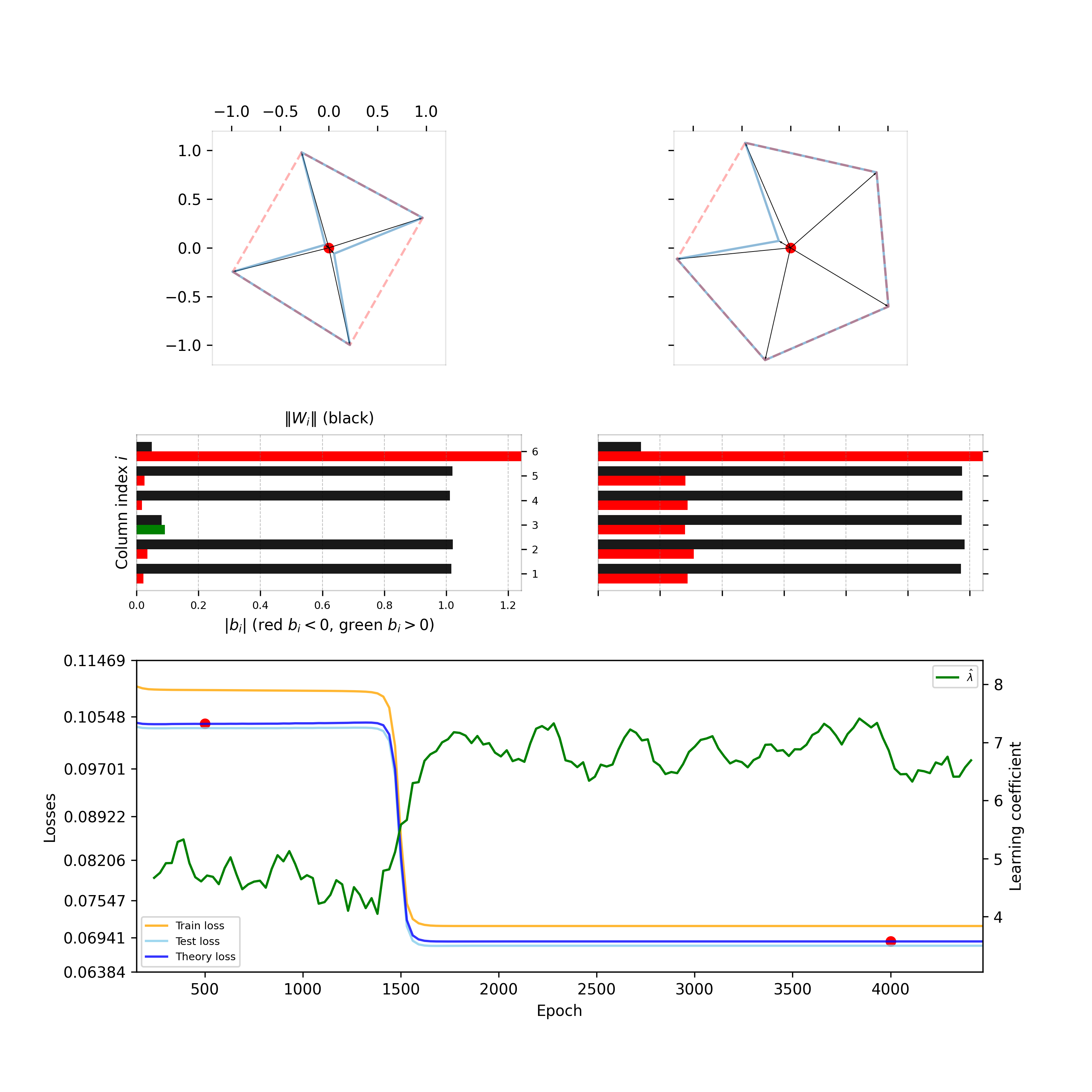}
    \caption{Trajectory with dynamical transition $4^+ \rightarrow 5$.
    }
    \label{fig:figure_tms_test_20230921_21}
\end{figure}

\begin{figure}[H]
    \centering
    \includegraphics[width=0.7\linewidth]{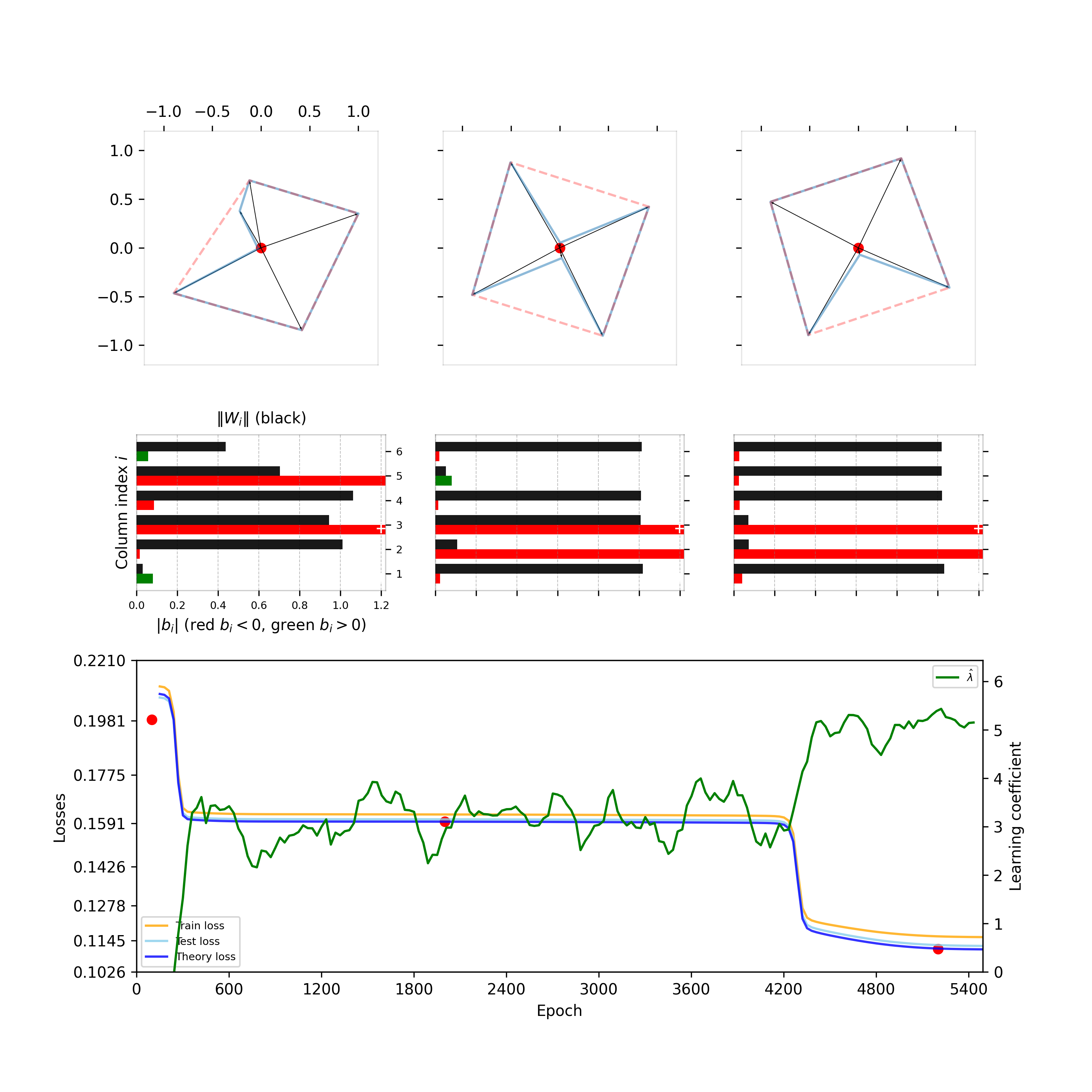}
    \caption{Trajectory with dynamical transition $4^{++--} \rightarrow 4^{+-} \rightarrow 4$.
    }
    \label{fig:figure_tms_test_20230921_28}
\end{figure}

\newpage

%
%
\clearpage
\newpage
\section{Using the Free Energy Formula}\label{section:using_fef}

In Section \ref{section:theory_phase_transitions} we defined a (local) phase transition $\alpha \rightarrow \beta$ at a critical sample size $n_{cr}$ to take place when the local free energies swap order, as in the following table, with $n$ taken close to $n_{cr}$:

\begin{center}
\begin{tabular}{||c | c | c ||} 
    \hline
    $n < n_{cr}$ & $n = n_{cr}$ & $n > n_{cr}$\\ 
    \hline
    $F_n(\paramspace_\alpha) < F_n(\paramspace_\beta)$ & $F_n(\paramspace_\alpha) \approx F_n(\paramspace_\beta)$ & $F_n(\paramspace_\alpha) > F_n(\paramspace_\beta)$\\
    \hline
\end{tabular}
\end{center}

We assume $n$ is in a range where the local free energies for $\gamma \in \{\alpha, \beta\}$ are well-approximated by the right hand-side of the following
\begin{equation}\label{eq:fef}
F_n(\paramspace_\gamma) \approx nL(w_\gamma^*) + \lambda_\gamma \log n + c_\gamma
\end{equation}
for some constant $c_\gamma$. While $n$ is of course an integer, to find where the free energy curves cross we may treat $n$ as a real variable. To an ordered pair $\alpha, \beta$ we may associate
\begin{align*}
    \Delta L &= L(w_\beta^*) - L(w_\alpha^*)\\
    \Delta \lambda &= \lambda_\beta - \lambda_\alpha\\
    \Delta c &= c_\beta - c_\alpha
\end{align*}
Then to solve $F_n(W_\alpha) = F_n(W_\beta)$ for $n$ we may instead solve
\begin{equation}\label{eq:find_critical}
n \Delta L + \Delta \lambda \log n + \Delta c = 0\,.
\end{equation}
Theoretically, a phase transition $\alpha \rightarrow \beta$ exists if and only if this equation has a positive solution. However, in practice the free energy formula on which this equation is based will only well describe the Bayesian posterior for sufficiently large $n$, and it is an empirical question what this $n$ may be. In the following when we say that a phase transition is predicted to exist (or not), the reader should keep this caveat in mind. 

When we refer to theoretically derived values for phase transitions, we mean that we solve (\ref{eq:find_critical}) with the given values of $\Delta L, \Delta \lambda, \Delta c$. Note that if the phase $\beta$ has lower loss, learning coefficient and constant term (so that $\Delta L$, $\Delta \lambda$ and $\Delta c$ are all negative) then there can be no phase transition $\alpha \rightarrow \beta$ as $F_n(\paramspace_\alpha)$ is never lower than $F_n(\paramspace_\beta)$.

Although the constant (and lower order) terms in the free energy expansion are not well-understood, in this paper we proceed assuming that the leading contribution comes from the prior in the manner described in Section \ref{appendix:constant_terms} below.

\subsection{Constant terms in the Free Energy Formula}\label{appendix:constant_terms}

Recall from Section \ref{section:theory_phase_transitions} that given a collection of phases $\{ \paramspace_\alpha \}$ the free energy is
\[
F_n = -\log \sum_{\alpha} V_\alpha \int_{\paramspace_\alpha} e^{-nL_n(w)} \overline{\varphi}_\alpha(w) dw
\]
where $\overline{\varphi}_\alpha = \tfrac{1}{V_\alpha} \varphi_\alpha$ for $V_\alpha = \int_{\paramspace_\alpha} \varphi_\alpha dw$. Suppose that the phase $\paramspace_\alpha$ is dominated by a critical point $w_\alpha^*$ and that the partition of unity is chosen so that $\varphi_\alpha(w_\alpha^*) \approx \varphi(w_\alpha)$ (this is reasonable since the critical point is in the interior). We explore the following approximation to the contribution of $\alpha$ to the above integral
\[
\int_{\paramspace_\alpha} e^{-nL_n(w)} \overline{\varphi}_\alpha(w) dw \approx \overline{\varphi}(w_\alpha^*) \int_{\paramspace_\alpha} e^{-nL_n(w)} dw\,.
\]
This means that the prior contributes to $c_\alpha$ of (\ref{eq:free_energy_formula_alpha}) through $- \log(V_\alpha \overline{\varphi}(w_\alpha^*))$ as well as through the $O_P(1)$ term of the asymptotic expansion. With a normal prior $\varphi = \frac{1}{\sigma \sqrt{(2 \pi)^d}} \exp(-\tfrac{1}{2\sigma^2}\Vert w \Vert^2)$
\[
V_\alpha \overline{\varphi}(w_\alpha^*) = \varphi_\alpha(w_\alpha^*) \approx \frac{1}{\sigma \sqrt{(2 \pi)^d}} \exp\big(-\frac{1}{2\sigma^2}\Vert w_\alpha^* \Vert^2\big)\,.
\]
Hence $- \log(V_\alpha \overline{\varphi}(w_\alpha^*))$ depends on $\sigma$ through the sum $\log \sigma + \frac{1}{2 \sigma^2} \Vert w_\alpha \Vert^2$. Here if $w_\alpha^* = (W_\alpha^*, b_\alpha^*)$ we have $\Vert w^*_\alpha \Vert^2 = \Vert W^*_\alpha \Vert^2 + \Vert b_\alpha^* \Vert^2$. In Table \ref{tab:appendix_kgonpriors_ncols5}, Table \ref{tab:appendix_kgonpriors} we show the value of this contribution when $\sigma = 1$ for $\ncols \in \{5,6\}$. Note that for some $k$-gons there are negative biases that can take arbitrarily large values, so the shown values are lower bounds.

\begin{table}[h!]
    \centering
    \begin{tabular}{||c | c ||} 
    \hline
    Critical point & $\tfrac{1}{2} \Vert w_\alpha^* \Vert^2$ \\ 
    \hline
    $4$ & $\ge 2$\\
    \hline
    $4^{+}$ & $\ge 2.05$\\
    \hline
    $5$ & $\ge 3.62417$\\
    \hline
    \end{tabular}
    \caption{Prior factors for $k$-gons when $\ncols = 5$.}
    \label{tab:appendix_kgonpriors_ncols5}
\end{table}

\begin{table}[h!]
    \centering
    \begin{tabular}{||c | c ||} 
    \hline
    Critical point & $\tfrac{1}{2} \Vert w_\alpha^* \Vert^2$ \\ 
    \hline
    $6$ & $6.37767$ \\
    \hline
    $5$ & $> 3.62417$\\
    \hline
    $5^{+}$ & $3.62764$\\
    \hline
    $4$ & $2$\\
    \hline
    $4^-$ & $> 1.5$\\
    \hline
    $4^{--}$ & $> 1$\\
    \hline
    $4^{---}$ & $> 0.5$\\
    \hline
    $4^{----}$ & $> 0$\\
    \hline
    $4^{+}$ & $2.00347$\\
    \hline
    $4^{+-}$ & $> 1.50347$\\
    \hline
    $4^{+--}$ & $> 1.00347$\\
    \hline
    $4^{+---}$ & $> 0.50347$\\
    \hline
    $4^{+----}$ & $> 0.00347$\\
    \hline
    $4^{++}$ & $2.00694$\\
    \hline
    $4^{++-}$ & $> 1.50694$\\
    \hline
    $4^{++--}$ & $> 1.00694$\\
    \hline
    $4^{++---}$ & $> 0.50694$\\
    \hline
    $4^{++----}$ & $> 0.00694$\\
    \hline

    \hline
    \end{tabular}
    \caption{Prior factors for $k$-gons when $\ncols = 6$.}
    \label{tab:appendix_kgonpriors}
\end{table}

\subsection{Theoretical predictions for the $5$-gon to $6$-gon transition for $\nrows = 2, \ncols = 6$}\label{section:5to6theory}

With $\alpha = 5$ and $\beta = 6$ we have from Table \ref{tab:m2ncols6} and Table \ref{tab:appendix_kgonpriors} that
\begin{align*}
    \Delta L &= 0.04819 - 0.06874 = -0.02055\\
    \Delta \lambda &= 8.5 - 7 = 1.5\\
    \Delta c &= 6.37767 - 3.62417 = 2.7535
\end{align*}
Solving (\ref{eq:find_critical}) numerically gives $n_{cr} = 601$ as the closest integer.

\subsection{Influence of constant terms}\label{section:const_terms}

Dividing (\ref{eq:find_critical}) through by $\log n$ we have
\begin{equation}\label{eq:clayperon}
\frac{n}{\log n} = - \frac{1}{\log n} \frac{\Delta c}{\Delta L} - \frac{\Delta \lambda}{\Delta L} = -\frac{1}{\Delta L}\Big[ \frac{\Delta c}{\log n} + \Delta \lambda \Big]\,.
\end{equation}

In the phase transitions we analyse in this paper $\Delta L$ is on the order of $0.01$, $\Delta \lambda$ is on the order of $1$, and $\Delta c$ is on the order of $1$, so $\Delta \lambda / \Delta L, \Delta c / \Delta L$ are on the order of $10$. In Figure \ref{fig:kgons_m2ncols6} we care about roughly $200 \le n \le 1000$ so $5 \le \log n \le 7$. Hence in practice the first term in (\ref{eq:clayperon}) is roughly one order of magnitude lower than the second; the upshot being that the primary determinant of $n_{cr}$ is $|\Delta \lambda/\Delta L|$ but the influence of the constant terms can be significant.



In the second transition of Figure \ref{fig:headline_sgd_nrows2_ncols6} from $4^{+} \rightarrow 5$ we have $\Delta \lambda = 2$, $\Delta L = 0.06874 - 0.10417 = -0.03543$ (based on Table \ref{tab:m2ncols6}) and $\Delta c = 3.62417 - 2.00347 = 1.6207$ (based on Table \ref{tab:appendix_kgonpriors}) so $-\Delta c / \Delta L \approx 45$ and $-\Delta \lambda / \Delta L \approx 56$. Solving (\ref{eq:find_critical}) numerically yields $n_{cr} \approx 380$. Solving the equation with $\Delta c = 0$ gives $n_{cr} \approx 327$, so as suggested above including the constant term shifts the critical sample size by a lower order term.

\subsection{Double Transitions}\label{section:double_transitions}

Assume that there are transitions $\alpha \rightarrow \beta$ at critical sample size $n_1$ and $\beta \rightarrow \gamma$ at critical sample size $n_2$, both involving no change in constant terms so that (\ref{eq:crit_ratio}) applies. Since $n/\log n$ is increasing, if $n_1 < n_2$ we deduce
\begin{equation}\label{eq:slope_change_predict}
    -\frac{\Delta \lambda_1}{\Delta L_1} < -\frac{\Delta \lambda_2}{\Delta L_2}
\end{equation}
where
\begin{align*}
    \Delta \lambda_1 &= \lambda_\beta - \lambda_\alpha\,,\\
    \Delta \lambda_2 &= \lambda_\gamma - \lambda_\beta\,,\\
    \Delta L_1 &= L(w_\beta^*) - L(w_\alpha^*)\,,\\
    \Delta L_2 &= L(w_\gamma^*) - L(w_\beta^*)\,.
\end{align*}
From (\ref{eq:slope_change_predict}) we obtain the inequality
\begin{equation}\label{eq:chenslaw}
    \Delta L_2 \Delta \lambda_1 > \Delta \lambda_2 \Delta L_1 \Longrightarrow \frac{\Delta L_2}{\Delta \lambda_2} > \frac{\Delta L_1}{\Delta \lambda_1}
\end{equation}
which says that: \emph{along any curve in the $(\lambda, L)$ plane following a sequence of Bayesian phase transitions, the slope must increase}. For example, we observe in Figure \ref{fig:hyperbola_plot} that the negative slopes become successively less negative as we move along a sequence of transitions. This is the least obvious for the pair of transitions $4^{+--} \rightarrow 4^{+}$ and $4^{+} \rightarrow 5$ which corresponds to the fact that the gap $n_2 - n_1$ is small in Figure \ref{fig:bah_test_sequence4and5}.

%
%

\section{Bayesian Antecedents}\label{section:bayesian_antecedents}

In this section we review whether the phase transitions we find empirically have Bayesian antecedents. To begin we consider the case where $\Delta c = 0$. Then from (\ref{eq:clayperon}) we deduce
\begin{equation}\label{eq:crit_ratio}
\frac{n}{\log n} = -\frac{\Delta \lambda}{\Delta L}\,.
\end{equation}
For $n > 3$, $n/\log n$ is positive and an increasing function of $n$, and we denote the inverse function by $\mathcal{N}$. Since the critical sample size for a transition $\alpha \rightarrow \beta$ must be positive, if $\Delta L < 0$ (the loss decreases) then (\ref{eq:crit_ratio}) has a (unique) solution if and only if $\Delta \lambda > 0$ (the complexity increases). The unique solution is the critical sample size
\[
n_{cr} = \mathcal{N}\Big( -\frac{\Delta \lambda}{\Delta L} \Big)\,.
\]
If $\Delta L < 0$ and $\Delta c \neq 0$ we simply plot the free energy curves and see if they intersect. Given the orders of magnitude discussed in Section \ref{section:const_terms} we expect if $\Delta c > 0$, $\Delta \lambda > 0$ then there is likely to be a solution, whereas if $\Delta c < 0$, $\Delta \lambda < 0$ then the right hand side of (\ref{eq:clayperon}) is negative and no transition can exist. The mixed cases are harder to argue about in general terms.

\subsection{The BAH for $\nrows = 2, \ncols = 6$}\label{section:bah_for_c6}

We examine the evidence for the existence of Bayesian antecedents of the dynamical transitions in TMS for $\nrows = 2, \ncols = 6$ exhibited in Section \ref{section:pt_t}. The known dynamical transitions are summarised in Figure \ref{fig:hyperbola_plot}. The slope of the lines is, in the notation of (\ref{eq:crit_ratio}), equal to $\frac{\Delta L}{\Delta \lambda}$ and so the fact that all observed phase transitions go down and to the right would indicate, if the constant terms were ignored, that the critical sample size is positive and a Bayesian phase transition exists. Here the $L$ values are from Section \ref{section:fantastic_beasts} and the $\hat\lambda$ values from Table \ref{tab:lambdahatvalues} (note the caveats there) for those $k$-gons where we do not have theoretically derived values (for $\alpha \in \{5,5^{+},6\}$ see Table \ref{tab:m2ncols6}).

\begin{figure}[h!]
         \centering
         \includegraphics[width=0.9\textwidth]{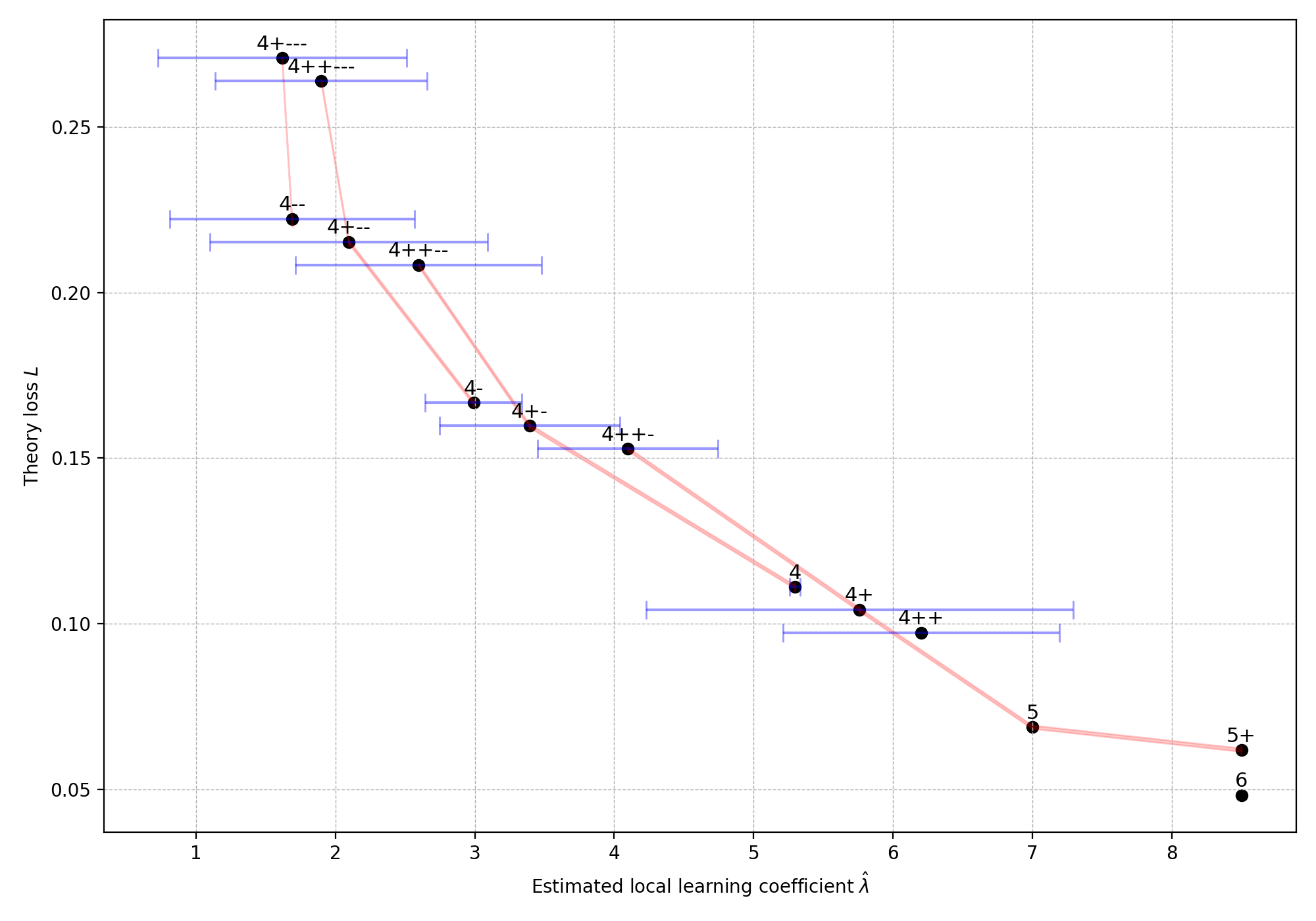}
         \caption{Summary of known dynamical transitions and the phases involved. Each scatter point $(\hat\lambda_\alpha, L_\alpha)$ corresponds to one of our classified critical points $w_\alpha^*$ and a red line is drawn between phases with dynamical transitions connecting them (in the direction that goes right and down) as listed in (\ref{eq:all_transitions_list}). These ``curves'' are necessarily concave up if the time order of dynamical transitions matches the sample size order of Bayesian transitions, see \ref{section:double_transitions}.}
         \label{fig:hyperbola_plot}
\end{figure}

To perform a more refined analysis which includes the constant terms we use Table \ref{tab:appendix_kgonpriors} and compare plots of free energy curves. In the cases where we use an empirical estimate of the learning coefficient, we display the curve as part of a shaded region made up of curves with coefficients of $\log n$ within one standard deviation of the estimate. The results are shown in Figures \ref{fig:bah_test_sequence1}-\ref{fig:bah_test_sequence4and5}.

For phase transitions occurring at large values of $n$, the existence of a transition is relatively insensitive to small changes in the learning coefficient or constant terms, and we can also be more confident that the predicted transition translates (via the correspondence between the free energy formula and the posterior, which is only valid for sufficiently large $n$) to an actual phase transition in the posterior. For transitions occurring at low $n$, such as those in Figure \ref{fig:bah_test_sequence1} and Figure \ref{fig:bah_test_sequence2}, the analysis is strongly affected by small changes in learning coefficient or constant terms, and so we cannot be sure that a Bayesian transition exists.

\begin{figure}[h!]
         \centering
         \includegraphics[width=0.55\textwidth]{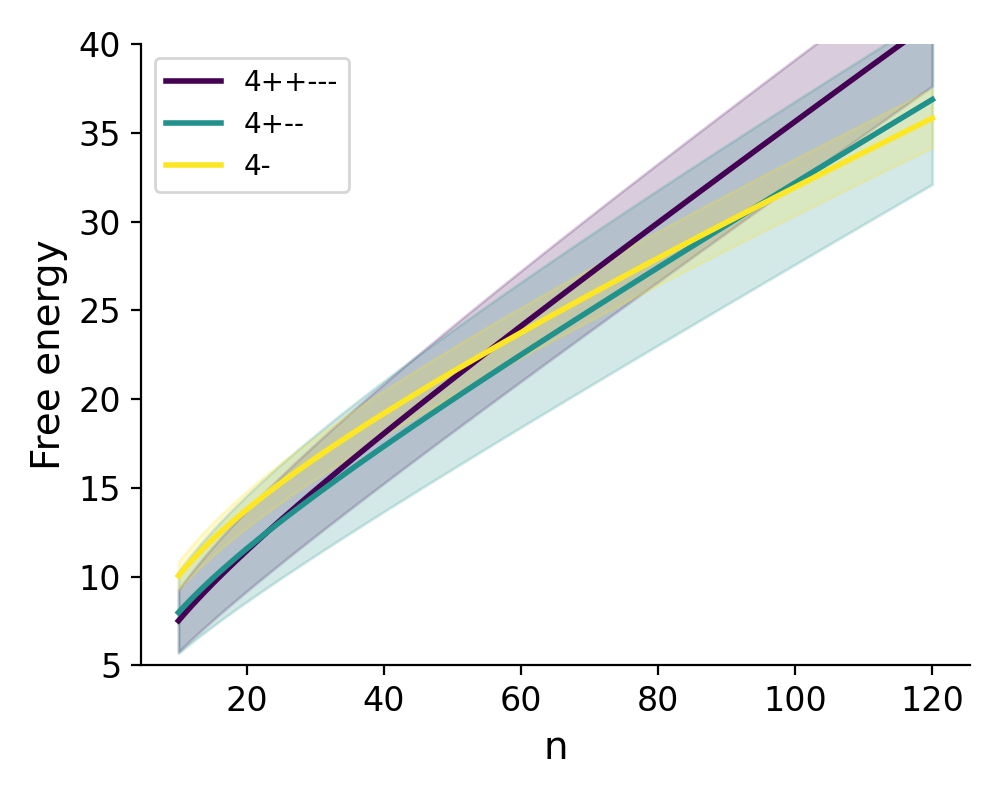}
         \caption{Free energy plot providing evidence of Bayesian transitions $4^{++---} \rightarrow 4^{+--}$ and $4^{+--} \rightarrow 4^{-}$. In the former case the plot is merely suggestive, since the transition takes place at low $n$ and is very sensitive to the learning coefficients and constant terms.}
         \label{fig:bah_test_sequence1}
\end{figure}

\begin{figure}[h!]
         \centering
         \includegraphics[width=0.55\textwidth]{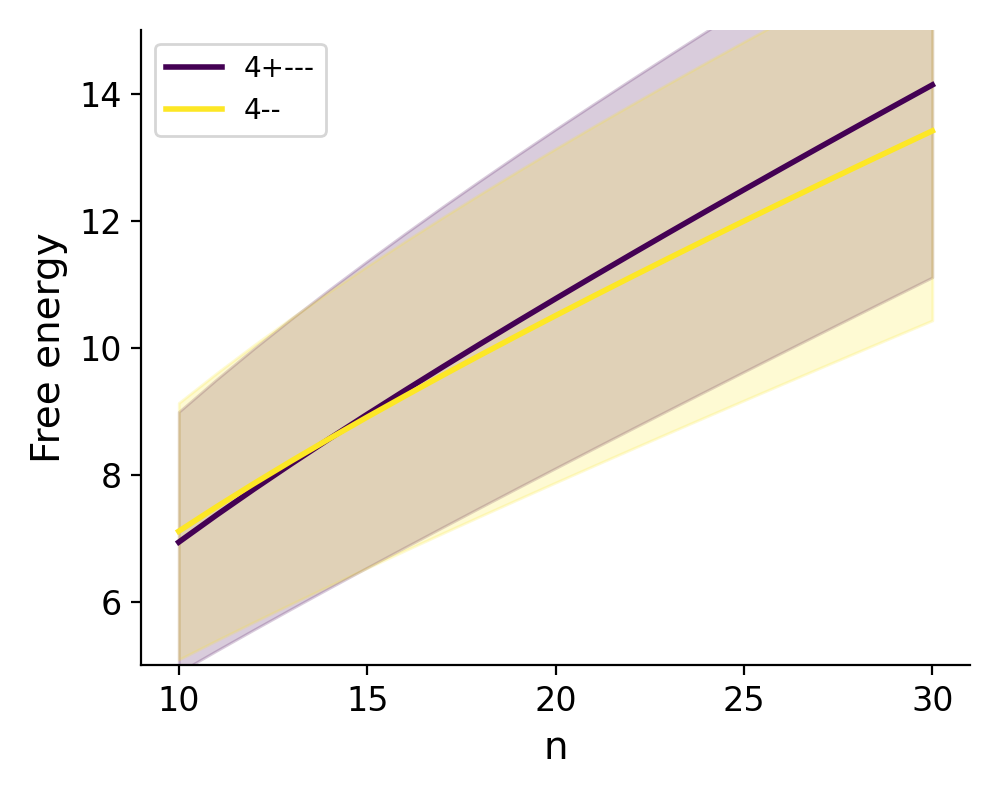}
         \caption{Free energy plot providing weak evidence of a Bayesian transition $4^{+---} \rightarrow 4^{--}$. The transition takes place at low $n$ and is very sensitive to the learning coefficients and constant terms.}
         \label{fig:bah_test_sequence2}
\end{figure}

\begin{figure}[h!]
         \centering
         \includegraphics[width=0.55\textwidth]{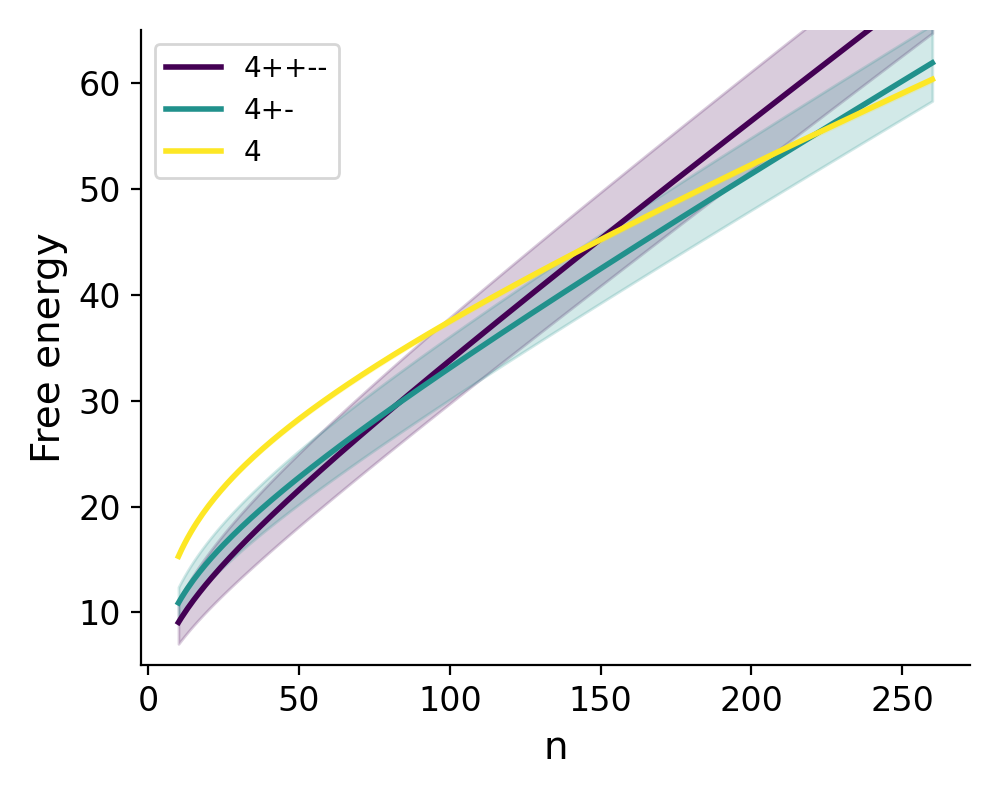}
         \caption{Free energy plots suggesting Bayesian transitions $4^{++--} \rightarrow 4^{+-}$ and $4^{+-} \rightarrow 4$.}
         \label{fig:bah_test_sequence3}
\end{figure}

\begin{figure}[h!]
         \centering
         \begin{minipage}{0.5\textwidth} 
            \centering
             \includegraphics[width=0.95\textwidth]{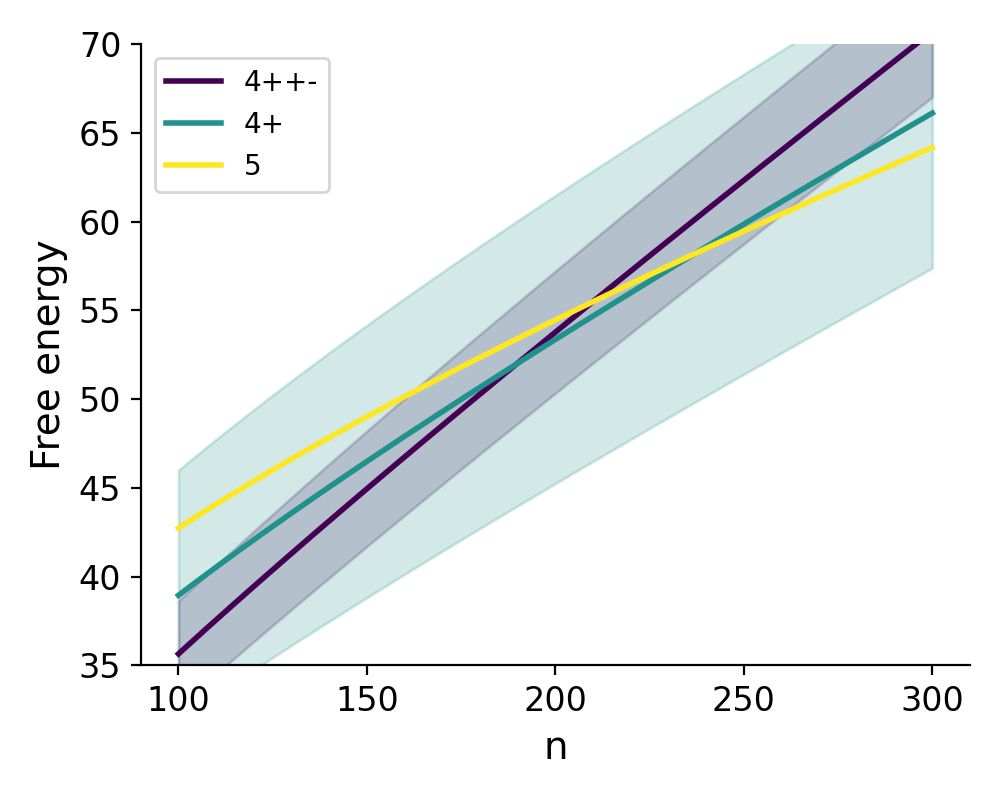}
        \end{minipage}\hfill
        \begin{minipage}{0.5\textwidth} 
            \centering
            \includegraphics[width=0.95\textwidth]{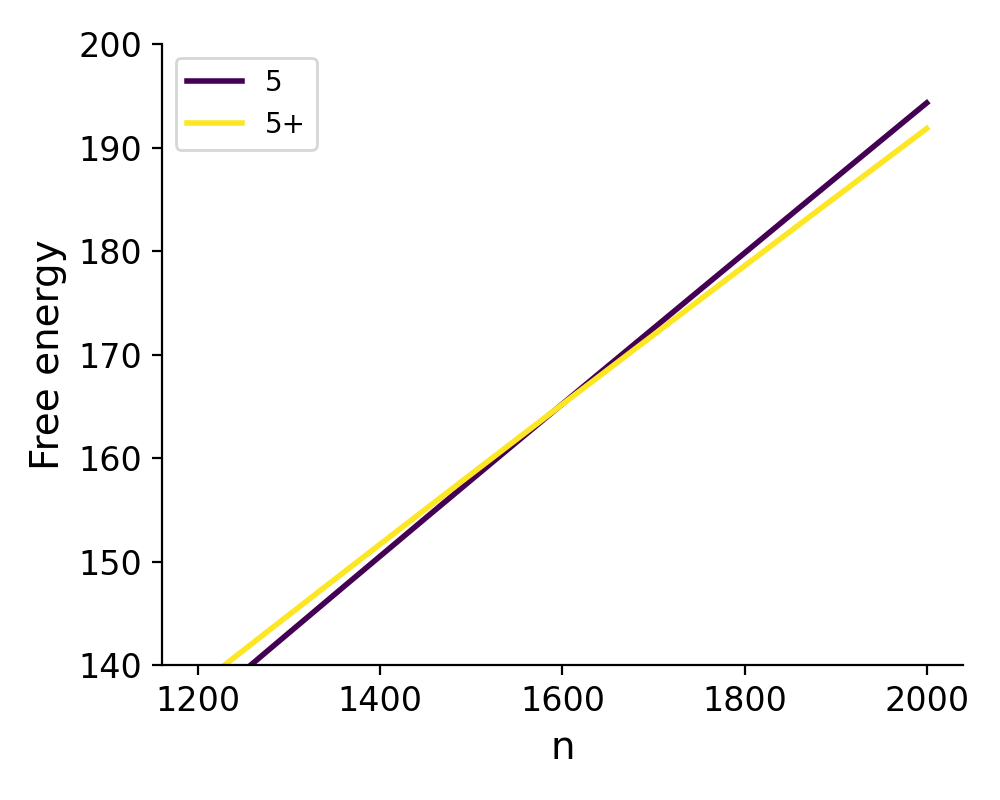}
        \end{minipage}
         \caption{Free energy plots suggesting Bayesian transitions $4^{++-} \rightarrow 4^{+}$, $4^{+} \rightarrow 5$ and $5 \rightarrow 5^{+}$.}
         \label{fig:bah_test_sequence4and5}
\end{figure}


\clearpage
\newpage
\section{Intuition for the local learning coefficient as complexity measure for $c=6$}\label{section:intuition_complexity}

Here we provide some intuition for why critical points in TMS with higher local $\RLCT$ should be thought of as more complex. It seems uncontroversial that the standard $(k+1)$-gon is more complex than the standard $k$-gon. We focus on explaining why increasing the number of positive biases on a $k$-gon causes a slight \textit{increase} in the model complexity and increasing the number of large negative biases on the $4$-gon causes a large \textit{decrease} in the complexity. This pattern can be seen empirically in Table \ref{tab:lambdahatvalues}.

The basic fact that informs this discussion is that the local learning coefficient is half the number of normal directions to the level set $L(w) = L(w_\alpha^*)$ at $w_\alpha^*$ when $L$ is Morse-Bott at $w_\alpha^*$ so that a naive count of normal directions captures the degeneracy. See \cite[\S 7.1]{watanabeAlgebraicGeometryStatistical2009}, \cite[\S 4]{wei2022deep} and \cite[Appendix A]{quantifdegen} for relevant mathematical discussion. That means that if we \emph{increase} the number of directions we can travel in the level set by $1$ when we move from $w_\alpha^*$ to $w_\beta^*$ then we expect to \emph{decrease} the learning coefficient by $\frac{1}{2}$. When the level set is more degenerate at $w_\alpha^*$ such naive dimension counts fail to be the correct measure and it is more difficult to provide simple intuitions. However in TMS we are fortunate that some of the critical points (e.g. $5, 5^{+}$) are minimally singular so naive counts actually do capture what is going on.

So let us do some naive counting. Recall that any positive bias $b_i$ associated with a column $W_i$ with zero norm must have the exact value $\tfrac{1}{2c}$, whereas negative biases associated with such columns can take on any value. Fixing the value of the bias reduces the number of free parameters by $3$, since if we have a positive bias at $b_6$ then $l_6 = \Vert W_6 \Vert$ must be zero and the parameter $\theta_5$ does not exist in the parametrisation. This explains why the learning coefficient of the $5^{+}$-gon is larger than that of the $5$-gon by $\tfrac{3}{2}$, since both are minimally singular and we have decreased the number of free parameters (dimension of the level set) by $3$.


Next we consider large negative biases. For the $4^-$-gon, note that the neuron with the large negative bias never fires (it is a ``dead'' neuron), so this critical point only really has representations for three inputs. In fact, when there are no positive biases, the family of parameters that we call a $4^-$-gon includes $w \in \paramspace$ with (using the $l, \theta, b$ parametrization) any $b_4 < 0$ and any $l_4^2 < -b_4$ including $l_4$ arbitrarily close to zero, with a convex hull containing only three vertices. Further, in the case of the $4^{--}$-gon, this configuration only has representations for two inputs. In this case, if the two weights with negative biases are adjacent then there is an entire ``dead'' quarter-plane of the activation space, and the $5$th and $6$th columns of $W$ can take on nonzero values in that quarter plane (provided they satisfy $l_i^2<-b_i$ for $i \in \{5,6\}$). This extra freedom means that the number of bits required to ``pin down'' a $4^{--}$-gon is less than a $4^-$gon, which is less than a $4$-gon. Similarly, specifying the $4^{---}$-gon and $4^{----}$-gon requires even less information, so it is appropriate that the local $\RLCT$ classifies them as less complex.




%
%
\clearpage
\newpage
\section{MCMC Experiments}\label{appendix:mcmc_experiment}

We give further details about the experiments where we use MCMC to sample the Bayesian posterior to establish the phase occupancy plot in \ref{fig:kgons_m2ncols6}. For a given number of columns $\ncols$ and sample size $n$, we generate $n$ samples $X_i$ from the true distribution $q(x)$ and obtain a likelihood function $\prod_{i = 1}^n p(X_i \mid w)$ with $q(x)$ and $p(x \mid w)$ given by (\ref{eq:true_dist}) and (\ref{eq:likelihood_function}) respectively. We choose a prior on the parameter space $w = (W, b)$ to be the standard multivariate Gaussian prior (i.e. with zero mean and identity covariance matrix). 

To sample from the corresponding posterior distribution, we run Markov Chain Monte Carlo (MCMC), specifically with the No U-Turn sampler (NUTS) \citep{Homan2014-jy}, with $6$ randomly initialised MCMC chains, each with $5000$ iterations after $500$ iterations of burn-in. We thin the resulting chain by a factor of $10$ resulting in a total posterior sample size of $3000$. For each combination of $n$ and $\ncols$, we run the above posterior sampling procedure for $10$ different PRNG seeds, which produces different input samples $X_i$ as well as changing MCMC trajectories. 

\subsection{Details of Theoretical Proportion Curves}\label{section:theory_prop}
\begin{figure}[h]
    \centering
    \includegraphics[width=0.7\linewidth]{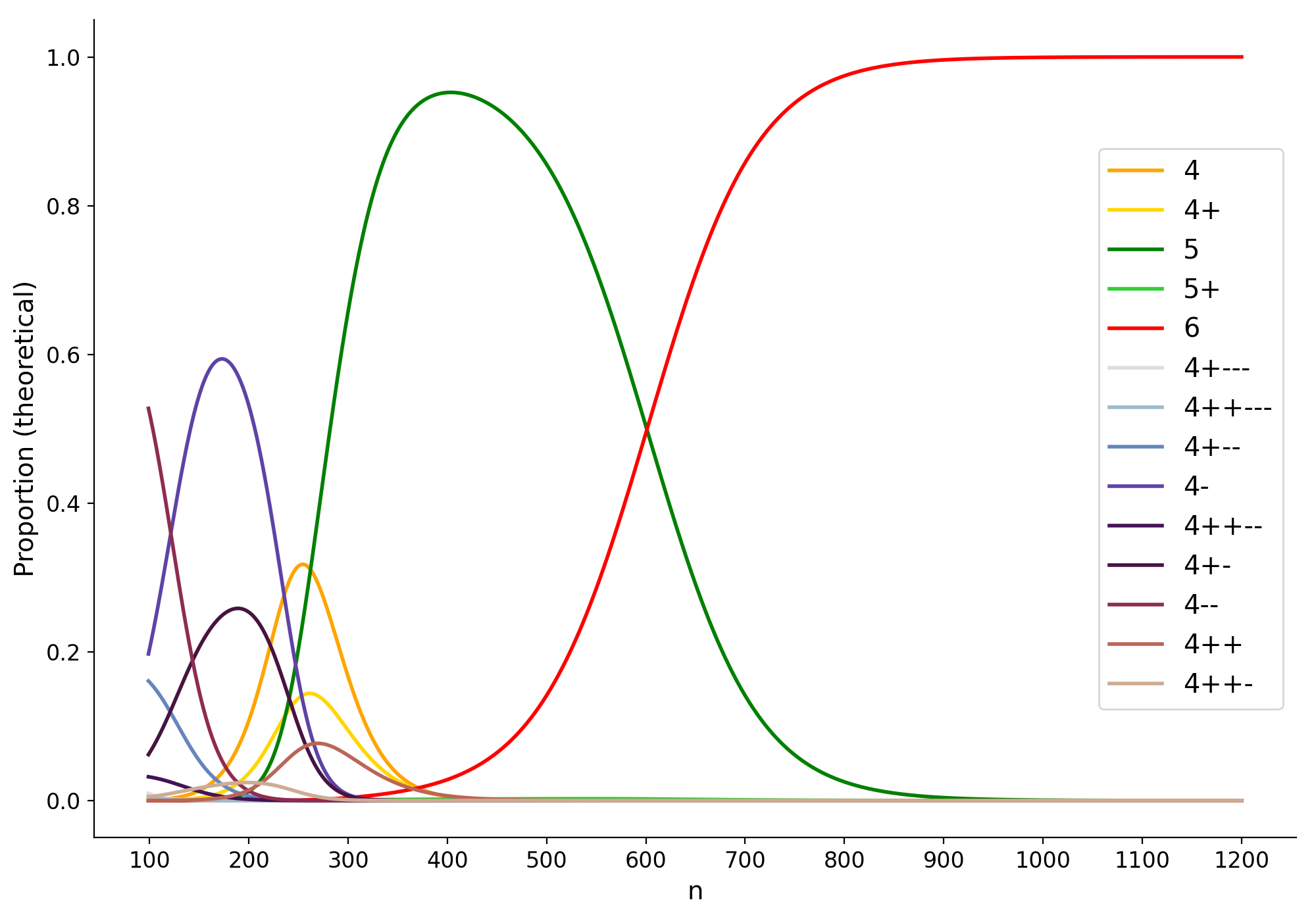}
    \caption{Extended version of theoretical occupancy plot shown previously in Figure \ref{fig:kgons_m2ncols6} where the effect of sub-dominant phases is now included. Note that exact theoretical values of the loss and prior contributions were used for all critical points shown, and exact values of the local learning coefficient were used for the $6$-gon, $5^{+}$-gon and $5$-gon, but estimates of the local learning coefficients were used for other critical points (Table \ref{tab:lambdahatvalues}).}

    \label{fig:c6theory_matplotlib_extended}
\end{figure}

This section contains details of the theoretical component of Figure \ref{fig:kgons_m2ncols6}. For each $\alpha \in \{4,4^{+},5,5^{+},6\}$ we consider the free energy approximation
\[
f_\alpha(n) = n L_\alpha + \lambda_\alpha \log n + c_\alpha
\]
where $L_\alpha$ is the theoretical value taken from Section \ref{section:fantastic_beasts} and $c_\alpha$ are the constant terms from Table \ref{tab:appendix_kgonpriors}. We use the theoretically derived value of $\lambda_\alpha$ in Table \ref{tab:m2ncols6} for $\alpha \in \{5,5^{+},6\}$ and the empirically estimated $\hat\lambda_\alpha$ for $\alpha \in \{4, 4^{+}\}$ from Table \ref{tab:lambdahatvalues}. We then define
\[
p_\alpha(n) = \exp(-f_\alpha(n))\,, \quad Z(n) = \sum_\alpha p_\alpha(n)
\]
and the theory plot in Figure \ref{fig:kgons_m2ncols6} shows the curves $\{ \frac{1}{Z} p_\alpha(n) \}_{\alpha}$. Figure \ref{fig:c6theory_matplotlib_extended} is produced in the same way, with a larger range of phases $\alpha$.

\subsection{Verifying dominant phases for $\nrows=2, \ncols=6$}\label{section:verifying_dominant}

To quantify the relative frequency of each phase at a given sample size $n$, we classify all posterior samples into various phases $\paramspace_{k, \sigma}$ by counting the vertices in their convex hull ($k$) and the number of positive biases ($\sigma$), and compute the proportion of samples that falls into each phase. We then plot the frequencies of each phase as a function of sample size $n$ to visualize how preferred phases changed with $n$. Figure \ref{fig:kgons_m2ncols6} shows the corresponding plot in the case for $\nrows=2, \ncols=6$. The lines show the frequencies of the phases $6, 5, 5^+, 4$ and $4^+$, while unclassified posterior samples are labelled as ``other''. 

While this convex-hull and positive bias counting classification scheme is based on the characteristics of known critical points, it is only an imperfect reflection. There is the risk of mistaking posterior samples in $\paramspace_{k, \sigma}$ as evidence of occupation in the $k^{\sigma +}$-gon phase when it is not. Since we do not claim to have found all possible critical points of the TMS potential and have neglected the higher loss variants of $4$-gon (explained below), it is possible that MCMC samples do not reflect known phases. If this misclassification happens sufficiently often, it could invalidate the comparison of the occupancy plots with theoretical predictions. 

To guard against this, we should check that every MCMC sample in $\paramspace_{k, \sigma}$ is close to a known critical point in $\paramspace_{k, \sigma}$, or is classified as ``other''. To reduce the amount of labour for this task, we run t-SNE projection \citep{JMLR:v9:vandermaaten08a} of the parameters into a 2D space with a custom metric design to remove known symmetries allowing samples that are similar to each other to show up in t-SNE projections as clusters regardless of the irrelevant differences between their angular displacement and column permutation. The custom t-SNE metric is such that distance between a pair of  parameters, $(W, b), (W', b')$, is given by the sum 
$$
\mathrm{HammingDistance}(b > 0, b' > 0) + \min_{i,j \in \{1, \dots c\}} \Vert \mathrm{Normalize}(W, i) - \mathrm{Normalize}(W, j) \Vert_{\mathrm{Frobenius}}
$$
where $b > 0$ denotes the binary array $(\mathbbm{1}_{b_1 > 0}, \dots, \mathbbm{1}_{b_c > 0})$ and $\mathrm{Normalise}(W, i)$ denotes a normalised weight matrix where all column vectors are rotated by a fixed angle so that $i^{th}$ column vector is aligned with the positive x-axis and the columns are reordered so that the column indices reflects the order of the vectors when read counter-clockwise starting from the positive x-axis. 

With this, we can verify the occupancy of dominant phases by checking several samples in each cluster to verify the phase classification of the entire cluster. To illustrate, let us verify the phase occupancy for $\ncols = 6$ at $n = 1000$ as shown in Figure \ref{fig:kgons_m2ncols6}. Figure \ref{fig:tsne} shows the t-SNE projection of the samples for a particular MCMC run. Looking at both the theoretical and empirical occupancy curves at $n = 1000$, the posterior is dominated by the $6$-gon, followed by the $5$-gon and then the $5^+$-gon. Looking at various samples in the largest (green) t-SNE cluster, they do correspond to the $6$-gon all with biases near the optimal negative value. The minor cluster (in dark purple) corresponds to the $5$-gon. This cluster of $5$-gons includes samples with a sixth ``vestigial leg'' with non-zero length. However, these belong to the same phase (same critical submanifold as the $5$-gon) since the corresponding bias has large negative value. The t-SNE projection also reveals a small number of $5^+$-gon samples. 

Performing similar inspections for MCMC chains at $n = 500, 700$ allows us to confirm that the dominant phase switches from the $5$ to $6$-gon in the interval $600 \le n \le 700$. This inspection also confirms that clusters of $5^+$-gons coexist with the two dominant phases albeit at a much lower probability.

For sufficiently low values of $n$, we encounter two issues in establishing phase occupancy. 
\begin{enumerate}
    \item Other higher loss phases such as variants of the 4-gon with large negative biases, and potentially other higher energy phases that we have not characterised start to have non-negligible occupancy.
    \item As $n$ becomes lower, the posterior distribution becomes less concentrated. This means that significantly more posterior mass, and hence a higher fraction of MCMC samples, is accounted for by regions of parameter space that are further away from critical points. These points may be close to the boundaries between different regions, increasing the chance of misclassification, or they may bear little resemblance to the critical point associated with the region they are classified into.
\end{enumerate}
For $n > 400$, from inspecting t-SNE clusters, the above issues do not arise: the samples are close to known critical points, and the frequency of unclassified ``other'' samples is low enough that it won't significantly affect the relative frequency of the dominant phases. Furthermore, we also do not observe many samples that are close to high loss 4-gon variants. This supports the prediction depicted in the extended theoretical occupancy curve shown in Figure \ref{fig:c6theory_matplotlib_extended} which suggests that these 4-gon variants only show up in the $n < 400$ regime. 

We caution the reader in regards to interpreting the phase occupancy diagrams for the range $100 < n < 400$ where one or more of the issues above could affect the empirical frequency.


\begin{figure}[ht]
         \centering
         \includegraphics[width=0.45\textwidth]{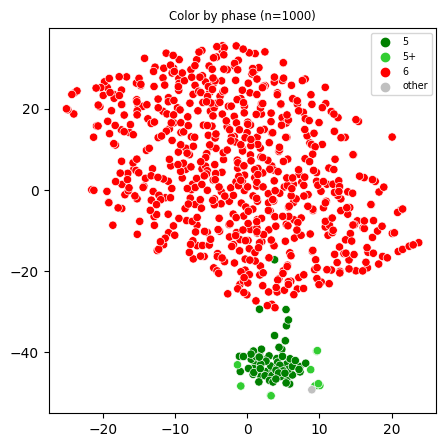}
         \includegraphics[width=0.45\textwidth]{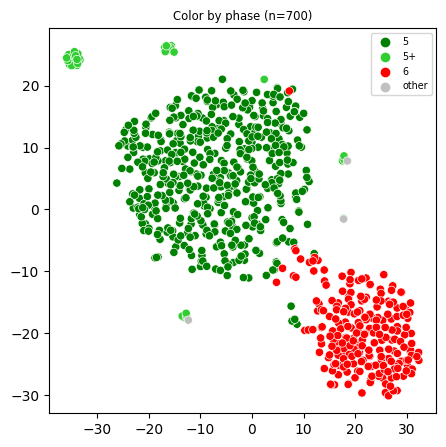} \\
         \includegraphics[width=0.45\textwidth]{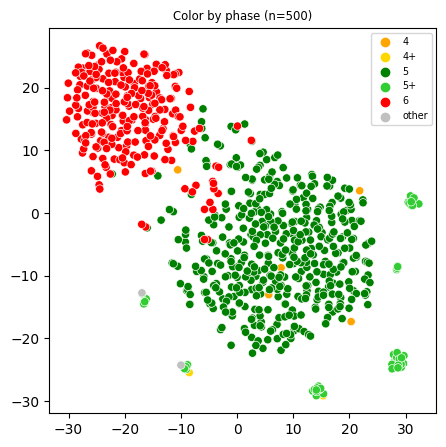}
         \includegraphics[width=0.45\textwidth]{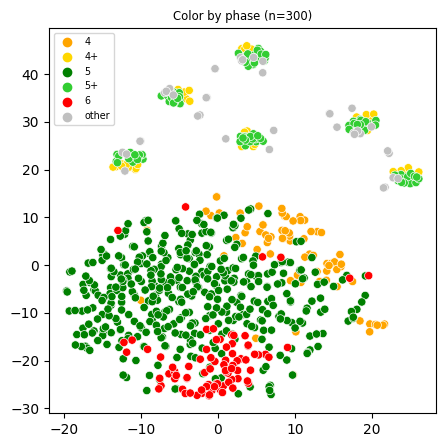}
         \caption{t-SNE plots of MCMC samples from the posterior at a range of sample sizes $n$ encompassing the phase transition from the $5$-gon to the $6$-gon.}
         \label{fig:tsne}
\end{figure}

\subsection{MCMC Health}

MCMC sampling for high dimensional posterior distributions is challenging. In our case there is the added challenge of the posterior being multi-modal (the posterior density has local maxima at the dominant phases) where the modes are not points, but submanifolds of varying codimension. To ensure that the proportion of MCMC samples that falls into $\paramspace_{k, \sigma}$ is a good reflection of the probability of $\paramspace_{k, \sigma}$, we need to make sure that our Markov chains are well mixed. 

For this purpose, we produce and check two different types of diagnostic plots for each MCMC run:
\begin{itemize}
    
    \item \textbf{Theoretical loss trace plots.} We plot the theoretical loss of each MCMC sample against its sample index which orders the samples in each MCMC chain in increasing order of MCMC iterations required to generate the sample. An unhealthy MCMC chain will show up on such a plot as points occupying a very narrow band of theoretical loss values. 

    \item \textbf{Phase type trace plots. } On the same trace plots, we color each sample by their phase classification. Successful posterior sampling should produce samples in each phase with a frequency that is roughly the same as the posterior probability of that phase. While we do not know the true probability of a given phase, we can cross reference each MCMC chain with other chains performing sampling on the same posterior to see that every MCMC chain visits phases discovered by any other MCMC chain. An unhealthy MCMC chain will show up on such a plot as a chain that only contains samples of one phase type when there is more than one phase type observed across all chains. 
\end{itemize}

Figure \ref{fig:mcmc_traceplot} shows a examples of such diagnostic trace plots for a few experiments (with $\ncols = 6$ and matching those in Figure \ref{fig:tsne}) at $n = 300, 1000$. All six chains run in these experiments are plotted on the same plot and distinguished by color. At the higher sample size $n=1000$, we expect and do indeed observe that a particular phase, the $6$-gon, dominates the posterior but every chain visits sub-dominant phases as well. 

The diagnostics detect no sign of problems for the experiments used to establish the phase occupancy curves in Figures \ref{fig:kgons_m2ncols6}, \ref{fig:kgons_m2ncols4} and \ref{fig:kgons_m2ncols5}. However, we do observe that MCMC fails for sample sizes $n$ significantly greater than those we report in this paper. With large sample sizes, the posterior distribution becomes highly concentrated at each phase, posing a significant challenge for an MCMC chain to escape its starting point (controlled by random initialization and the trajectory of the burn-in phase). Figure \ref{fig:mcmc_unhealthytrace} shows an example at $n=4000$, where we see 
\begin{itemize}
    \item A chain (colored pink) which, for many iterations, produces samples in a very narrow band of loss. 
    \item Most chains have a starting point falling into the $5^+$-gon phase and rarely escape (only the red chain found the lower loss $6$-gon region). 
    \item The proportion of $6$-gons is mostly determined by how many chains have their starting point already in the $6$-gon phase. In this run, this proportion is dominated by the last orange chain. 
\end{itemize}

\begin{figure}[ht]
         \centering
         \includegraphics[width=0.45\textwidth]{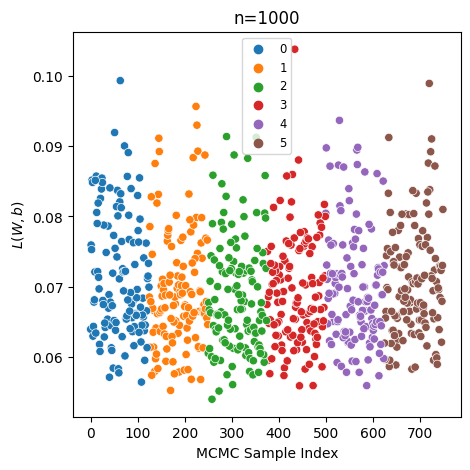}
         \includegraphics[width=0.45\textwidth]{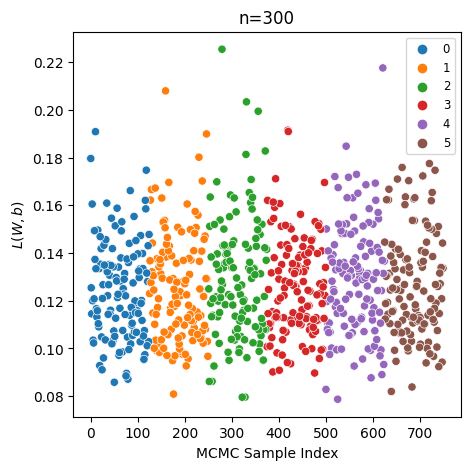} \\
         \includegraphics[width=0.45\textwidth]{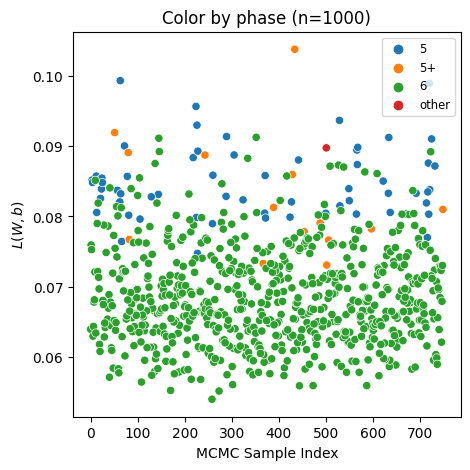}
         \includegraphics[width=0.45\textwidth]{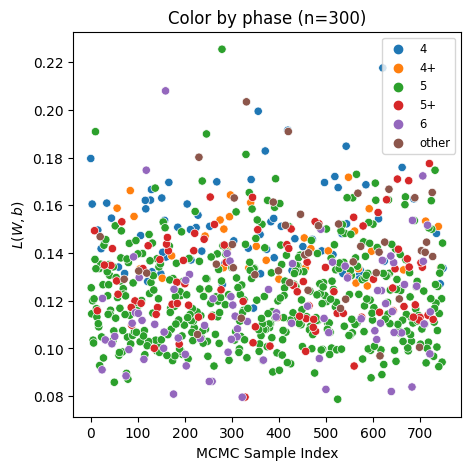}

         \caption{Trace plots displaying theoretical loss of the MCMC samples ordered by their MCMC iteration number and colored by MCMC chain index (top) and the same scatter plot but colored by phase classification (bottom).}
         \label{fig:mcmc_traceplot}
\end{figure}

\begin{figure}[ht]
         \centering
         \includegraphics[width=0.45\textwidth]{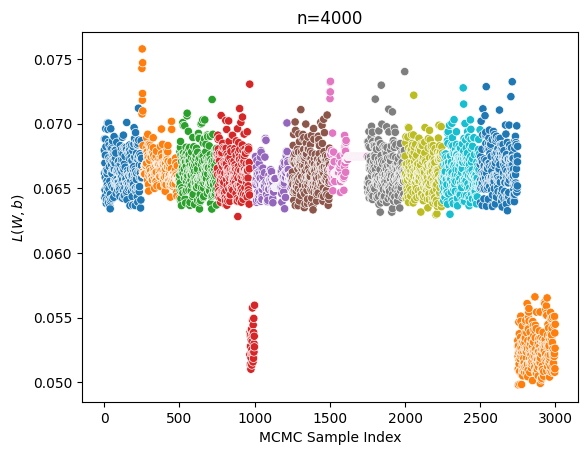}
         \includegraphics[width=0.45\textwidth]{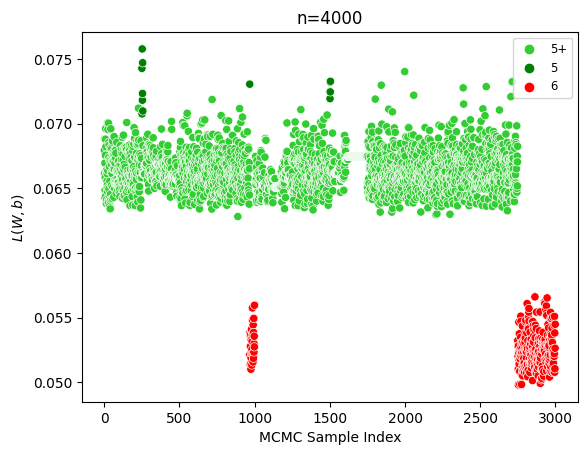} 
         \caption{Unhealthy trace plots.}
         \label{fig:mcmc_unhealthytrace}
\end{figure}

\clearpage
\newpage
\subsection{Theory and experiments for $c = 4$ and $c = 5$}\label{section:c4c5}
In this section we repeat the analysis of Section \ref{section:posterior_mcmc_exp} for $c = 4$ and $c = 5$ with the same experimental setup as explained at the beginning of that section. When $\ncols = 4$ the $4$-gon is a true parameter (it has zero loss) and any $3$-gon is not, so the theory predicts that the $4$-gon must dominate the posterior for all $n$, as seen in Figure \ref{fig:kgons_m2ncols4}. 

\begin{table}[h!]
    \centering
    \begin{tabular}{||c | c | c ||} 
    \hline
    Critical point & Local $\RLCT$ $\lambda$ & Loss $L$ \\ 
    \hline
    $4$-gon &  $4$, $4.5$, $5$, $5.5$ & $0$ \\
    \hline
    \end{tabular}
    \caption{$\nrows=2, \ncols = 4$}
    \label{tab:m2ncols4}
\end{table}

\begin{figure}[h]
         \centering
         \includegraphics[width=0.5\textwidth]{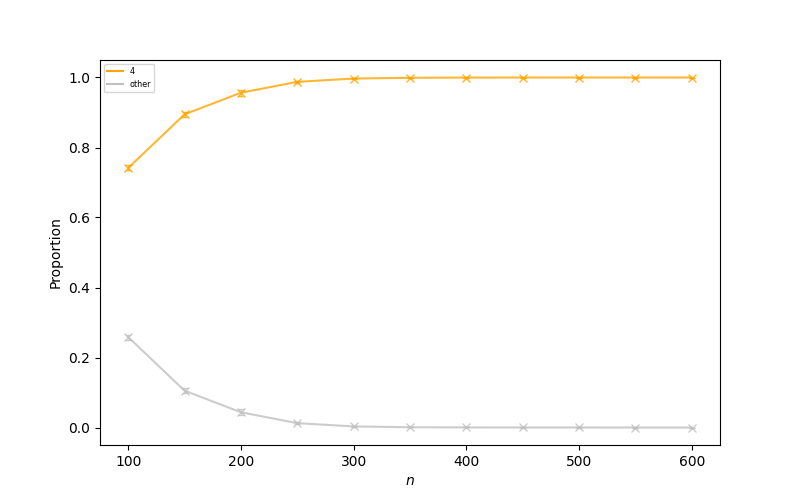}
         \caption{$\nrows=2, \ncols=4$. The standard $4$-gon dominates for all $n$.}
         \label{fig:kgons_m2ncols4}
\end{figure}

When $\ncols = 5$ the theory and experimental curves in Figure \ref{fig:kgons_m2ncols5} show the $4 \rightarrow 5$ transition. Note that $4^{+}$ is correctly predicted to never dominate the posterior despite having lower energy than the standard $4$-gon.

\begin{table}[h!]
    \centering
    \begin{tabular}{||c | c | c ||} 
    \hline
    Critical point & Local $\RLCT$ $\lambda$ & Loss $L$ \\ 
    \hline
    $4$-gon &  $4$, $4.5$, $5$, $5.5$ & $0.06667$ \\
    \hline
    $4^+$-gon &  $5, 5.5, 6, 6.5$ & $0.05667$ \\
    \hline
    $5^{-}$-gon &  $7$ & $0.01583$ \\
    \hline
    \end{tabular}.
    \caption{$\nrows=2, \ncols = 5$}
    \label{tab:m2ncols5}
\end{table}

\begin{figure}[h!]
         \centering
         \begin{minipage}{0.5\textwidth} 
            \centering
             \includegraphics[width=0.95\textwidth]{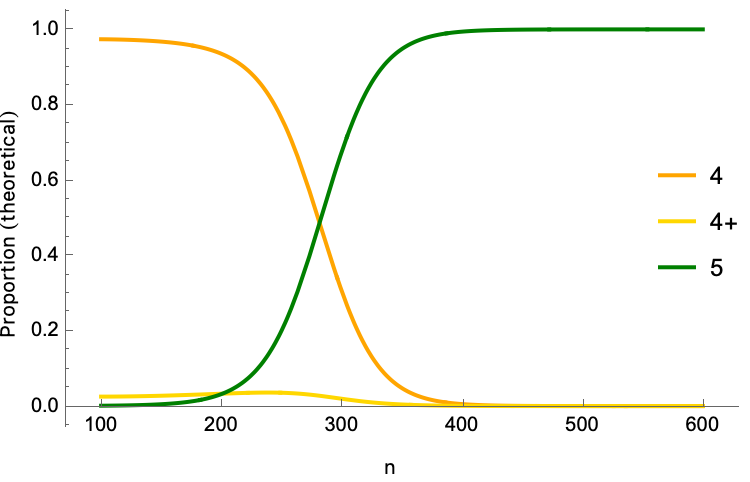}
        \end{minipage}\hfill
        \begin{minipage}{0.5\textwidth} 
            \centering
            \includegraphics[width=0.95\textwidth]{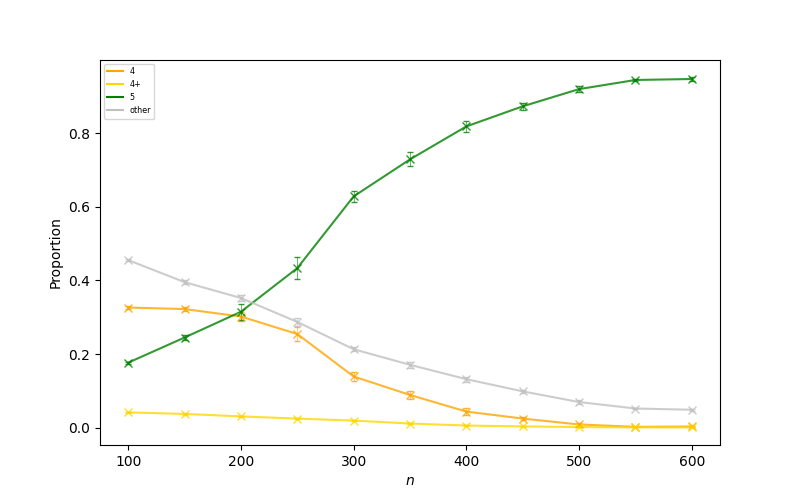}
        \end{minipage}
         \caption{Proportion of Bayesian posterior density concentrated in regions $\paramspace_{k,\sigma}$ associated to $k$-gons, as a function of the number $n$ of samples for $\nrows=2, \ncols=5$.}
         \label{fig:kgons_m2ncols5}
\end{figure}

We note that the classification of MCMC samples in $\ncols = 4, 5$ described in this section is slightly different from what was described for $\ncols = 6$ in Section \ref{section:posterior_mcmc_exp}. The main reason being that we need to handle variants of the $4$-gon more carefully in $\ncols = 4, 5$. 
\begin{itemize}
    \item For $\ncols = 4$, we classify a sample $(W, b)$ only by the number of vertices on the convex hull formed by the column vectors with no additional subcategories defined by the number of positive biases. Theoretically we know that there are no critical $4$-gons with positive bias and the standard $4$-gon is a critical point with all zero bias and is thus susceptible to misclassification even with slight perturbation when the the number of positive biases is counted. 

    \item For $\ncols = 5$, the situation is similar except for one extra case where need to allow for the possibility of a $4^{+}$-gon. A sample $(W,b)$ is classified as a $4^{+}$-gon when it has $4$ vertices in its convex hull, and if $b_i > 0$ then $l_i = \Vert W_i \Vert < 0.5$. 
\end{itemize}

In the cases $\ncols = 4, 5$ we also manually verify the dominant phases by visually inspecting t-SNE clusters of MCMC samples at multiple sample sizes. 
 


%
%
\clearpage
\newpage
\section{Potential in local coordinates}\label{section:high_sparsity_limit}

\begin{proof}[Proof of Lemma \ref{lemma:L_vs_H}] By definition
\begin{align*}
    L(W, b) = \frac{1}{\ncols} &\left(\sum_{i \ne j}\int_0^1 \ReLU(W_j \cdot W_i x_i + b_j)^2 dx_i \right.\\ 
    &\left.+ \sum_{i=1}^\ncols\int_0^1 \left(x_i - \ReLU(\Vert W_i \Vert^2x_i + b_i)\right)^2 dx_i\right).
\end{align*}
Let $i, j \in \{1, \dots, \ncols\}$ be such that $i \ne j$. To compute the integral
\[
    \int_0^1 \ReLU(W_j \cdot W_i x_i + b_j)^2 dx_i,
\]
we first find the region in $[0, 1]$ on which $W_j \cdot W_i x_i + b_j \geq 0$. Let
\[
    D_{j, i} = \{x_i \in [0, 1] \, | \, W_j \cdot W_i x_i + b_j \geq 0\}.
\]
\begin{enumerate}
    \item  If $W_j \cdot W_i > 0$ and $-W_j \cdot W_i \leq b_j \leq 0$, then
    \[
        D_{j, i} = \left[\frac{-b_j}{W_j \cdot W_i}, 1\right].
    \]
    \item If $W_j \cdot W_i > 0$ and $b_j \leq -W_j \cdot W_i$, then
    \[
        D_{j, i} = \emptyset.
    \]
    \item If $W_j \cdot W_i = 0$ and $b_j = 0$, then
    \[
        D_{j, i} = [0, 1].
    \]
    Note that in this case, $W_j \cdot W_i x_i + b_j = 0$ for all $x_i \in [0, 1]$.
    \item If $W_j \cdot W_i = 0$ and $b_j < 0$, then
    \[
        D_{j, i} = \emptyset.
    \]
    \item If $W_j \cdot W_i < 0$ and $b_j \leq 0$, then
    \[
        D_{j, i} = \emptyset.
    \]
    \item If $W_j \cdot W_i > 0$ and $b_j > 0$, then
    \[
        D_{j, i} = [0, 1].
    \]
    \item If $W_j \cdot W_i = 0$ and $b_j > 0$, then
    \[
        D_{j, i} = [0, 1].
    \]
    \item If $W_j \cdot W_i < 0$ and $b_j \geq -W_j \cdot W_i > 0$, then
    \[
        D_{j, i} = [0, 1].
    \]
    \item If $W_j \cdot W_i < 0$ and $-W_j \cdot W_i > b_j > 0$, then
    \[
        D_{j, i} = \left[0, \frac{-b_j}{W_j \cdot W_i}\right]
    \]
\end{enumerate}
Recall the definition of $P_{j,i}$, $P_i$, and $Q_{j, i}$ from (Lemma \ref{lemma:L_vs_H}). Then for $b_j \leq 0$,
\begin{align*}
    \int_0^1 \ReLU(W_j \cdot W_i x_i + b_j)^2 dx_i &= \delta(P_{j, i})\int_{-b_j/(W_j \cdot W_i)}^1 (W_j \cdot W_i x_i + b_j)^2 dx_i \\
    &= \delta(P_{j, i})\left[\frac{1}{3W_j \cdot W_i}(W_j \cdot W_i x_i + b_j)^3\right]^1_{-b_j/(W_j \cdot W_i)} \\
    &= \delta(P_{j, i})\frac{1}{3W_j \cdot W_i}(W_j \cdot W_i + b_j)^3.
\end{align*}
and for $b_j > 0$,
\begin{align*}
    \int_0^1 \ReLU(W_j \cdot W_i x_i + b_j)^2 dx_i &= \delta(Q_{j, i})\int^{-b_j/(W_j \cdot W_i)}_0 (W_j \cdot W_i x_i + b_j)^2 dx_i \\
    & \quad + \big(1 - \delta(Q_{j,i})\big) \int^1_0 (W_j \cdot W_i x_i + b_j)^2 dx_i \\
    &= \delta(Q_{j, i})\left[\frac{1}{3W_j \cdot W_i}(W_j \cdot W_i x_i + b_j)^3\right]_0^{-b_j/(W_j \cdot W_i)} \\
    & \quad + \big(1 - \delta(Q_{j, i})\big) \left[\frac{1}{3W_j \cdot W_i}(W_j \cdot W_i x_i + b_j)^3\right]_0^1 \\
    &= \delta(Q_{j, i})\frac{1}{3}\left( \frac{-b_j^3}{W_j \cdot W_i} \right) \\
    & \quad + \big(1 - \delta(Q_{j, i})\big) \frac{1}{3}[(W_j \cdot W_i)^2 + 3(W_j \cdot W_i)b_j + 3b_j^2]
\end{align*}
It remains to compute
\[
    \int_0^1\left(x_i - \ReLU(\Vert W_i \Vert^2 x_i + b_i)\right)^2 dx_i
\]
for each $i \in \{1, \dots, \ncols\}$. We first find the region in $[0, 1]$ on which $\Vert W_i \Vert^2x_i + b_i \geq 0$. Let
\[
    D_i = \{x_i \in [0, 1] \, | \, \Vert W_i \Vert^2x_i + b_i \geq 0\}.
\]
\begin{enumerate}
    \item If $\Vert W_i \Vert^2 > 0$ and $-\Vert W_i \Vert^2 \leq b_i \leq 0$, then
    \[
        D_i = \left[\frac{-b_i}{\Vert W_i \Vert^2}, 1\right].
    \]
    \item If $\Vert W_i \Vert^2 > 0$ and $b_i \leq -\Vert W_i \Vert^2$, then
    \[
        D_i = \emptyset.
    \]
    In this case
    \[
        \int_0^1 \left(x_i - \ReLU(\Vert W_i \Vert^2x_i + b_i)\right)^2 dx_i = \int_0^1 x_i^2 dx_i = \frac{1}{3}.
    \]
    \item If $\Vert W_i \Vert^2 = 0$ and $b_i = 0$, then
    \[
        D_i = [0, 1].
    \]
    Note that in this case, $\Vert W_i \Vert^2x_i + b_i = 0$ for all $x_i \in [0, 1]$. So
    \[
        \int_0^1 \left(x_i - \ReLU(\Vert W_i \Vert^2x_i + b_i)\right)^2 dx_i = \int_0^1 x_i^2 dx_i = \frac{1}{3}.
    \]
    \item If $\Vert W_i \Vert^2 = 0$ and $b_i < 0$, then
    \[
        D_i = \emptyset.
    \]
    In this case
    \[
        \int_0^1 \left(x_i - \ReLU(\Vert W_i \Vert^2x_i + b_i)\right)^2 dx_i = \int_0^1 x_i^2 dx_i = \frac{1}{3}.
    \]
    \item If $\Vert W_i \Vert^2 > 0$ and $b_i > 0$, then 
    \[
        D_i = [0, 1].
    \]
    \item If $\Vert W_i \Vert^2 = 0$ and $b_i > 0$, then
    \[
        D_i = [0, 1].
    \]
\end{enumerate}
For $i$ with $b_i \leq 0$, on $P_i$,
\begin{align*}
    \int_0^1 \left(x_i - \ReLU(\Vert W_i \Vert^2 x_i + b_i)\right)^2 dx_i &= \int_{-b_i/\Vert W_i \Vert^2}^1 (x_i - \Vert W_i \Vert^2 x_i - b_i)^2 dx_i \\
    &\quad \quad+ \int_0^{-b_i/\Vert W_i \Vert^2}x_i^2 dx_i \\
    &= \int_{-b_i/\Vert W_i \Vert^2}^1 \left((1 - \Vert W_i \Vert^2)x_i - b_i\right)^2 dx_i \\
    &\quad \quad+ \left[\frac{1}{3}x_i^3\right]^{-b_i/\Vert W_i \Vert^2}_0.
\end{align*}
If $\Vert W_i \Vert \ne 1$, then the integral is equal to
\[
    \frac{1}{3}\left\{(1 - \Vert W_i \Vert^2)^2 - 3(1 - \Vert W_i \Vert^2)b_i + 3b_i^2 + \frac{b_i^3}{\Vert W_i \Vert^4} + \frac{b_i^3}{\Vert W_i \Vert^2}\right\}.
\]
If $\Vert W_i \Vert = 1$, then the integral is equal to
\[
    \frac{1}{3}(3b_i^2 + 2b_i^3).
\]
Since
\begin{align*}
    &\lim_{\Vert W_i \Vert \to 1} \frac{1}{3}\left\{(1 - \Vert W_i \Vert^2)^2 - 3(1 - \Vert W_i \Vert^2)b_i + 3b_i^2 + \frac{b_i^3}{\Vert W_i \Vert^4} + \frac{b_i^3}{\Vert W_i \Vert^2}\right\} \\ &= \frac{1}{3}(3b_i^2 + 2b_i^3),
\end{align*}
We know that in $P_i$,
\begin{align*}
    &\int_0^1 \left(x_i - \ReLU(\Vert W_i \Vert^2 x_i + b_i)\right)^2 dx_i \\
    &= \frac{1}{3}\left\{(1 - \Vert W_i \Vert^2)^2 - 3(1 - \Vert W_i \Vert^2)b_i + 3b_i^2 + \frac{b_i^3}{\Vert W_i \Vert^4} + \frac{b_i^3}{\Vert W_i \Vert^2}\right\}.
\end{align*}
For $i$ with $b_i > 0$, 
\begin{align*}
    \int_0^1 \left(x_i - \ReLU(\Vert W_i \Vert^2x_i + b_i)\right)^2 dx_i &= \int_0^1 \left(x_i - (\Vert W_i \Vert^2x_i + b_i)\right)^2 dx_i \\
    &= \frac{1}{3( 1 - \Vert W_i \Vert^2)} \big \{[(1 - \Vert W_i \Vert^2) - b_i]^3 + b_i^3 \big\} \\
    &= \frac{1}{3}\big[ (1 - \Vert W_i \Vert^2)^2 - 3(1 - \Vert W_i \Vert^2) b_i + 3b_i^2 \big]
\end{align*}
Thus, $L(W, b) = \frac{1}{3\ncols} H(W,b)$ as claimed.
\end{proof}

Now we introduce a new coordinate system of the parameter space which is used to analyse the local geometry around a critical point. Let $\mathcal{C} = \{\{i, j\} \, | \, i \ne j \in \{1, 2, \dots, \ncols\}\}$. For a subset $C \subset \mathcal{C}$, define a subset $W_C$ of $M_{\nrows,\ncols}(\mathbb{R})$, called a \textbf{chamber}, by
\[
    W_C = \{W \in M_{\nrows, \ncols}(\mathbb{R}) \, | \, W_i \cdot W_j > 0 \text{ if and only if } \{i, j\} \in C\}.
\] 

Note that
\begin{enumerate}
    \item Some subsets $C$ of $\mathcal{C}$ define an empty chamber $W_C=\emptyset$. For example, when $\nrows = 2$ and $\ncols = 4$, the set 
    \[
        C = \{\{1, 2\}, \{2, 3\}, \{3, 4\}, \{4, 1\}\} \subset \mathcal{C}
    \]
    defines an empty chamber because within this set, to satisfy $W_i\cdot W_{i+1} > 0, W_{i+1}\cdot W_{i+2} > 0, W_i\cdot W_{i+2} \leq 0$ we must have that $W_{i+1}$ is between $W_i$ and $W_{i+1}$ on the circle. Therefore we require that the sum of angles between $W_i$ and $W_{i+1} = 2\pi$, but each of these 4 angles must be more than $\pi/2$ so the configuration isn't possible.
    \item if $C \subset \mathcal{C}$ defines a nonempty chamber $W_C$ containing a $W$ with 
    \[
         W_i \cdot W_j > 0 \, \, \forall \{i, j\} \in C \text{ and } W_u \cdot W_v < 0  \, \,\forall \{u, v\} \notin C,
    \] 
    then $W_C$ contains the open subset
    \[
        \{W \in M_{\nrows, \ncols}(\mathbb{R}) \, | \, W_i \cdot W_j > 0 \, \, \forall \{i, j\} \in C \text{ and } W_u \cdot W_v < 0  \, \,\forall \{u, v\} \notin C \}
    \]
    of $M_{\nrows, \ncols}(\mathbb{R})$;
    \item if $C \subset \mathcal{C}$ defines a nonempty chamber $W_C$, then for any $Q \subset \mathcal{C}$,
    \[
        \{W \in M_{\nrows,\ncols}(\mathbb{R}) \, | \, W_i \cdot W_j = 0 \, \, \forall \{i, j\} \in Q \text{ and } W_u \cdot W_v > 0 \Leftrightarrow \{u, v\} \in C \backslash Q \}
    \]
    defines a boundary of $W_C$;
    \item for any distinct subsets $C$, $Q$ of $\mathcal{C}$, $W_C \cap W_Q = \emptyset$;
    \item the union of all chambers cover $M_{\nrows,\ncols}(\mathbb{R})$.
\end{enumerate}
Consider a nonempty chamber $W_C$ for some $C \subset \mathcal{C}$. Suppose that $W_C$ contains an open subset of $M_{\nrows,\ncols}(\mathbb{R})$. Let $i \ne j \in \{1, 2, \dots, \ncols\}$. If $\{j, i\} \notin C$, then for all $W \in W_C$,
\[
    \delta(P_{j,i}) = 0.
\]
Thus, in $W_C \times \mathbb{R}^\ncols$, for each $i \in \{1, 2, \dots, \ncols\}$, $H_i^{-}(W, b)$ (see Lemma \ref{lemma:L_vs_H})  is given by
\begin{align*}
    H^{-}_i(W,b) &= \sum_{j \ne i: \{j, i\} \in C} \delta(P_{i, j}) \bigg [\frac{1}{W_i \cdot W_j}(W_i \cdot W_j + b_i)^3 \bigg ] \\
    & \quad + \delta(P_i) \bigg [ \frac{b_i^3}{\Vert W_i \Vert^4} + \frac{b_i^3}{\Vert W_i \Vert^2} \bigg ] + (1 - \delta(P_i)) + \delta(P_i) N_i.
\end{align*}

Now we focus on the case $\nrows = 2$. Let $W \in M_{2, \ncols}(\mathbb{R})$. Then $W$ is contained in some chamber $W_C$. In the new parametrization $(l, \theta)$, we can describe the chamber $W_C$ in a different way. Let $(l_1, \dots, l_\ncols, \theta_1, \dots, \theta_\ncols)$ be the coordinate of $W$. For each $i = 1, \dots, \ncols$, a \textbf{wedge} $M_{ij}$ is defined by
\[
    M_{ij} = 
    \begin{cases}
        (i, i+1, \dots, i+j-1), &\text{if $j > 0$ and $\theta_i + \cdots + \theta_{i+j-1} < \frac{\pi}{2}$;} \\
        \emptyset, &\text{if $j = 0$ or $\theta_i + \cdots + \theta_{i+j-1} \geq \frac{\pi}{2}$},
    \end{cases} 
\]
where addtions are computed cyclically. For each $i = 1, \dots, \ncols$, let $t(i)$ denote the integer such that
\[
    \theta_i + \cdots + \theta_{i+t(i)-1} < \frac{\pi}{2} \text{ and } \theta_i + \cdots + \theta_{i+t(i)} \geq \frac{\pi}{2}.
\]
We use the convention where $t(i) = 0$ if $\theta_i \geq \pi/2$. Then $M_{ij} = \emptyset$ for all $j \geq t(i)$. So we can list the set of all wedges $\mathcal{M} = \{M_{ij}\}$ into a table:
\begin{center}
\begin{tabular}{||c c c c c||} 
\hline
$M_{11}$ & $M_{12}$ & $\cdots$ & $\cdots$ & $M_{1t(1)}$\\ 
\hline
$M_{21}$ & $M_{21}$ & $\cdots$ & $\cdots$ & $M_{2t(2)}$ \\
\hline
\vdots &  &  &  &  \\
\hline
\vdots &  &  &  &  \\
\hline
$M_{(\ncols-1)1}$ & $M_{(\ncols-1)2}$ & $\cdots$ & $\cdots$ & $M_{(\ncols-1)t(\ncols-1)}$ \\ 
\hline
$M_{\ncols 1}$ & $M_{\ncols 2}$ & $\cdots$ & $\cdots$ & $M_{\ncols t(\ncols)}$ \\
\hline
\end{tabular}
\end{center}

This set of wedges $\mathcal{M}$ describes a chamber $W_C$ containing $W$, where
\begin{align*}
    C = \big\{ \{1, 2\}, &\{1,3\}, \dots, \{1, t(1) + 1\}, \{2, 3\}, \{2, 4\}, \dots, \{2, t(2) + 2\}, \dots, \\
    &\dots, \{\ncols-1, \ncols\}, \dots, \{\ncols-1, t(\ncols-1) + \ncols - 1\}, \{\ncols, 1\}, \dots, \{\ncols, t(\ncols) + \ncols \} \big\},
\end{align*}
and the addtions are computed cyclically. For example, if $\ncols = 5$, then a $5$-gon has coordinate
\begin{align*}
    &l_1 = l_2 = l_3 = l_4 = l_5; \\
    &\theta_1 = \theta_2 = \theta_3 = \theta_4 = \frac{2\pi}{5}.
\end{align*}
It is contained in the interior of the chamber $W_C$, where 
\[
    C = \{ \{1, 2\}, \{2, 3\}, \{3, 4\}, \{4, 5\}, \{5, 1\} \}.
\]
This chamber is described by the set of wedges:
\begin{center}
    \begin{tabular}{||c||} 
    \hline
    $(1)$    \\ 
    \hline
    $(2)$     \\
    \hline
    $(3)$       \\
    \hline
    $(4)$   \\
    \hline
    $(5)$   \\ 
    \hline
    \end{tabular}.
\end{center}

Now let $W_C \subset M_{2, \ncols}(\mathbb{R})$ be a nonempty chamber. Suppose that $W_C$ contains an open subset of $M_{2, \ncols}(\mathbb{R})$. Let $\mathcal{M}$ be the set of wedges describing $W_C$. Set
\begin{enumerate}
    \item $T_{M_{ij}}^{(1)} = \left\{(l, \theta, b) \, | \, -l_i l_{i+j} \cos\left(\sum_{k \in M_{ij}} \theta_k \right) \leq b_{i+j}\right\}$;
    \item $T_{M_{ij}}^{(2)} = \left\{(l, \theta, b) \, | \, -l_i l_{i+j} \cos\left(\sum_{k \in M_{ij}} \theta_k \right) \leq b_i \right\}$;
    \item $T_i = \{(l, \theta, b) \, | \, -l_i^2 \leq b_i\}$;
    \item $S_{ij} = \{(l, \theta, b) \, | \, -l_i l_j \cos(\theta_i + \cdots \theta_{j-1}) > b_i > 0 \}$.
\end{enumerate}
Then in $W_C \times \mathbb{R}^\ncols$, the TMS potential $H(l, \theta, b)$ in the new parametrization is
\begin{equation} \label{eq:fulllocalpotential}
    H(l, \theta, b) = \sum_{i=1}^\ncols \delta(b_i \leq 0) H^{-}_i(l, \theta, b) + \delta(b_i > 0) H^{+}_i(l, \theta, b),
\end{equation}
where 
\begin{align*}
    H^{-}_i(l, \theta, b) &= \sum_{j: M_{(i-j)j} \in \mathcal{M}} \delta(T_{M_{(i-j)j}}^{(1)}) \frac{\big[ l_{i-j} l_i \cos\big( \sum_{k \in M_{(i-j)j}} \theta_k \big) + b_i \big]^3}{l_{i-j} l_i \cos\big( \sum_{k \in M_{(i-j)j}} \theta_k \big)}  \\
    & \quad  + \sum_{j: M_{ij} \in \mathcal{M}} \delta(T_{M_{ij}}^{(2)}) \frac{\big[ l_i l_{i+j} \cos\big( \sum_{k \in M_{ij}} \theta_k \big) + b_i \big]^3}{l_i l_{i+j} \cos\big( \sum_{k \in M_{ij}} \theta_k \big)} \\
    & \quad  + \delta(T_i) \bigg [ (1 - l_i^2)^2 - 3(1 - l_i^2)b_i + 3b_i^2 + \frac{b_i^3}{l_i^4} + \frac{b_i^3}{l_i^2} \bigg ] + \big(1 - \delta(T_i)\big) 
\end{align*}
and
\begin{align*}
    H^{+}_i(l, \theta, b) &= \sum_{j \ne i} \delta(S_{ij}) \bigg[ \frac{- b_i^3}{l_i l_j \cos(\theta_i + \theta_{i+1} + \cdots \theta_{j-1})} \bigg] \\ 
    & \quad + \big(1-\delta(S_{ij})\big) \bigg [ \big(l_i l_j \cos(\theta_{ij})\big)^2 + 3\big(l_i l_j \cos(\theta_{ij})\big)b_i + 3b_i^2 \bigg ] \\
    & \quad  +  \big [(1 - l_i^2)^2 - 3(1 - l_i^2)b_i + 3b_i^2 \big ],
\end{align*}
where $\theta_{ij} = \theta_i + \theta_{i+1} + \cdots + \theta_{j-1}$ is the angle between $W_i$ and $W_j$.
\begin{remark}
If a critical point of $H$ is contained in the interior of a chamber (see Appendix \ref{section:regulargon_kequalsl}), then there is an open neighbourhood of the critical point in which $H$ is of the above form. However, critical points are not always contained in the interior of some chamber. If a critical point is contained in the boundary of different chambers (see Appendix \ref{section:4gons} and Appendix \ref{section:regularkgon_klessl}), then extra efforts are required for analysing the local geometry around the critical point. 
\end{remark}

%
%
\clearpage
\newpage
\section{Derivation of local learning coefficients}\label{section:theory_local_learning}

\subsection{$k$-gons with negtive bias for $k = \ncols \notin 4\mathbb{Z}$}\label{section:regulargon_kequalsl}

In this section, we show that for $\ncols \in \{5, 6, 7\}$, the $\ncols$-gon with coordinate
\[
    l_1 = \cdots = l_{\ncols} = x^*, \quad \theta_1 = \cdots = \theta_{\ncols} = \alpha = \frac{2\pi}{\ncols}, \quad b_1 = \cdots = b_{\ncols} = y^*,
\]
where the values of $x^*$ and $y^*$ is given in Table \ref{tab:cgonparam}, is a non-degenerate critical point of the TMS potential. Therefore, the local $\RLCT$ of $\ncols$-gon is $(3\ncols - 1)/2$.
\begin{table}[h!]
    \centering
    \begin{tabular}{||c | c | c ||} 
    \hline
    $\ncols$ & $x^*$ & $y^*$\\ 
    \hline
    $5$ & $1.17046$ & $-0.28230$\\
    \hline
    $6$ & $1.32053$ & $-0.61814$\\
    \hline
    $7$ & $1.44839$ & $-0.96691$\\
    \hline
    \end{tabular}
    \caption{Parameters of $\ncols$-gons.}
    \label{tab:cgonparam}
\end{table}

Let $\ncols$ be an integer greater than or equal to $4$. We consider the case where $\ncols$ is not a multiple of $4$. Consider the $\ncols$-gon with coordinate $(l^*, \theta^*, b^*)$
\[
    l^*: l_1 = \cdots = l_{\ncols} = x, \quad \theta^*: \theta_1 = \cdots = \theta_{\ncols} = \alpha = \frac{2\pi}{\ncols}, \quad b^*: b_1 = \cdots = b_{\ncols} = y,
\]
for some $x > 0$ and $y < 0$. The chamber containing $\ncols$-gons is described by the following wedges (see Appendix \ref{section:high_sparsity_limit}):
\begin{center}
\begin{tabular}{||c c c c c||} 
\hline
$(1)$ & $(1, 2)$ & $\cdots$ & $\cdots$ & $(1, 2, \dots, s)$\\ 
\hline
$(2)$ & $(2, 3)$ & $\cdots$ & $\cdots$ & $(2, 3, \dots, s+1)$ \\
\hline
$\vdots$ & $\vdots$ & $\cdots$ & $\cdots$ & $\vdots$ \\
\hline
$(\ncols+1-s)$ & $(\ncols+1-s, \ncols+2-s)$ & $\cdots$ & $\cdots$ & $(\ncols+1-s, \ncols+2-s, \dots, \ncols)$ \\
\hline
$(\ncols+2-s)$ & $(\ncols+2-s, \ncols+3-s)$ & $\cdots$ & $\cdots$ & $(\ncols+2-s, \ncols+3-s, \dots, \ncols, 1)$ \\ 
\hline
$\vdots$ & $\vdots$ & $\cdots$ & $\cdots$ & $\vdots$ \\
\hline
$(\ncols-1)$ & $(\ncols-1, \ncols)$ & $\cdots$ & $\cdots$ & $(\ncols-1, \ncols, 1, \dots, s-2)$ \\
\hline
$(\ncols)$ & $(\ncols, 1)$ & $\cdots$ & $\cdots$ & $(\ncols, 1, 2, \dots, s-1)$ \\
\hline
\end{tabular}
\end{center}
where $s$ is the unique integer in the interval $\left[\frac{\ncols}{4}-1, \frac{\ncols}{4}\right)$. Let $M_{ij}$ be the wedge in the $(i, j)$-position in the above table. Then $M_{ij} = (i, i+1, \dots, i+j-1)$, where additions are computed cyclically. Then the local TMS potential (see Appendix \ref{section:high_sparsity_limit}) is 
\begin{align*}
    H(l, \theta, b) = &\sum_{M_{ij} \in \mathcal{M}} \delta(T_{M_{ij}}^{(1)}) \frac{1}{l_il_{i+j}\cos\left(\sum_{k \in M_{ij}}\theta_k\right)}\left(l_il_{i+j}\cos\left(\sum_{k \in M_{ij}}\theta_k\right) + b_{i+j}\right)^3 \\ 
    &+ \sum_{M_{ij} \in \mathcal{M}} \delta(T_{M_{ij}}^{(2)}) \frac{1}{l_il_{i+j}\cos\left(\sum_{k \in M_{ij}}\theta_k\right)}\left(l_il_{i+j}\cos\left(\sum_{k \in M_{ij}}\theta_k\right) + b_i\right)^3 \\
    &+ \sum_{i=1}^{\ncols} \delta(T_i) \left[(1 - l_i^2)^2 - 3(1 - l_i^2)b_i + 3b_i^2 + \frac{b_i^3}{l_i^4} + \frac{b_i^3}{l_i^2}\right] + (1-\delta(T_i)),
\end{align*}
where
\begin{enumerate}
    \item $T_{M_{ij}}^{(1)} = \left\{(l, \theta, b) \, | \, -l_i l_{i+j} \cos\left(\sum_{k \in M_{ij}} \theta_k \right) \leq b_{i+j}\right\}$;
    \item $T_{M_{ij}}^{(2)} = \left\{(l, \theta, b) \, | \, -l_i l_{i+j} \cos\left(\sum_{k \in M_{ij}} \theta_k \right) \leq b_i \right\}$;
    \item $T_i = \{(l, \theta, b) \, | \, -l_i^2 \leq b_i\}$;
    \item $\theta_1 + \theta_2 + \cdots + \theta_{\ncols} = 2\pi$.
\end{enumerate}
Consider the open subset of the parameter space defined by
\[
    -l_i l_{i+j} \cos\left(\sum_{k \in M_{ij}} \theta_k \right) < b_{i+j}, \quad -l_i l_{i+j} \cos\left(\sum_{k \in M_{ij}} \theta_k \right) < b_i
\]
for all $M_{ij}$ and
\[
    -l_i^2 < b_i
\]
for all $i = 1, \dots, \ncols$. In this open subset, we have
\begin{align*}
    H(l, \theta, b) = &\sum_{M_{ij}} \frac{1}{l_il_{i+j}\cos\left(\sum_{k \in M_{ij}}\theta_k\right)}\left(l_il_{i+j}\cos\left(\sum_{k \in M_{ij}}\theta_k\right) + b_{i+j}\right)^3 \\ 
    &+ \sum_{M_{ij}}\frac{1}{l_il_{i+j}\cos\left(\sum_{k \in M_{ij}}\theta_k\right)}\left(l_il_{i+j}\cos\left(\sum_{k \in M_{ij}}\theta_k\right) + b_i\right)^3 \\
    &+ \sum_{i=1}^{\ncols} \left[(1 - l_i^2)^2 - 3(1 - l_i^2)b_i + 3b_i^2 + \frac{b_i^3}{l_i^4} + \frac{b_i^3}{l_i^2}\right],
\end{align*}
with constraint $\theta_1 + \theta_2 + \cdots + \theta_{\ncols} = 2\pi$. It follows from the Lagrangian multiplier method that a point $(l^*, \theta^*, b^*)$ is a critical point of $H(l, \theta, b)$ with constraint $\theta_1 + \cdots +\theta_{\ncols} = 2\pi$ if and only if there exists $\lambda \in \mathbb{R}$ such that for all $a = 1, 2, \dots, \ncols$,
\begin{enumerate}
    \item $\frac{\partial}{\partial \theta_a} \bigg|_{l = l^*, \theta = \theta^*, b = b^*}H(l, \theta, b) = \lambda$;
    \item $\frac{\partial}{\partial l_a} \bigg|_{l = l^*, \theta = \theta^*, b = b^*}H(l, \theta, b) = 0$;
    \item $\frac{\partial}{\partial b_a} \bigg|_{l = l^*, \theta = \theta^*, b = b^*}H(l, \theta, b) = 0$.
\end{enumerate}
We compute the partial derivative of $H(l, \theta, b)$ with respect to $b_a$:
\begin{align*}
    \frac{\partial}{\partial b_a} H(l, \theta, b) &= \sum_{M_{ij} : i+j=a}\frac{3}{l_il_{i+j}\cos\left(\sum_{k \in M_{ij}}\theta_k\right)}\left(l_il_{i+j}\cos\left(\sum_{k \in M_{ij}}\theta_k\right) + b_{i+j}\right)^2 \\
    &+ \sum_{M_{ij}: i = a} \frac{3}{l_il_{i+j}\cos\left(\sum_{k \in M_{ij}}\theta_k\right)}\left(l_il_{i+j}\cos\left(\sum_{k \in M_{ij}}\theta_k\right) + b_i\right)^2 \\
    &-3(1-l_a^2) + 6b_a + \frac{3}{l_a^4}b_a^2 + \frac{3}{l_a^2}b_a^2.
\end{align*}
We list all $M_{ij}$ with $i = a$ and all $M_{ij}$ with $i+j = a$:
\begin{center}
\begin{tabular}{||c c c c c c||} 
\hline
$i=a:$ & $(a)$ & $(a, a+1)$ & $\cdots$ & $\cdots$ & $(a, a+1, \dots, a+s-1)$ \\ 
\hline
$i+j = a:$ & $(a-1)$ & $(a-2 ,a-1)$ & $\cdots$ & $\cdots$ & $(a-s, \dots, a-2 ,a-1)$ \\
\hline
\end{tabular}
\end{center}
Then
\begin{align*}
    \frac{\partial}{\partial b_a} \bigg|_{l = l^*, \theta = \theta^*, b = b^*}H(l, \theta, b) &= x^2\left(3 + 6\sum_{j=1}^s\cos(j\alpha)\right) + \frac{y^2}{x^2}\left(3 + 6 \sum_{j=1}^s\frac{1}{\cos(j\alpha)}\right) \\
    &\quad + y(12s + 6) + 3\frac{y^2}{x^4} - 3.
\end{align*}
Multiplying both sides of the equation
\[
    x^2\left(3 + 6\sum_{j=1}^s\cos(j\alpha)\right) + \frac{y^2}{x^2}\left(3 + 6 \sum_{j=1}^s\frac{1}{\cos(j\alpha)}\right) + y(12s + 6) + 3\frac{y^2}{x^4} - 3 = 0
\]
by $\frac{1}{3}x^4$, we have
\[
    x^6\left(1 + 2\sum_{j=1}^s\cos(j\alpha)\right) + x^2y^2\left(1 + 2 \sum_{j=1}^s\frac{1}{\cos(j\alpha)}\right) + 2x^4y(2s + 1) + y^2 - x^4 = 0.
\]
Let $G(s) = 1 + 2\sum_{j=1}^s\cos(j\alpha)$, $H(s) = 1 + 2\sum_{j=1}^s\frac{1}{\cos(j\alpha)}$, $M(s) = 1 + 2s$. Then we obtain a parametrized polynomial equation in two variables:
\[
    x^6 G(s) + x^2y^2 H(s) + 2x^4y M(s) + y^2 - x^4 = 0.
\]
Now we compute the partial derivative of $H(l, \theta, b)$ with respect to $l_a$:
\begin{align*}
    \frac{\partial}{\partial l_a}H(l, \theta, b) = &\frac{\partial}{\partial l_a}\sum_{M_{ij}} \frac{1}{l_il_{i+j}\cos\left(\sum_{k \in M_{ij}}\theta_k\right)}\left(l_il_{i+j}\cos\left(\sum_{k \in M_{ij}}\theta_k\right) + b_{i+j}\right)^3 \\ 
    &+ \frac{\partial}{\partial l_a}\sum_{M_{ij}}\frac{1}{l_il_{i+j}\cos\left(\sum_{k \in M_{ij}}\theta_k\right)}\left(l_il_{i+j}\cos\left(\sum_{k \in M_{ij}}\theta_k\right) + b_i\right)^3 \\
    &-4(1-l_a^2)l_a + 6l_ab_a -4 \frac{b_a^3}{l_a^5} - 2\frac{b_a^3}{l_a^3}.
\end{align*}
From the list of all $M_{ij}$ with $i = a$ and all $M_{ij}$ with $i+j = a$, we have
\begin{align*}
    &\quad \quad \frac{\partial}{\partial l_a}\sum_{M_{ij}} \frac{1}{l_il_{i+j}\cos\left(\sum_{k \in M_{ij}}\theta_k\right)}\left(l_il_{i+j}\cos\left(\sum_{k \in M_{ij}}\theta_k\right) + b_{i+j}\right)^3 \\
    &= \sum_{j=1}^s \frac{3\left(l_al_{a+j}\cos\left(\sum_{k \in M_{aj}}\theta_k\right) + b_{a+j}\right)^2}{l_a} -\frac{\left(l_al_{a+j}\cos\left(\sum_{k \in M_{aj}}\theta_k\right) + b_{a+j}\right)^3}{l_a^2l_{a+j}\cos\left(\sum_{k \in M_{aj}}\theta_k\right)} \\
    &\quad +\sum_{j=1}^s\frac{3\left(l_{a-j}l_a\cos\left(\sum_{k \in M_{(a-j)j}}\theta_k\right) + b_a\right)^2}{l_a} -\frac{\left(l_{a-j}l_a \cos\left(\sum_{k \in M_{(a-j)j}}\theta_k\right) + b_a\right)^3}{l_{a-j}l_a^2\cos\left(\sum_{k \in M_{(a-j)j}}\theta_k\right)}.
\end{align*}
So
\begin{align*}
    &\quad \quad \frac{\partial}{\partial l_a} \bigg|_{l = l^*, \theta = \theta^*, b = b^*}\sum_{M_{ij}} \frac{1}{l_il_{i+j}\cos\left(\sum_{k \in M_{ij}}\theta_k\right)}\left(l_il_{i+j}\cos\left(\sum_{k \in M_{ij}}\theta_k\right) + b_{i+j}\right)^3 \\
    &= 2\sum_{j=1}^s \left(\frac{3\left(x^2\cos(j\alpha) + y\right)^2}{x} -\frac{\left(x^2\cos(j\alpha) + y\right)^3}{x^3\cos(j\alpha)}\right) 
\end{align*}
Similarly,
\begin{align*}
    &\quad \quad \frac{\partial}{\partial l_a} \bigg|_{l = l^*, \theta = \theta^*, b = b^*}\sum_{M_{ij}} \frac{1}{l_il_{i+j}\cos\left(\sum_{k \in M_{ij}}\theta_k\right)}\left(l_il_{i+j}\cos\left(\sum_{k \in M_{ij}}\theta_k\right) + b_i\right)^3 \\
    &= 2\sum_{j=1}^s \left(\frac{3\left(x^2\cos(j\alpha) + y\right)^2}{x} -\frac{\left(x^2\cos(j\alpha) + y\right)^3}{x^3\cos(j\alpha)}\right) 
\end{align*}
Thus, 
\begin{align*}
    \frac{\partial}{\partial l_a} \bigg|_{l = l^*, \theta = \theta^*, b = b^*}H(l, \theta, b) &=
    x^3\left(4 + 8\sum_{j=1}^s\cos^2(j\alpha)\right) + xy\left(6 + 12\sum_{j=1}^s\cos(j\alpha)\right) \\
    & \quad - \frac{y^3}{x^3}\left(2 + 4\sum_{j=1}^s\frac{1}{\cos(j\alpha)}\right) -4x - 4\frac{y^3}{x^5}.
\end{align*}

Multiplying both sides of the equation
\begin{align*}
    &x^3\left(4 + 8\sum_{j=1}^s\cos^2(j\alpha)\right) + xy\left(6 + 12\sum_{j=1}^s\cos(j\alpha)\right) - \frac{y^3}{x^3}\left(2 + 4\sum_{j=1}^s\frac{1}{\cos(j\alpha)}\right) \\
    &\quad \quad -4x - 4\frac{y^3}{x^5} = 0
\end{align*}
by $\frac{1}{2}x^5$, we have
\begin{align*}
    &2x^8\left(1 + 2\sum_{j=1}^s\cos^2(j\alpha)\right) + 3x^6y\left(1 + 2\sum_{j=1}^s\cos(j\alpha)\right) - x^2y^3\left(1 + 2\sum_{j=1}^s\frac{1}{\cos(j\alpha)}\right) \\
    &\quad \quad -2x^6 - 2y^3 = 0.
\end{align*}
Let $F(s) =1 + 2\sum_{j=1}^s\cos^2(j\alpha)$. Then we obtain a parametrized polynomial equation in two variables:
\[
    2x^8 F(s) + 3x^6y G(s) - x^2y^3 H(s) - 2x^6 - 2y^3 = 0.
\]
Therefore, we need to determine whether the system of two parametrized polynomial equations in two variables
\begin{align*}
    x^6 G(s) + x^2y^2 H(s) + 2x^4y M(s) + y^2 - x^4 &= 0 \\
    2x^8 F(s) + 3x^6y G(s) - zy^3 H(s) - 2x^6 - 2y^3 &= 0
\end{align*}
where
\begin{enumerate}
    \item $F(s) =1 + 2\sum_{j=1}^s\cos^2(j\alpha)$;
    \item $G(s) = 1 + 2\sum_{j=1}^s\cos(j\alpha)$;
    \item $H(s) = 1 + 2\sum_{j=1}^s\frac{1}{\cos(j\alpha)}$;
    \item $M(s) = 1 + 2s$.
\end{enumerate}
have common solutions in $\mathbb{R}_{>0} \times \mathbb{R}_{< 0}$ with $-x^2\cos(s\alpha) < y$ or not . Now we compute the partial derivative of $H(l, \theta, b)$ with respect to $\theta_a$.
\begin{align*}
    \frac{\partial}{\partial \theta_a}H(l, \theta, b) &= \frac{\partial}{\partial \theta_a}\sum_{M_{ij}} \frac{1}{l_il_{i+j}\cos\left(\sum_{k \in M_{ij}}\theta_k\right)}\left(l_il_{i+j}\cos\left(\sum_{k \in M_{ij}}\theta_k\right) + b_{i+j}\right)^3 \\ 
    &\quad \quad + \frac{\partial}{\partial \theta_a}\sum_{M_{ij}}\frac{1}{l_il_{i+j}\cos\left(\sum_{k \in M_{ij}}\theta_k\right)}\left(l_il_{i+j}\cos\left(\sum_{k \in M_{ij}}\theta_k\right) + b_i\right)^3.
\end{align*}
We list all $M_{ij}$ containing $a$:
\begin{center}
\begin{tabular}{||c c c c c c||} 
\hline
$M_{a1}$ & $M_{a2}$ & $\cdots$ & $\cdots$ & $\cdots$ & $M_{as}$\\ 
\hline
$M_{(a-s+1)s}$ &  &  &  & &\\
\hline
$M_{(a-s+2)(s-1)}$ & $M_{(a-s+2)s}$ &  &  & & \\
\hline
\vdots &  &  &  & & \\
\hline
$M_{(a-2)3}$ & $\cdots$ & $\cdots$ & $M_{(a-2)s}$ & & \\ 
\hline
$M_{(a-1)2}$ & $M_{(a-1)3}$ & $\cdots$ & $\cdots$ & $M_{(a-1)s}$ & \\
\hline
\end{tabular}
\end{center}
Rearrange these wedges in terms of the length:
\begin{center}
\begin{tabular}{||c c c c c c c||} 
\hline
length $1$: & $M_{a1}$ &  &  &  & &\\ 
\hline
length $2$: & $M_{a2}$  & $M_{(a-1)2}$ &  & & &\\
\hline
length $3$ & $M_{a3}$ & $M_{(a-1)3}$ & $M_{(a-2)3}$  & & &\\
\hline
\vdots &  &  &  & & &\\
\hline
length $s-1$: & $M_{a(s-1)}$ & $M_{(a-1)(s-1}$ & $\cdots$ & $M_{(a-s+2)(s-1)}$ & &\\ 
\hline
length $s$: & $M_{as}$ & $M_{(a-1)s}$ & $\cdots$ & $M_{(a-s+2)s}$ & $M_{(a-s+1)s}$ &\\
\hline
\end{tabular}
\end{center}
Note that for $j = 1, \dots, s$, there exactly $j$ wedges of length $j$ containing $a$. Then
\begin{align*}
    &\quad \quad \frac{\partial}{\partial \theta_a}\sum_{M_{ij}} \frac{1}{l_il_{i+j}\cos\left(\sum_{k \in M_{ij}}\theta_k\right)}\left(l_il_{i+j}\cos\left(\sum_{k \in M_{ij}}\theta_k\right) + b_{i+j}\right)^3 \\
    &= \sum_{M_{ij}: a \in M_{ij}} \frac{-3\left[l_il_{i+j}\cos\left(\sum_{k \in M_{ij}}\theta_k\right) + b_{i+j}\right]^2\sin\left(\sum_{k \in M_{ij}}\theta_k\right)}{\cos\left(\sum_{k \in M_{ij}}\theta_k\right)} \\
    &\quad + \sum_{M_{ij}: a \in M_{ij}}\frac{\sin\left(\sum_{k \in M_{ij}}\theta_k\right)\left[l_il_{i+j}\cos\left(\sum_{k \in M_{ij}}\theta_k\right) + b_{i+j}\right]^3}{l_il_{i+j}\cos^2\left(\sum_{k \in M_{ij}}\theta_k\right)}. 
\end{align*}
Then 
\begin{align*}
    & \quad \quad \frac{\partial}{\partial \theta_a} \bigg|_{l = l^*, \theta = \theta^*, b = b^*}\sum_{M_{ij}} \frac{1}{l_il_{i+j}\cos\left(\sum_{k \in M_{ij}}\theta_k\right)}\left(l_il_{i+j}\cos\left(\sum_{k \in M_{ij}}\theta_k\right) + b_{i+j}\right)^3 \\
    &= \sum_{j=1}^s j \cdot \left(\frac{-3\left[x^2\cos\left(j\alpha\right) + y\right]^2\sin\left(j\alpha\right)}{\cos\left(j\alpha\right)} +  \frac{\sin\left(j\alpha\right)\left[x^2\cos\left(j\alpha\right) + y\right]^3}{x^2\cos^2\left(j\alpha\right)}\right).
\end{align*}
Similarly,
\begin{align*}
    & \quad \quad \frac{\partial}{\partial \theta_a} \bigg|_{l = l^*, \theta = \theta^*, b = b^*}\sum_{M_{ij}} \frac{1}{l_il_{i+j}\cos\left(\sum_{k \in M_{ij}}\theta_k\right)}\left(l_il_{i+j}\cos\left(\sum_{k \in M_{ij}}\theta_k\right) + b_i\right)^3 \\
    &=  \sum_{j=1}^s j \cdot \left(\frac{-3\left[x^2\cos\left(j\alpha\right) + y\right]^2\sin\left(j\alpha\right)}{\cos\left(j\alpha\right)} +  \frac{\sin\left(j\alpha\right)\left[x^2\cos\left(j\alpha\right) + y\right]^3}{x^2\cos^2\left(j\alpha\right)}\right).
\end{align*}
Thus, 
\begin{align*}
   \frac{\partial}{\partial \theta_a} \bigg|_{l = l^*, \theta = \theta^*, b = b^*}H(l, \theta, b)
    &= 2\sum_{j=1}^s j \cdot \bigg (\frac{-3\left[x^2\cos\left(j\alpha\right) + y\right]^2\sin\left(j\alpha\right)}{\cos\left(j\alpha\right)} \\
    &\quad + \frac{\sin\left(j\alpha\right)\left[x^2\cos\left(j\alpha\right) + y\right]^3}{x^2\cos^2\left(j\alpha\right)} \bigg ),
\end{align*}
which is independent of $a$. So if the system of polynomial equations
\begin{align*}
    x^6 G(s) + x^2y^2 H(s) + 2x^4y M(s) + y^2 - x^4 &= 0 \\
    2x^8 F(s) + 3x^6y G(s) - x^2y^3 H(s) - 2x^6 - 2y^3 &= 0
\end{align*}
has a common solution $(x^*, y^*) \in \mathbb{R}_{>0} \times \mathbb{R}_{< 0}$ with $-x^2 \cos(s\alpha) < y$, then the Lagrangian multiplier is
\[
    2\sum_{j=1}^s j \cdot \left(\frac{-3\left[(x^*)^2\cos\left(j\alpha\right) + y^*\right]^2\sin\left(j\alpha\right)}{\cos\left(j\alpha\right)} +  \frac{\sin\left(j\alpha\right)\left[(x^*)^2\cos\left(j\alpha\right) + y^*\right]^3}{(x^*)^2\cos^2\left(j\alpha\right)}\right)
\]
and the Lagrangian multipler method implies that the   $\ncols$-gon with coordinate
\[
    l_1 = \cdots = l_{\ncols} = x^*, \quad b_1 = \cdots = b_{\ncols} = y^*, \quad \theta_1 = \cdots = \theta_{\ncols} = \alpha = \frac{2\pi}{\ncols},
\]
is a critical point of $H(l, \theta, b)$ with constraint $\theta_1 + \cdots + \theta_{\ncols} = 2\pi$. So it remains to determine whether the system of two parametrized polynomial equations in two variables
\begin{align*}
    x^6 G(s) + x^2y^2 H(s) + 2x^4y M(s) + y^2 - x^4 &= 0 \\
    2x^8 F(s) + 3x^6y G(s) - x^2y^3 H(s) - 2x^6 - 2y^3 &= 0
\end{align*}
have common solutions $(x^*, y^*) \in \mathbb{R}_{>0} \times \mathbb{R}_{\leq 0}$ with $-x^2\cos(s\alpha) < y$ or not. We compute the common solution using Mathematica.
\begin{enumerate}
    \item $\ncols = 5$: in this case, $s = 1$. Then $(x^*, y^*) \approx (1.17046, -0.28230)$ is the unique common solution such that $-x^2\cos(2\pi/5) < y$. Moreover, the Hessian of $H$ at this $5$-gon is non-degenerate. So the $5$-gon with coordinate:
    \begin{align*}
        &l_1 = l_2 = \cdots = l_5 = x \approx 1.17046; \\
        &\theta_1 = \theta_2 = \cdots = \theta_4 = \frac{2\pi}{5}; \\
        &b_1 = b_2 = \cdots = b_5 = y \approx -0.28230,
    \end{align*}
    is a non-degenerate critical point. So its local learning coefficient is $(3\ncols - 1)/2 = 7$.
    \item $\ncols = 6$: in this case $s = 1$. Then $(x^*, y^*) \approx (1.32053, -0.61814)$ is the unique common solution such that $-x^2\cos(\pi/3) < y$.  Moreover, the Hessian of $H$ at this $6$-gon is non-degenerate. So the $6$-gon with coordinate:
    \begin{align*}
        &l_1 = l_2 = \cdots = l_6 \approx 1.32053; \\
        &\theta_1 = \theta_2 = \cdots = \theta_5 = \frac{\pi}{3}; \\
        &b_1 = b_2 = \cdots = b_6 \approx -0.61814,
    \end{align*}
    is a non-degenerate critical point. So its learning coefficient is $(3\ncols - 1)/2 = 8.5$. 
    \item $\ncols = 7$: in this case $s = 1$. Then $(x^*, y^*) \approx (1.44839, -0.96691)$ is the unique common solution such that $-x^2\cos(2\pi/7) < y$. Moreover, the Hessian of $H$ at this $7$-gon is non-degenerate. So the $5$-gon with coordinate:
    \begin{align*}
        &l_1 = l_2 = \cdots = l_7 \approx 1.44839; \\
        &\theta_1 = \theta_2 = \cdots = \theta_6 = \frac{2\pi}{7}; \\
        &b_1 = b_2 = \cdots = b_7 \approx -0.96691,
    \end{align*}
    is a non-degenerate critical point. So itslocal learning coefficient is $(3\ncols - 1)/2 = 10$.
    \item $\ncols = 9$: in this case $s = 2$. There is no common solution.
    \item $\ncols = 10$: in this case $s = 2$. There is no common solution.
    \item $\ncols = 11$: in this case $s = 2$. There is no common solution.
    \item $\ncols = 13$: in this case $s = 3$. There is no common solution.
    \item We checked that for any $9 \leq \ncols \leq 203$ which is not a multiple of $4$, there is no common solution. 
\end{enumerate}


\subsection{$k$-gons with negative bias for $k = \ncols \in 4\mathbb{Z}$}\label{section:8gons}

In this section, we show that for $\ncols = 8$, the $8$-gon with coordinate
\[
    l_1 = l_2 = \cdots = l_8 \approx 1.55041, \quad \theta_1 = \theta_2 = \cdots  = \theta_8 = \frac{\pi}{4}, \quad b_1 = b_2 = \cdots = b_8 \approx -1.29122,
\]
is a non-degenerate critical point of the TMS potential. So the local learning coefficient of $8$-gon is $11.5$. Let $\ncols > 4$ being a multiple of $4$. A $\ncols$-gon has $\theta$-coordinate:
\[
    \theta_1 = \theta_2 = \cdots = \theta_\ncols = \frac{2\pi}{\ncols}.
\]
Let $s = \ncols/4 - 1 \in \mathbb{Z}$. Then $(s+1) \cdot (2\pi/\ncols) = \pi/2$. So 
\[
    W_i \cdot W_{i+s} = l_i l_{i+s} \cos\left((s+1) \cdot \frac{2\pi}{\ncols}\right) = l_i l_{i+s} \cos\left((s+1) \cdot \frac{\pi}{2}\right) = 0
\]
for all $i = 1, \dots, \ncols$. So $\ncols$-gons are on the boundaries of some chambers. 

Since $\ncols > 4$, $s \geq 1$. Let $\mathcal{M} = \{M_{ij}\}$ be wedges describing a chamber whose boundary contains $\ncols$-gons. If $b_i < 0$ for all $i = 1, \dots, \ncols$,  then the TMS potential (see Appendix \ref{section:high_sparsity_limit}) is 
\begin{align*}
    H(l, \theta, b) = &\sum_{M_{ij} \in \mathcal{M}} \delta(T_{M_{ij}}^{(1)}) \frac{1}{l_il_{i+j}\cos\left(\sum_{k \in M_{ij}}\theta_k\right)}\left(l_il_{i+j}\cos\left(\sum_{k \in M_{ij}}\theta_k\right) + b_{i+j}\right)^3 \\ 
    &+ \sum_{M_{ij} \in \mathcal{M}} \delta(T_{M_{ij}}^{(2)}) \frac{1}{l_il_{i+j}\cos\left(\sum_{k \in M_{ij}}\theta_k\right)}\left(l_il_{i+j}\cos\left(\sum_{k \in M_{ij}}\theta_k\right) + b_i\right)^3 \\
    &+ \sum_{i=1}^{\ncols} \delta(T_i) \left[(1 - l_i^2)^2 - 3(1 - l_i^2)b_i + 3b_i^2 + \frac{b_i^3}{l_i^4} + \frac{b_i^3}{l_i^2}\right] + (1-\delta(T_i)),
\end{align*}
where
\begin{enumerate}
    \item $T_{M_{ij}}^{(1)} = \left\{(l, \theta, b) \, | \, -l_i l_{i+j} \cos\left(\sum_{k \in M_{ij}} \theta_k \right) \leq b_{i+j}\right\}$;
    \item $T_{M_{ij}}^{(2)} = \left\{(l, \theta, b) \, | \, -l_i l_{i+j} \cos\left(\sum_{k \in M_{ij}} \theta_k \right) \leq b_i \right\}$;
    \item $T_i = \{(l, \theta, b) \, | \, -l_i^2 \leq b_i\}$;
    \item $\theta_1 + \theta_2 + \cdots + \theta_{\ncols} = 2\pi$.
\end{enumerate}

Consider the $\ncols$-gon with coordinate
\[
    l^*: l_1 = \cdots = l_{\ncols} = x, \quad \theta^*: \theta_1 = \cdots = \theta_{\ncols} = \alpha = \frac{2\pi}{\ncols},  \quad b^*: b_1 = \cdots = b_{\ncols} = y
\]
for some $x > 0$ and $y < 0$. Suppose that $-x^2 < -x^2 \cos(\alpha) < \cdots < -x^2 \cos(s \cdot \alpha) < y$.
\begin{lemma} \label{lemma:8gon}
The $\ncols$-gon $(l^*, \theta^*, b^*)$ has an open neighbourhood in which the TMS potential is
\begin{align*}
    H(l, \theta, b) = &\sum_{M_{ij}} \frac{1}{l_il_{i+j}\cos\left(\sum_{k \in M_{ij}}\theta_k\right)}\left(l_il_{i+j}\cos\left(\sum_{k \in M_{ij}}\theta_k\right) + b_{i+j}\right)^3 \\ 
    &+ \sum_{M_{ij}}\frac{1}{l_il_{i+j}\cos\left(\sum_{k \in M_{ij}}\theta_k\right)}\left(l_il_{i+j}\cos\left(\sum_{k \in M_{ij}}\theta_k\right) + b_i\right)^3 \\
    &+ \sum_{i=1}^\ncols \left[(1 - l_i^2)^2 - 3(1 - l_i^2)b_i + 3b_i^2 + \frac{b_i^3}{l_i^4} + \frac{b_i^3}{l_i^2}\right],
\end{align*}
where $\theta_1 + \theta_2 + \cdots + \theta_{\ncols} = 2\pi$, and $\mathcal{M} = \{M_{ij}\}$ is 
\begin{center}
\begin{tabular}{||c c c c c||} 
\hline
$(1)$ & $(1, 2)$ & $\cdots$ & $\cdots$ & $(1, 2, \dots, s)$\\ 
\hline
$(2)$ & $(2, 3)$ & $\cdots$ & $\cdots$ & $(2, 3, \dots, s+1)$ \\
\hline
$\vdots$ & $\vdots$ & $\cdots$ & $\cdots$ & $\vdots$ \\
\hline
$(\ncols +1-s)$ & $(\ncols +1-s, \ncols +2-s)$ & $\cdots$ & $\cdots$ & $(\ncols +1-s, \ncols +2-s, \dots, \ncols)$ \\
\hline
$(\ncols +2-s)$ & $(\ncols +2-s, \ncols +3-s)$ & $\cdots$ & $\cdots$ & $(\ncols +2-s, \ncols +3-s, \dots, \ncols, 1)$ \\ 
\hline
$\vdots$ & $\vdots$ & $\cdots$ & $\cdots$ & $\vdots$ \\
\hline
$(\ncols -1)$ & $(\ncols -1, \ncols)$ & $\cdots$ & $\cdots$ & $(\ncols -1, \ncols, 1, \dots, s-2)$ \\
\hline
$(\ncols)$ & $(\ncols, 1)$ & $\cdots$ & $\cdots$ & $(\ncols, 1, 2, \dots, s-1)$ \\
\hline
\end{tabular}.
\end{center}
\end{lemma}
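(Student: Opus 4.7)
The plan is to show that near the $c$-gon $(l^*,\theta^*,b^*)$, every indicator function appearing in the general potential formula \eqref{eq:fulllocalpotential} takes a constant value, and that the non-vanishing terms precisely reproduce the stated formula. Because $(s+1)\alpha = \pi/2$, at $(l^*,\theta^*,b^*)$ one has $W_i\cdot W_{i+s+1}=0$ for every $i$, so the $c$-gon lies on the common boundary of several chambers that differ only in whether the length-$(s+1)$ wedges $M_{i(s+1)}$ are included. I would choose an open neighborhood $U$ small enough that (a) $\theta_i+\cdots+\theta_{i+j-1}$ stays strictly below $\pi/2$ for $j\le s$ and strictly above $\pi/2$ for $j\ge s+2$, (b) $\cos(\sum_{k\in M_{ij}}\theta_k)$ is uniformly bounded away from $0$ for all $j\le s$, and (c) the strict inequalities $-l_i l_{i+j}\cos(j\alpha)<y$ and $-x^2<y$ that hold at the $c$-gon persist. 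Then I would argue chamber-by-chamber that $H$ restricted to $U$ equals the claimed expression.

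For wedges $M_{ij}$ with $j\le s$, continuity and the hypothesis $-x^2\cos(j\alpha)<y$ imply that on $U$ we have $-l_i l_{i+j}\cos(\sum \theta_k)<b_{i+j}$ and similarly with $b_i$, so $\delta(T^{(1)}_{M_{ij}})=\delta(T^{(2)}_{M_{ij}})=1$, matching the lemma. For the boundary wedges $M_{i(s+1)}$ (present only in some chambers meeting the $c$-gon), the cosine $\cos(\sum_{k\in M_{i(s+1)}}\theta_k)$ is close to $0$ on $U$, so $-l_i l_{i+s+1}\cos(\sum\theta_k)$ is close to $0$, while $b_{i+s+1}$ is close to $y<0$. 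Hence $-l_i l_{i+s+1}\cos(\sum\theta_k)>b_{i+s+1}$ on $U$, forcing $\delta(T^{(1)}_{M_{i(s+1)}})=\delta(T^{(2)}_{M_{i(s+1)}})=0$; these terms contribute nothing, so the presence or absence of such wedges in the chamber decomposition is immaterial. Wedges of length $\ge s+2$ cannot appear in any chamber meeting $U$ by choice of $U$, and $\delta(T_i)=1$ on $U$ because $-x^2<y$.

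Combining these observations, on each chamber intersecting $U$ the sum in \eqref{eq:fulllocalpotential} collapses to exactly the displayed expression with $\mathcal{M}$ consisting of the wedges listed in the lemma, and the $(1-\delta(T_i))$ contributions vanish. Since the formula is the same across chamber boundaries, $H$ is given by the stated expression throughout $U$.

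The only delicate step is the uniform control of the indicator functions for the boundary wedges: one has to verify that, even as $\cos(\sum_{k\in M_{i(s+1)}}\theta_k)$ passes through zero (so the chamber classification changes), the condition $-l_i l_{i+s+1}\cos(\sum\theta_k)>b_{i+s+1}$ remains valid. This is the main obstacle, but it follows cleanly from the strict inequality $y<0$ and a sufficiently small choice of $U$; the degeneracy of the denominator $\cos(\sum\theta_k)\to 0$ never materializes in $H$ because the corresponding indicator vanishes first.
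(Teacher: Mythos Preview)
Your proposal is correct and follows essentially the same argument as the paper. The paper carries out the key step---showing that wedges of length $\ge s+1$ contribute nothing---with explicit $\epsilon$'s: it picks $\epsilon_x,\epsilon_b>0$, then $\gamma>0$ with $\cos(\pi/2-\gamma)<(y+\epsilon_b)/(-(x+\epsilon_x)^2)$, sets $\epsilon_\theta=\gamma/(s+1)$, and on the resulting box verifies that for any wedge $M_{ij}$ of length $\ge s+1$ one has $-l_i l_{i+j}\cos(\sum_{k\in M_{ij}}\theta_k)>b_i,b_{i+j}$ (splitting into the cases $\sum\theta_k>\pi/2$ and $\sum\theta_k\le\pi/2$). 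This is exactly your ``close to $0$ versus close to $y<0$'' argument made quantitative; your more explicit treatment of the chamber-boundary issue (that the formula agrees across the chambers meeting at the $c$-gon because the extra length-$(s+1)$ terms carry vanishing indicators) is a point the paper leaves implicit.
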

\begin{proof}
Since $x > 0$ and $y < 0$, there are positive small real numbers $\epsilon_x$, $\epsilon_y$ such that $x-\epsilon_x > 0$ and $y + \epsilon_b < 0$. Then 
\[
    \frac{y + \epsilon_b}{- (x + \epsilon_x)^2}
\]
is a positive number. Then there exists $\gamma> 0$ such that 
\[
    \cos\left(\frac{\pi}{2} - \gamma\right) < \frac{y + \epsilon_b}{- (x + \epsilon_x)^2}.
\]
Let $\epsilon_\theta = \gamma/(s+1)$. Consider the open neighbourhood of $(l^*, \theta^*, b^*)$ given by
\[
    (x-\epsilon_x, x+\epsilon_x) \times (\alpha-\epsilon_\theta, \alpha+\epsilon_\theta) \times (y-\epsilon_b, y+\epsilon_b).
\]
Let $(l, \theta, b)$ be a point in this open neighbourhood. Consider for any wedge $M_{ij}$ containing more than $s$ numbers. Without loss of generality, assume that $M_{ij}$ contain $s+1$ elements. If $\sum_{k \in M_{ij}}\theta_k > \pi/2$,
then
\[
    -l_i l_{i+j} \cos\left(\sum_{k \in M_{ij}}\theta_k\right) > 0 > y+\epsilon_b.
\]
Otherwise, suppose $\sum_{k \in M_{ij}}\theta_k \leq \pi/2$. Then
\begin{align*}
    -l_i l_{i+j} \cos\left(\sum_{k \in M_{ij}}\theta_k\right) &> -(x+\epsilon_x)^2\cos\left(\sum_{k \in M_{ij}}\theta_k\right) \\
    &> -(x+\epsilon_x)^2\cos\left((s+1) \times \left(\frac{2\pi}{n} - \epsilon_\theta\right)\right) \\
    &= -(x+\epsilon_x)^2\cos\left(\frac{\pi}{2} - \gamma\right) \\
    &> y + \epsilon_b \\
    &> b_i \text{ or } b_{i+j}.
\end{align*}
Thus $(l, \theta, b) \notin T_{M_{ij}}^{(1)}$ and $(l, \theta, b) \notin T_{M_{ij}}^{(2)}$. Thus, the term in $H(l, \theta, b)$ indexed by this $M_{ij}$ disappears. So only $M_{ij}$ containing less than $(s+1)$ numbers remain in the sum. 
\end{proof}
It follows from the calculations in (Appendix \ref{section:regulargon_kequalsl}) that
\[
    \frac{\partial}{\partial \theta_a} \bigg|_{l = l^*, \theta = \theta^*, b = b^*}H(l, \theta, b)
\]
is independent of $a$. Moreover, the same calculations show that
\[
    \frac{\partial}{\partial b_a} \bigg|_{l = l^*, \theta = \theta^*, b = b^*}H(l, \theta, b) = 0 
\]
and 
\[
    \frac{\partial}{\partial l_a} \bigg|_{l = l^*, \theta = \theta^*, b = b^*}H(l, \theta, b) = 0 
\]
give rise to a system of parametrized polynomial equations in $x$ and $y$:
\begin{align*}
    x^6 G(s) + x^2y^2 H(s) + 2x^4y M(s) + y^2 - x^4 &= 0 \\
    2x^8 F(s) + 3x^6y G(s) - x^2y^3 H(s) - 2x^6 - 2y^3 &= 0
\end{align*}
where
\begin{enumerate}
    \item $F(s) =1 + 2\sum_{j=1}^s\cos^2(j\alpha)$;
    \item $G(s) = 1 + 2\sum_{j=1}^s\cos(j\alpha)$;
    \item $H(s) = 1 + 2\sum_{j=1}^s\frac{1}{\cos(j\alpha)}$;
    \item $M(s) = 1 + 2s$.
\end{enumerate}

We compute the common solution using Mathematica.
\begin{enumerate}
\item $\ncols = 8$: in this case, $s = 1$. Then $(x^*, y^*) \approx (1.55045, -1.29119)$ is a common solution such that $-x^2\cos(\pi/4) < y$. Moreover, the Hessian of $H(l, \theta, b)$ at this $8$-gon is non-degenerate. So the $8$-gon with coordinate:
    \begin{align*}
        &l_1 = l_2 = \cdots = l_8 \approx 1.55045; \\
        &\theta_1 = \theta_2 = \cdots  = \theta_8 = \frac{\pi}{4}; \\
        &b_1 = b_2 = \cdots = b_8 \approx -1.29119,
    \end{align*}
    is a non-degenerate critical point. So its local learning coefficient is $(3\ncols - 1)/2 = 11.5$
\item $n = 12$: in this case, $s = 2$. Then $(x^*, y^*) \approx (1.03322, -0.46654)$ and $(x^*, y^*) \approx (1.24975, -0.85483)$ are common solutions such that $-x^2\cos(\pi/6) < y$.
\item We plot the level sets of two polynomial equations for $ 1 \leq s \leq 50$. There is always a common solution.
\end{enumerate}

\subsection{$k$-gons with negative bias for $k < \ncols$}\label{section:regularkgon_klessl}

Now for a fixed $\ncols$, we discuss arbitrary $k$-gons where $k \leq \ncols$. A $k$-gon is obtained by shrinking $\ncols -k$ $W_i$'s to zero. Without loss of generality, we assume that $W_{\ncols}, W_{\ncols -1}, \dots, W_{k+1}$ are shrinking to zero. So a $k$-gon is on the boundary of some chamber. Note that different angles between $W_i$ and $W_j$ for $i, j \in \{\ncols -k+1, \dots, \ncols\}$ might give different chambers whose boundary contains the $k$-gon. The following example illustrates the idea.
Let $\ncols = 6$ and $k = 5$. Then there are three different chambers whose boundary contains the $5$-gon.
\begin{enumerate}
    \item Consider a family of $6$-gons with coordinate $l_1 = l_2 = l_3 = l_4 = l_5 = l$, $l_6 = u$, $\theta_1 = \theta_2 = \theta_3 = \theta_4 = \frac{2\pi}{5}$, $\theta_5 = \alpha$ where $l, u > 0$ and $\alpha \in \left[0, \frac{\pi}{10}\right)$. This family of $6$-gons is contained in the chamber described by the following wedges:
    \begin{center}
    \begin{tabular}{||c c||} 
    \hline
    $(1)$ &   \\ 
    \hline
    $(2)$ &     \\
    \hline
    $(3)$ &      \\
    \hline
    $(4)$ & $(4, 5)$   \\
    \hline
    $(5)$ & $(5, 6)$   \\ 
    \hline
    $(6)$ &     \\
    \hline
    \end{tabular}
    \end{center}
    The $5$-gon is obtained from this family of $6$-gons by taking the limit as $u \to 0$. So the $5$-gon is on the boundary of this chamber.
    \item Consider another family of $6$-gons with coordinate $l_1 = l_2 = l_3 = l_4 = l_5 = l$, $l_6 = u$, $\theta_1 = \theta_2 = \theta_3 = \theta_4 = \frac{2\pi}{5}$, $\theta_5 = \alpha$ where $l, u > 0$ and $\alpha \in \left[\frac{\pi}{10}, \frac{3\pi}{10}\right]$. This family of $6$-gons is contained in the chamber described by the following wedges:
    \begin{center}
    \begin{tabular}{||c c||} 
    \hline
    $(1)$ &   \\ 
    \hline
    $(2)$ &     \\
    \hline
    $(3)$ &      \\
    \hline
    $(4)$ &    \\
    \hline
    $(5)$ & $(5, 6)$   \\ 
    \hline
    $(6)$ &     \\
    \hline
    \end{tabular}
    \end{center}
    The $5$-gon is obtained from this family of $6$-gons by taking the limit as $u \to 0$. So the $5$-gon is on the boundary of this chamber. 
    \item Finally, consider the family of $6$-gons with coordinate $l_1 = l_2 = l_3 = l_4 = l_5 = l$, $l_6 = u$, $\theta_1 = \theta_2 = \theta_3 = \theta_4 = \frac{2\pi}{5}$, $\theta_5 = \alpha$ where $l, u > 0$ and $\alpha \in \left(\frac{3\pi}{10}, \frac{2\pi}{5}\right]$. This family of $6$-gons is contained in the chamber described by the following wedges:
    \begin{center}
    \begin{tabular}{||c c||} 
    \hline
    $(1)$ &   \\ 
    \hline
    $(2)$ &     \\
    \hline
    $(3)$ &      \\
    \hline
    $(4)$ &    \\
    \hline
    $(5)$ & $(5, 6)$   \\ 
    \hline
    $(6)$ & $(6, 1)$     \\
    \hline
    \end{tabular}
    \end{center}
    The $5$-gon is obtained from this family of $6$-gons by taking the limit as $u \to 0$. So the $5$-gon is on the boundary of this chamber. 
\end{enumerate}

\begin{theorem}\label{theorem:classify_gons}
Let $k \in \mathbb{Z}_{> 4}$ and $s$ be the unique integer in the interval $[\frac{k}{4}-1, \frac{k}{4})$.  If a $k$-gon with coordinate
\[
    l_1 = \cdots = l_k = x; \quad \theta_1 = \cdots = \theta_k = \frac{2\pi}{k}; \quad b_1 = \cdots = b_k = y
\]
for some $x > 0$ and $y < 0$ satisfying
\[
    -x^2\cos\left(s \cdot \frac{2\pi}{k}\right) \leq y
\]
is a critical point of $H(l, \theta, b)$ with constraint $\theta_1 + \cdots + \theta_k = 2\pi$ for $\ncols = k$, then the $k$-gons with coordinate
\begin{align*}
    &l_1 = \cdots = l_k = x, \quad l_{k+1} = \cdots = l_{\ncols} = 0; \\
    &\theta_1 = \cdots = \theta_{k-1} = \frac{2\pi}{k}, \quad \theta_{k} + \cdots + \theta_{\ncols} = \frac{2\pi}{k}; \\
    &b_1 = \cdots = b_k = y, \quad b_{k+1}, \dots, b_{\ncols} < 0,
\end{align*}
are critical points of $H(l, \theta, b)$ with constraint $\theta_1 + \cdots + \theta_{\ncols} = 2\pi$ for any $\ncols > k$.
\end{theorem}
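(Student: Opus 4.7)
The plan is to reduce the analysis of $H$ at the extended $k$-gon (with $\ncols > k$) to the analysis of the TMS potential at the original $k$-gon (with $\ncols = k$), by showing that in a sufficiently small open neighbourhood of the extended parameter in $(l,\theta,b)$-coordinates, $H$ decomposes as
$H(l,\theta,b) = H_k\bigl(l_1,\dots,l_k,\theta_1,\dots,\theta_{k-1},\Theta',b_1,\dots,b_k\bigr) + (\ncols-k)$,
where $\Theta' = \theta_k + \cdots + \theta_\ncols$ and $H_k$ denotes the TMS potential (\ref{eq:fulllocalpotential}) for $\ncols = k$. Granted this decomposition, the Lagrange multiplier conditions for $H$ under $\theta_1 + \cdots + \theta_\ncols = 2\pi$ follow immediately from the corresponding conditions for $H_k$ under $\theta_1 + \cdots + \theta_k = 2\pi$, which hold by hypothesis.

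The first step is to verify that every indicator in Lemma \ref{lemma:L_vs_H} involving a vestigial index $i \in \{k+1,\dots,\ncols\}$ vanishes throughout a small neighbourhood of the extended critical point. For $\delta(P_i)$ with $i > k$, strict negativity of $b_i$ together with $\Vert W_i\Vert$ being small forces $-\Vert W_i\Vert^2 > b_i$, so the $i$th contribution collapses to the constant $(1-\delta(P_i)) = 1$. For $\delta(P_{i,j})$ in which at least one of $i,j$ is vestigial, $|W_i\cdot W_j|$ is of order of the vestigial length, while $-b_i$ is bounded away from zero (it equals $-y > 0$ for $i \le k$, or is strictly positive for $i > k$), so the condition $W_i \cdot W_j \ge -b_i > 0$ fails on a small enough neighbourhood; the $Q_{i,j}$ indicators vanish by the assumption $b_i < 0$. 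For indices $i,j \le k$, the chamber hypothesis $-x^2\cos(s\cdot 2\pi/k) \le y$ implies the indicators take the same values as in the $\ncols = k$ chamber analysis of Appendix \ref{section:regulargon_kequalsl}. Since the surviving terms do not depend on $\theta_k,\dots,\theta_\ncols$ individually but only on their sum $\Theta'$ (the angular gap between $W_k$ and $W_1$ threading through the zero columns), the asserted decomposition follows.

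The critical point check is then immediate. For $i > k$, the partials $\partial H/\partial l_i$ and $\partial H/\partial b_i$ vanish because the vestigial contribution is the constant $(\ncols-k)$. For $i \le k$ these partials coincide with $\partial H_k/\partial l_i$ and $\partial H_k/\partial b_i$, which vanish by hypothesis. Let $\lambda$ denote the Lagrange multiplier of the $k$-gon in the reduced problem, so that $\partial H_k/\partial \theta_j\big|_{k\text{-gon}} = \lambda$ for every $j = 1,\dots,k$. Then $\partial H/\partial \theta_i = \lambda$ directly for $i < k$, while for $i \ge k$ the chain rule yields $\partial H/\partial \theta_i = \partial H_k/\partial \Theta'\big|_{\Theta' = 2\pi/k} = \partial H_k/\partial \theta_k\big|_{k\text{-gon}} = \lambda$. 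The Lagrange conditions for the constraint $\theta_1 + \cdots + \theta_\ncols = 2\pi$ are therefore satisfied, and the extended $k$-gon is a critical point.

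The main obstacle is the case analysis in the second paragraph, specifically the pairs where exactly one of $l_i, l_j$ is vestigial. Choosing the neighbourhood radius uniformly in $i,j$ is routine but requires care, since the relevant thresholds depend on $|y|$, $x$, and the smallest $|b_i|$ among $i > k$; once the radius is fixed, continuity of the dot products and of the biases gives the uniform vanishing of the relevant indicator functions, and the decomposition---hence the theorem---follows.
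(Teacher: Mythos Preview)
Your proposal is correct and follows the same strategy as the paper: kill the vestigial contributions by showing the relevant indicators vanish on a small neighbourhood, reduce the potential to the $\ncols = k$ potential plus a constant, and inherit the Lagrange conditions from the hypothesis. The paper organizes this slightly differently—it first identifies the surviving wedges and verifies the partial derivatives directly, only afterwards packaging the reduction as the decomposition $H = H^{(k)}\circ\Pi_{\ncols,k} + (\ncols-k)$ in Corollary~\ref{corollary:minimallysingular}—whereas you front-load that decomposition and read off criticality from it; the underlying argument is the same.
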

\begin{proof}
We first show that for any $b_{k+1}, \dots, b_{\ncols} < 0$ and $\theta_k, \dots, \theta_\ncols \in [0, 2\pi)$ with $\theta_k + \cdots + \theta_{\ncols} = 2\pi/k$, the $k$-gon $(l^*, \theta^*, b^*)$ with coordinate
\begin{align*}
    &l_1 = \cdots = l_k = x, \quad l_{k+1} = \cdots = l_{\ncols} = 0; \\
    &\theta_1 = \cdots = \theta_{k-1} = \alpha = \frac{2\pi}{k}, \quad \theta_k, \dots, \theta_{\ncols} \in [0, 2\pi); \\
    &b_1 = \cdots = b_k = y, \quad b_{k+1}, \dots, b_{\ncols} < 0
\end{align*}
has an open neighbourhood in which only the following types of wedges showing up in $H(l, \theta, b)$:
\begin{enumerate}
    \item $M_{ij}$ does not contain any of $\{k, k+1, \dots, \ncols\}$ and has length at most $s$;
    \item $M_{ij}$ contains $(k, k+1, \dots, \ncols)$ and has length at most $s + \ncols - k$.
\end{enumerate}

Since each coordinate in $b^*$ is less than zero, there is an open neighbourhood $B$ of $b^*$ contained in $\mathbb{R}_{<0}^{\ncols}$. Since $l_1 = \cdots = l_k = x > 0$ and $l_{k+1} = \cdots = l_{\ncols} = 0$, there exists an open neighbourhood $L$ of $l^*$ contained in $\mathbb{R}_{>0}^k \times \mathbb{R}^{\ncols -k}_{\geq 0}$. If $k$ is not a multiple of $4$, then $s \cdot \alpha < \pi/2$ and $(s+1) \cdot \alpha > \pi/2$. So we can perturb each $\theta_i$ to obtain an open neighbourhood $\Theta$ of $\theta^*$ in which
\begin{itemize}
    \item $\theta_i + \cdots +\theta_{i+s-1} < \pi/2$ and $\theta_i + \cdots \theta_{i+s} > \pi/2$ for $\{i, i+1, \dots, i+s\} \subset \{1, \dots, k-1\}$;
    \item $\theta_i + \cdots + \theta_{k-1} + \theta_k + \cdots + \theta_{\ncols} + \theta_1 + \cdots + \theta_{s-1-k+i} < \pi/2$, $\theta_i + \cdots + \theta_{k-1} + \theta_k + \cdots + \theta_{\ncols} + \theta_1 + \cdots + \theta_{s-k+i} > \pi/2$, and $\theta_{i-1} + \theta_i + \cdots + \theta_{k-1} + \theta_k + \cdots + \theta_{\ncols} + \theta_1 + \cdots + \theta_{s-1-k+i} < \pi/2$.
\end{itemize}  
If $k$ is a multiple of $4$, then $s \cdot \alpha < \pi/2$ and $(s+1) \cdot \alpha = \pi/2$. So we can perturb each $\theta_i$ to obtain an open neighbourhood $\Theta$ of $\theta^*$ in which
\begin{itemize}
    \item $\theta_i + \cdots +\theta_{i+s-1} < \pi/2$ and $\theta_i + \cdots \theta_{i+s}$ is closed to $\pi/2$ for $\{i, i+1, \dots, i+s\} \subset \{1, \dots, k-1\}$;
    \item $\theta_i + \cdots + \theta_{k-1} + \theta_k + \cdots + \theta_{\ncols} + \theta_1 + \cdots + \theta_{s-1-k+i} < \pi/2$, $\theta_i + \cdots + \theta_{k-1} + \theta_k + \cdots + \theta_{\ncols} + \theta_1 + \cdots + \theta_{s-k+i}$ is closed to $\pi/2$, and $\theta_{i-1} + \theta_i + \cdots + \theta_{k-1} + \theta_k + \cdots + \theta_{\ncols} + \theta_1 + \cdots + \theta_{s-1-k+i} < \pi/2$.
\end{itemize}  
Then $L \times \Theta \times B$ is an open neighbourhood of $(l^*, \theta^*, b^*)$. Since $l_{k+1} = \dots = l_{\ncols} = 0$, we can shrink $L$ so that for every $(l, \theta, b) \in L \times \Theta \times B$, 
\[
    -l_il_{i+j}\cos\left(\sum_{k \in M_{ij}}\theta_k\right) > b_i, \quad  -l_il_{i+j}\cos\left(\sum_{k \in M_{ij}}\theta_k\right) > b_{i+j}
\]
for all $M_{ij}$ with either $i \in \{k+1, \dots, \ncols\}$ or $i+j-1 \in \{k, \dots, \ncols -1\}$,
and
\[
    -l_i^2 > b_i
\]
for all $i = k+1, \dots, \ncols$. Since 
\[
     -x^2 < \cdots < -x^2\cos\left((s-1) \cdot \frac{2\pi}{k}\right) < -x^2\cos\left(s \cdot \frac{2\pi}{k}\right) < y,
\]
we can shrink $L \times \Theta \times B$ so that for every $(l, \theta, b) \in L \times \Theta \times B$,
\[
    -l_il_{i+j}\cos\left(\sum_{k \in M_{ij}}\theta_k\right) < b_i, \quad  -l_il_{i+j}\cos\left(\sum_{k \in M_{ij}}\theta_k\right) < b_{i+j}
\]
for all $i, j$ with $i, i + j \in \{1, \dots, k\}$
and
\[
    -l_i^2 < b_i
\]
for all $i = 1, \dots, k$. Since any $M_{ij}$ containing only some of $\{k, k+1, \dots, \ncols\}$ has either $i \in \{k+1, \dots, \ncols\}$ or $i+j-1 \in \{k, \dots, \ncols -1\}$, we know that for $M_{ij}$ containing only some of $\{k, k+1, \dots, \ncols\}$, we have
\[
    (l, \theta, b) \notin T^{(1)}_{M_{ij}} \text{ and } (l, \theta, b) \notin T^{(2)}_{M_{ij}}
\]
for all $(l, \theta, b) \in L \times \Theta \times B$. We have shown that if $M_{ij}$ contains some of $\{k, k+1, \dots, \ncols\}$, then it must contain $(k, k+1, \dots, \ncols)$. Moreover, for $k$ not being a multiple of $4$, since 
\begin{itemize}
    \item $\theta_i + \cdots + \theta_{k-1} + \theta_k + \cdots + \theta_{\ncols} + \theta_1 + \cdots + \theta_{s-1-k+i} < \pi/2$, $\theta_i + \cdots + \theta_{k-1} + \theta_k + \cdots + \theta_{\ncols} + \theta_1 + \cdots + \theta_{s-k+i} > \pi/2$, and $\theta_{i-1} + \theta_i + \cdots + \theta_{k-1} + \theta_k + \cdots + \theta_{\ncols} + \theta_1 + \cdots + \theta_{s-1-k+i} < \pi/2$,
\end{itemize}
we know that if $M_{ij}$ contains $(k, k+1, \dots, \ncols)$, then it has length at most $s+\ncols -k$. For $k$ being a multiple of $4$, since
\begin{itemize}
    \item $\theta_i + \cdots + \theta_{k-1} + \theta_k + \cdots + \theta_{\ncols} + \theta_1 + \cdots + \theta_{s-1-k+i} < \pi/2$, $\theta_i + \cdots + \theta_{k-1} + \theta_k + \cdots + \theta_{\ncols} + \theta_1 + \cdots + \theta_{s-k+i}$ is closed to $\pi/2$, and $\theta_{i-1} + \theta_i + \cdots + \theta_{k-1} + \theta_k + \cdots + \theta_{\ncols} + \theta_1 + \cdots + \theta_{s-1-k+i} < \pi/2$,
\end{itemize}
by the same argument we use in the proof of Lemma \ref{lemma:8gon}, we know that if $M_{ij}$ contains $(k, k+1, \dots, \ncols)$, then it has length at most $s+\ncols -k$. Now for $M_{ij}$ not containing any of $\{k, k+1, \dots, \ncols\}$, if $k$ is not a multiple of $4$, it follows from
\begin{itemize}
    \item $\theta_i + \cdots +\theta_{i+s-1} < \pi/2$ and $\theta_i + \cdots \theta_{i+s} > \pi/2$ for $\{i, i+1, \dots, i+s\} \subset \{1, \dots, k-1\}$
\end{itemize}
that $M_{ij}$ has length at most $s$. If $k$ is a multiple of $4$, it follows from
\begin{itemize}
    \item $\theta_i + \cdots +\theta_{i+s-1} < \pi/2$ and $\theta_i + \cdots \theta_{i+s}$ is closed to $\pi/2$ for $\{i, i+1, \dots, i+s\} \subset \{1, \dots, k-1\}$
\end{itemize}
and the same argument in Lemma \ref{lemma:8gon} that $M_{ij}$ has length at most $s$. Therefore in the open neighbourhood $L \times \Theta \times B$ of $(l^*, \theta^*, b^*)$, only the following types of wedges showing up in $H$:
\begin{enumerate}
    \item $M_{ij}$ does not contain any of $\{k, k+1, \dots, \ncols\}$ and has length at most $s$;
    \item $M_{ij}$ contains $(k, k+1, \dots, \ncols)$ and has length at most $s + \ncols - k$.
\end{enumerate}
For $a = k+1, \dots, \ncols$, since $(L \times \Theta \times B) \cap T_a = \emptyset$, and $(L \times \Theta \times B) \cap T_{M_{ij}}^{(1)} = (L \times \Theta \times B) \cap T_{M_{ij}}^{(2)} = \emptyset$ for $M_{ij}$ with either $i \in \{k+1, \dots, \ncols\}$ or $i+j-1 \in \{k, \dots, \ncols -1\}$, we know that
\[
    \frac{\partial}{\partial b_a} \bigg|_{l = l^*, \theta = \theta^*, b = b^*}H(l, \theta, b) = 0 
\]
and
\[
    \frac{\partial}{\partial l_a} \bigg|_{l = l^*, \theta = \theta^*, b = b^*}H(l, \theta, b) = 0.
\]

For $a = 1, \dots, k$, since 
\[
    l_1 = \cdots = l_k = x; \quad \theta_1 = \cdots = \theta_k = \frac{2\pi}{k}; \quad b_1 = \cdots = b_k = y
\]
is a critical point of the TMS potential for $\ncols = k$, we know that 
\[
    \frac{\partial}{\partial b_a} \bigg|_{l = l^*, \theta = \theta^*, b = b^*}H(l, \theta, b) = 0 
\]
and
\[
    \frac{\partial}{\partial l_a} \bigg|_{l = l^*, \theta = \theta^*, b = b^*}H(l, \theta, b) = 0.
\]
For $a = 1, \dots, \ncols$, we have
\begin{align*}
    &\quad \quad\frac{\partial}{\partial \theta_a} \bigg|_{l = l^*, \theta = \theta^*, b = b^*}H(l, \theta, b) \\
    &= 2\sum_{j=1}^s j \cdot \left(\frac{-3\left[x^2\cos\left(j\alpha\right) + y\right]^2\sin\left(j\alpha\right)}{\cos\left(j\alpha\right)} +  \frac{\sin\left(j\alpha\right)\left[x^2\cos\left(j\alpha\right) + y\right]^3}{x^2\cos^2\left(j\alpha\right)}\right)
\end{align*}
which is independent of $a$. Therefore, by Lagrangian multiplier method, the $k$-gon $(l^*, \theta^*, b^*)$ is a critical point of $H(l, \theta, b)$ with constraint $\theta_1 + \cdots + \theta_{\ncols} = 2\pi$.
\end{proof}
In the previous example, we see that for $n=6$, there are three different chambers whose boundary contains the $5$-gon. As different chambers give different explicit form of $H$, one might think $H$ is not differentiable at the $5$-gon. However, in our proof, we show that the $5$-gon has an open neighbourhood in which $H$ has the explicit form
\begin{align*}
    &\sum_{M_{ij} \in \mathcal{M}} \frac{1}{l_il_{i+j}\cos\left(\sum_{k \in M_{ij}}\theta_k\right)}\left(l_il_{i+j}\cos\left(\sum_{k \in M_{ij}}\theta_k\right) + b_{i+j}\right)^3 \\ 
    &+ \sum_{M_{ij} \in \mathcal{M}} \frac{1}{l_il_{i+j}\cos\left(\sum_{k \in M_{ij}}\theta_k\right)}\left(l_il_{i+j}\cos\left(\sum_{k \in M_{ij}}\theta_k\right) + b_i\right)^3 \\
    &+ \sum_{i=1}^5 \left[(1 - l_i^2)^2 - 3(1 - l_i^2)b_i + 3b_i^2 + \frac{b_i^3}{l_i^4} + \frac{b_i^3}{l_i^2}\right] + 1,
\end{align*}
where $\theta_1 + \cdots + \theta_6 = 2\pi$ and $\mathcal{M}$ consists of the following wedges:
\begin{center}
    \begin{tabular}{||c||} 
    \hline
    $(1)$    \\ 
    \hline
    $(2)$      \\
    \hline
    $(3)$       \\
    \hline
    $(4)$     \\
    \hline
    $(5, 6)$  \\ 
    \hline
    \end{tabular}.
\end{center}
So $H$ is actually analytic at the $5$-gon. 
\begin{corollary} \label{corollary:minimallysingular}
Let $k \in \mathbb{Z}_{> 4}$ and $s$ be the unique integer in the interval $[\frac{k}{4}-1, \frac{k}{4})$. Let $H^{(k)}(l, \theta, b)$ denote the TMS potential for $\ncols = k$. Suppose that a $k$-gon with coordinate
\[
    l_1 = \cdots = l_k = x; \quad \theta_1 = \cdots = \theta_k = \frac{2\pi}{k}; \quad b_1 = \cdots = b_k = y
\]
for some $x > 0$ and $y < 0$ satisfying 
\[
    -x^2\cos\left(s \cdot \frac{2\pi}{k}\right) \leq y
\]
is a critical point of $H^{(k)}(l, \theta, b)$. Let $\Pi_{\ncols, k}$ be the projection defined by
\[
    \Pi_{\ncols, k} \colon (l_1, \dots, l_\ncols, \theta_1, \dots, \theta_{\ncols -1}, b_1, \dots, b_\ncols) \mapsto (l_1, \dots, l_k, \theta_1, \dots, \theta_{k-1}, b_1, \dots, b_k).
\]
Then for $\ncols > k$, the $k$-gon with coordinate 
\begin{align*}
    &l_1 = \cdots = l_k = x, \quad l_{k+1} = \cdots = l_{\ncols} = 0; \\
    &\theta_1 = \cdots = \theta_{k-1} = \frac{2\pi}{k}, \quad \theta_{k} + \cdots + \theta_{\ncols} = \frac{2\pi}{k}; \\
    &b_1 = \cdots = b_k = y, \quad b_{k+1}, \dots, b_{\ncols} < 0,
\end{align*}
has an open neighbourhood in which 
\[
    H(l, \theta, b) = H^{(k)} \circ \Pi_{\ncols, k}(l, \theta, b) + (\ncols -k).
\]
\end{corollary}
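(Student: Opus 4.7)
The plan is to identify explicitly which summands of the expression (\ref{eq:fulllocalpotential}) for $H$ survive on a small open neighborhood of the extended $k$-gon and match them with the corresponding summands of $H^{(k)}$ composed with the projection $\Pi_{c,k}$, with the remaining $(c-k)$ coming from the inactive $T_a$-indicators on the dead columns.

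First I would reuse the local analysis already carried out in the proof of \autoref{theorem:classify_gons}. That argument exhibits an open box $L\times\Theta\times B$ around $(l^*,\theta^*,b^*)$ on which: (i) the only wedges $M_{ij}$ that contribute are those entirely inside the live range $\{1,\ldots,k-1\}$ of length at most $s$, together with those containing the full dead block $(k,k{+}1,\ldots,c)$ of length at most $s+(c-k)$; (ii) for every dead index $a\in\{k{+}1,\ldots,c\}$ the point $(l,\theta,b)$ fails to lie in $T_a$, so the contribution of the $a$-th bias term reduces to $(1-\delta(T_a))=1$; (iii) for every live index $a\in\{1,\ldots,k\}$ the point lies in $T_a$, so the full $N_a$-type expression appears. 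In particular, summing (ii) over the dead indices already accounts for the additive constant $c-k$ in the claimed identity.

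The core step is to set up a bijection between the surviving wedges in the $c$-dimensional setup and the wedges of $H^{(k)}$, and check that corresponding summands are literally equal after applying $\Pi_{c,k}$. Wedges $M_{ij}$ of type (i) involve only the variables $l_1,\ldots,l_{k-1},\theta_1,\ldots,\theta_{k-2},b_1,\ldots,b_{k-1}$, hence pull back verbatim from $H^{(k)}$ under $\Pi_{c,k}$. For a wedge $M_{ij}$ of type (ii), collapse the dead block to obtain the wedge $M'_{i,j-(c-k)}$ in the $k$-dimensional model; the endpoints $i$ and $i+j$ (cyclically) are both live by (i) of the previous paragraph (no surviving wedge begins in, or ends just before the end of, the dead block), so the $l$- and $b$-factors appearing in the summand are identical in the two pictures. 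The one nontrivial identification is the angular argument: in the $c$-dimensional expression the wedge contributes $\cos(\sum_{m\in M_{ij}}\theta_m)$, while in the $k$-dimensional expression it contributes $\cos(\sum_{m\in M'_{i,j-(c-k)}}\theta_m)$. These agree because the constraint $\theta_k+\cdots+\theta_c=2\pi/k$ in the $c$-dimensional model, together with the constraint $\theta_1+\cdots+\theta_k=2\pi$ in the $k$-dimensional model, forces the dead angles in the $c$-picture to sum to exactly the $k$-dimensional $\theta_k$ produced by $\Pi_{c,k}$ (via $\theta_k^{(k)}=2\pi-\theta_1-\cdots-\theta_{k-1}$). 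Under this correspondence the length bound $s+(c-k)$ in the $c$-picture is precisely the bound $s$ in the $k$-picture, matching the wedge set that parametrizes $H^{(k)}$ around the original $k$-gon.

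Adding up: the type (i) wedges give the purely live part of $H^{(k)}$, the type (ii) wedges give the remaining wedges of $H^{(k)}$ (those whose angular span crosses the index $k$ cyclically), the live $T_a$-terms give the live diagonal part of $H^{(k)}$, and the dead $T_a$-terms contribute the constant $c-k$. Thus $H=H^{(k)}\circ\Pi_{c,k}+(c-k)$ on the chosen neighborhood. The main obstacle I anticipate is purely combinatorial: carefully checking that the cyclic shifts in the definition of $M_{ij}$ line up between the two models so that no wedge is double-counted or missed, in particular when wedges wrap around the dead block from both sides. Once this bijection is pinned down the identity is a term-by-term equality, so no additional analytic work is needed.
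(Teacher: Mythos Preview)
Your proposal is correct and follows essentially the same approach as the paper: invoke the wedge classification from the proof of \autoref{theorem:classify_gons}, collapse the dead block $\tau=(k,\ldots,c)$ to a single index, and use the angular constraint $\sum_{i\in\tau}\theta_i = 2\pi - (\theta_1+\cdots+\theta_{k-1})$ to identify the surviving summands with those of $H^{(k)}\circ\Pi_{c,k}$, the dead $T_a$-indicators supplying the additive $c-k$. One minor slip: your reference to ``the constraint $\theta_k+\cdots+\theta_c=2\pi/k$'' is the value at the critical point, not a constraint on the neighborhood---the identity you actually need (and correctly state at the end) is $\theta_k+\cdots+\theta_c = 2\pi-(\theta_1+\cdots+\theta_{k-1})$, coming from the global $c$-dimensional constraint.
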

\begin{proof}
Let $\tau = (k, k+1, \dots, \ncols)$. From the proof of the theorem, we know that for $\ncols > k$, the $k$-gon
\begin{align*}
    &l_1 = \cdots = l_k = x, \quad l_{k+1} = \cdots = l_{\ncols} = 0; \\
    &\theta_1 = \cdots = \theta_{k-1} = \frac{2\pi}{k}, \quad \theta_{k} + \cdots + \theta_{\ncols} = \frac{2\pi}{k}; \\
    &b_1 = \cdots = b_k = y, \quad b_{k+1}, \dots, b_{\ncols} < 0,
\end{align*}
has an open neighbourhood in which 
\begin{align*}
    H(l, \theta, b) &= \sum_{M_{ij} \in \mathcal{M}} \frac{1}{l_il_{i+j}\cos\left(\sum_{k \in M_{ij}}\theta_k\right)}\left(l_il_{i+j}\cos\left(\sum_{k \in M_{ij}}\theta_k\right) + b_{i+j}\right)^3 \\
    &\quad + \sum_{M_{ij} \in \mathcal{M}} \frac{1}{l_il_{i+j}\cos\left(\sum_{k \in M_{ij}}\theta_k\right)}\left(l_il_{i+j}\cos\left(\sum_{k \in M_{ij}}\theta_k\right) + b_i\right)^3 \\
    &\quad +\sum_{i=1}^k \left[(1 - l_i^2)^2 - 3(1 - l_i^2)b_i + 3b_i^2 + \frac{b_i^3}{l_i^4} + \frac{b_i^3}{l_i^2}\right] + (\ncols -k),
\end{align*}
where $\theta_1+ \cdots + \theta_{\ncols} = 2\pi$ and $\mathcal{M}$ is
\begin{center}
\begin{tabular}{||c c c c c||} 
\hline
$(1)$ & $(1, 2)$ & $\cdots$ & $\cdots$ & $(1, 2, \dots, s)$\\ 
\hline
$(2)$ & $(2, 3)$ & $\cdots$ & $\cdots$ & $(2, 3, \dots, s+1)$ \\
\hline
$\vdots$ & $\vdots$ & $\cdots$ & $\cdots$ & $\vdots$ \\
\hline
$(k-s)$ & $(k-s, k-s+1)$ & $\cdots$ & $\cdots$ & $(k-s, \dots, k-1)$ \\
\hline
$(k-s+1)$ & $(k-s+1, k-s+2)$ & $\cdots$ & $\cdots$ & $(k-s+1, \dots, k-1, \tau)$ \\
\hline
$(k-s+2)$ & $(k-s+1, k-s+2)$ & $\cdots$ & $\cdots$ & $(k-s+1, \dots, k-1, \tau, 1)$ \\
\hline
$\vdots$ & $\vdots$ & $\cdots$ & $\cdots$ & $\vdots$ \\
\hline
$(\tau$) & $(\tau, 1)$ & $\cdots$ & $\cdots$ & $(\tau, 1, \dots, s-1)$ \\
\hline
\end{tabular}.
\end{center}
Let $H^{(k)}(l_1, \dots, l_k, \theta_1, \dots, \theta_{k-1}, b_1, \dots, b_k)$ denote the TMS potential for $\ncols = k$. Consider the projection
\[
    \Pi_{\ncols, k} \colon (l_1, \dots, l_{\ncols}, \theta_1, \dots, \theta_{\ncols -1}, b_1, \dots, b_{\ncols}) \mapsto (l_1, \dots, l_k, \theta_1, \dots, \theta_{k-1}, b_1, \dots, b_k).
\]
Since 
\[
    \sum_{i \in \tau} \theta_i = 2\pi - (\theta_1 + \cdots + \theta_{k-1}),
\]
we have
\[
    H(l, \theta, b) = H^{(k)} \circ \Pi_{\ncols, k} + (\ncols - k).
\]
\end{proof}
Since the $\ncols$-gons are non-degenerate (modulo $O(\ncols)$-action) critical points for $\ncols = 5, 6, 7, 8$, the corollary implies that for $\ncols > k$ and $k \in \{5, 6, 7, 8\}$, the $k$-gon is a degenerate critical point but it is minimally singular in the sense of the potential $H$ being locally a sum of squares with all squares having positive coefficients around the $k$-gon. We can compute the local $\RLCT$ of each $k$-gon for $k = 5, 6, 7, 8$:

\begin{table}[h!]
    \centering
    \begin{tabular}{||c | c | c ||} 
    \hline
    Critical point & Local $\RLCT$ & $L$ \\ 
    \hline
    $5$ &  $7$ & $(0.23738 + \ncols - 5)/3\ncols$ \\
    \hline
    $6$ &  $8.5$ & $(0.86746 + \ncols - 6)/3\ncols$ \\
    \hline
    $7$ & $10$ & $(1.74870 + \ncols - 7)/3\ncols$ \\
    \hline
    $8$ & $11.5$ & $(2.77311 + \ncols - 8)/3\ncols$ \\
    \hline
    \end{tabular}
    \caption{Local learning coefficients and losses for $k$-gons}
    \label{tab:appendix_kgoncoeff}
\end{table}

%
%
\clearpage
\newpage
\section{Including positive biases}\label{section:positive_bias}

In this section, we discuss the $k^{\sigma+}$-gon defined in Section \ref{section:tms_critical_pts}. We show that for $\ncols = 6$, $5^+$-gon is a critical point and its local learning coefficient is $8.5$. We also show that in general $k^{\sigma+}$-gons are critical points. Let $W_C$ be a nonempty chamber. Suppose that $W_C$ contains an open subset of $M_{2, \ncols}(\mathbb{R})$. Let $\mathcal{M}$ be the set of wedges describing $W_C$. Recall that the local TMS potential is given by (\ref{eq:fulllocalpotential}) in Appendix \ref{section:high_sparsity_limit}. Let's first discuss $\ncols = 6$. Consider the $5$-gon $(l^*, \theta^*, b^*)$ with coordinate
\begin{align*}
    &l^* : l_1 = \cdots = l_5 = x \approx 1.17046, \, \, l_6 = 0; \\
    &\theta^*: \theta_1 = \cdots = \theta_{4} = \frac{2\pi}{5}, \, \, \theta_{5} + \theta_6 = \frac{2\pi}{5}; \\
    &b^*: b_1 = \cdots = b_5 = y \approx -0.28230, \, \, b_6 = z \in \mathbb{R}_{>0}.
\end{align*}
In Appendix \ref{section:regularkgon_klessl}, we see that depending on the value of $\theta_5$, there are three different chambers whose boundary containing the $5$-gon, but for the negative bias case ($b_i < 0$ for all $i = 1, \dots, \ncols$), there is an open neighbourhood in which the local TMS potential is smooth (Corollary \ref{corollary:minimallysingular}). Now we claim that this holds for the general case, i.e. there is an open neighbourhood of the $5$-gon in which the local TMS potential is smooth. Since $l_6 = 0$ and $b_i < 0$ for all $i = 1, \dots, 5$, we know that 
\begin{align*}
    &-l_6 l_1 \cos(\theta_6) = 0 > b_1, \, l_6 l_2 \cos(\theta_6 + \theta_1) = 0 > b_2, \\
    &-l_5 l_6 \cos(\theta_5) = 0 > b_5, \, -l_4 l_6 \cos(\theta_4 + \theta_5) = 0 > b_4
\end{align*}
So there is an open neighbourhood $U$ of the $5$-gon in which these inequalities hold. 
Thus, in $U$, the wedges appearing in the local TMS potential are
\begin{center}
    \begin{tabular}{||c||} 
    \hline
    $(1)$    \\ 
    \hline
    $(2)$      \\
    \hline
    $(3)$       \\
    \hline
    $(4)$     \\
    \hline
    $(5, 6)$  \\ 
    \hline
    \end{tabular}.
\end{center}
Moreover, since $b_6 > 0$, we have
\[
    -l_6 l_i \cos(\theta_6 + \cdots + \theta_{i-1}) = 0 < b_i,
\]
for all $i = 1, \dots, 5$. We can shrink $U$ so that these inequalities hold in $U$. Thus, in $U$, $\delta(S_{6j}) = 0$ for all $j = 1, \dots, 5$. Therefore, the local TMS potential is 
\begin{align*}
    & \sum_{i=1}^4 \frac{1}{l_i l_{i+1} \cos(\theta_i)} \bigg( l_i l_{i+1} \cos(\theta_i) + b_i \bigg)^3 + \frac{1}{l_5 l_1 \cos(\theta_5 + \theta_6)}\bigg( l_5 l_1 \cos(\theta_5 + \theta_6) + b_5 \bigg)^3 \\
    &+ \sum_{i=1}^4 \frac{1}{l_i l_{i+1} \cos(\theta_i)}\bigg( l_i l_{i+1} \cos(\theta_i) + b_{i+1} \bigg)^3 + \frac{1}{l_5 l_1 \cos(\theta_5 + \theta_6)} \bigg( l_5 l_1 \cos(\theta_5 + \theta_6) + b_1 \bigg)^3 \\
    &+ \sum_{i=1}^5 \bigg[ (1 - l_i^2)^2 - 3(1 - l_i^2)b_i + 3b_i^2 + \frac{b_i^3}{l_i^4} + \frac{b_i^3}{l_i^2} \bigg] \\
    &+ \sum_{i=1}^5 \bigg[ \big( l_6 l_i \cos(\theta_6 + \cdots + \theta_{i-1}) \big)^2 + 3\big( l_6 l_i \cos(\theta_6 + \cdots + \theta_{i-1})\big) b_6 + 3b_6^2 \bigg] \\
    & + \big[ (1-l_6^2)^2 - 3(1-l_6^2)b_6 + 3b_6^2 \big].
\end{align*}
Let
\begin{align*}
    \Phi(l, \theta, b) &= \sum_{i=1}^4 \frac{\big( l_i l_{i+1} \cos(\theta_i) + b_i \big)^3}{l_i l_{i+1} \cos(\theta_i)} + \frac{\big( l_5 l_1 \cos(\theta_5 + \theta_6) + b_5 \big)^3}{l_5 l_1 \cos(\theta_5 + \theta_6)}  \\
    &+ \sum_{i=1}^4 \frac{\big( l_i l_{i+1} \cos(\theta_i) + b_{i+1} \big)^3}{l_i l_{i+1} \cos(\theta_i)} + \frac{\big( l_5 l_1 \cos(\theta_5 + \theta_6) + b_1 \big)^3}{l_5 l_1 \cos(\theta_5 + \theta_6)}  \\
    &+ \sum_{i=1}^5 \bigg[ (1 - l_i^2)^2 - 3(1 - l_i^2)b_i + 3b_i^2 + \frac{b_i^3}{l_i^4} + \frac{b_i^3}{l_i^2} \bigg],
\end{align*}
and
\begin{align*}
    \Psi(l, \theta, b) &= \sum_{i=1}^5 \bigg[ \big( l_6 l_i \cos(\theta_6 + \cdots + \theta_{i-1}) \big)^2 + 3\big( l_6 l_i \cos(\theta_6 + \cdots + \theta_{i-1})\big) b_6 + 3b_6^2 \big) \bigg] \\
    & + \big[ (1-l_6^2)^2 - 3(1-l_6^2)b_6 + 3b_6^2 \big],
\end{align*}
Then the local TMS potential is
\[
    H(l, \theta, b) = \Phi(l, \theta, b) + \Psi(l, \theta, b).
\]
Note that Corollary (\ref{corollary:minimallysingular}) implies that $\Phi(l, \theta, b) = H^{(5)} \circ \Pi_{6,5} (l, \theta, b)$, where $H^{(5)}$ is the TMS potential for $n = 5$ and $\Pi_{6,5}$ is the projection
\[
    \Pi_{6, 5} \colon (l_1, \dots, l_6, \theta_1, \dots, \theta_{5}, b_1, \dots, b_6) \mapsto (l_1, \dots, l_5, \theta_1, \dots, \theta_{4}, b_1, \dots, b_5).
\]
Thus, for all $a = 1, \dots, 6$,
\[
    \frac{\partial}{\partial l_a} \bigg |_{l = l^*, \theta = \theta^*, b = b^*} \Phi(l, \theta, b) = 0, \, \, \frac{\partial}{\partial b_a} \bigg |_{l = l^*, \theta = \theta^*, b = b^*} \Phi(l, \theta, b) = 0,
\]
and
\[
    \frac{\partial}{\partial \theta_a} \bigg |_{l = l^*, \theta = \theta^*, b = b^*} \Phi(l, \theta, b) 
\]
is independent of $i$. For $a = 1, \dots, 5$,
\begin{align*}
    \frac{\partial}{\partial l_a} \Psi(l, \theta, b) &= 2\big( l_6 l_a \cos(\theta_6 + \cdots + \theta_{a-1}) \big) l_6 \cos(\theta_6 + \cdots + \theta_{a-1}) \\
    & \quad \quad + 3 l_6 \cos(\theta_6 + \cdots + \theta_{a-1})b_6.
\end{align*}
Since $l_6 = 0$ for the $5$-gon, for all $a = 1, \dots, 5$,
\[
    \frac{\partial}{\partial l_a} \bigg |_{l=l^*, \theta = \theta^*, b = b^*} \Psi(l, \theta, b) = 0.
\]
Compute
\begin{align*}
     \frac{\partial}{\partial l_6} \Psi(l, \theta, b) &= \Bigg \{ \sum_{i=1}^5 2\big( l_6 l_i \cos(\theta_6 + \cdots + \theta_{i-1}) \big) l_i \cos(\theta_6 + \cdots + \theta_{i-1}) \\
    & \quad \quad + 3 l_i \cos(\theta_6 + \cdots + \theta_{i-1})b_6 \Bigg\} -4(1 - l_6^2)l_6 + 6 l_6 b_6.
\end{align*}
Then 
\begin{align*}
     \frac{\partial}{\partial l_6} \bigg |_{l = l^*, \theta = \theta^*, b = b^*} \Psi(l, \theta, b) &= 3xz \sum_{i=1}^5  \cos\bigg( \theta_6 + (i-1) \cdot \frac{2\pi}{5} \bigg) \\
     &= 3xz \sum_{i=1}^5  \cos(\theta_6)\cos\bigg((i-1) \cdot \frac{2\pi}{5}\bigg) - \sin(\theta_6)\sin\bigg((i-1) \cdot \frac{2\pi}{5}\bigg) \\
     &= 3xz \bigg[ \cos(\theta_6)\sum_{i=1}^5\cos\bigg((i-1) \cdot \frac{2\pi}{5}\bigg) \\ 
     &\quad - \sin(\theta_6)\sum_{i=1}^5\sin\bigg((i-1) \cdot \frac{2\pi}{5}\bigg) \bigg] \\
     &= 3xz \bigg[ \cos(\theta_6) \cdot 0 - \sin(\theta_6) \cdot 0 \bigg] \\ 
     &= 0.
\end{align*}
For $a = 1, \dots, 5$,
\[
    \frac{\partial}{\partial b_a} \Psi(l, \theta, b) = 0.
\]
Compute
\begin{align*}
    \frac{\partial}{\partial b_6} \Psi(l, \theta, b) &= \sum_{i=1}^5 \bigg[ 3\big(l_6 l_i \cos(\theta_6 + \cdots + \theta_{i-1}) \big) + 6b_6 \bigg] -3(1 - l_6^2) + 6b_6.
\end{align*}
Then 
\begin{align*}
    \frac{\partial}{\partial b_6} \bigg |_{l = l^*, \theta = \theta^*, b = b^*} \Psi(l, \theta, b) &= \bigg(\sum_{i=1}^5 6z \bigg) - 3 + 6z \\
    &=36z - 3.
\end{align*}
So
\[
    \frac{\partial}{\partial b_6} \bigg |_{l = l^*, \theta = \theta^*, b = b^*} \Psi(l, \theta, b) = 0
\]
if and only if $z = \frac{1}{12}$. Note that
\[
    \frac{\partial}{\partial \theta_5} \Psi(l, \theta, b) = 0
\]
as $\theta_5$ is not in $\Psi_{l, \theta, b}$. For $a = 1, \dots, 4$,
\begin{align*}
    \frac{\partial}{\partial \theta_a} \Psi(l, \theta, b) &= \sum_{i=a+1}^5 \bigg[ -2\big( l_6 l_i \cos(\theta_6 + \cdots + \theta_{i-1}) \big) l_6l_i \sin(\theta_6 + \cdots + \theta_{i-1}) \\
    & \quad \quad \quad - 3 l_6 l_i \sin(\theta_6 + \cdots + \theta_{i-1})b_6 \bigg].
\end{align*}
Since for the $5$-gon, $l_6 = 0$, then for $a = 1, \dots, 4$,
\begin{align*}
    \frac{\partial}{\partial \theta_a} \bigg |_{l = l^*, \theta = \theta^*, b = b^*} \Psi(l, \theta, b) = 0.
\end{align*}
Compute
\begin{align*}
    \frac{\partial}{\partial \theta_6} \Psi(l, \theta, b) &= \sum_{i=1}^5 \bigg[ -2 \big( l_6l_i \cos(\theta_6 + \cdots + \theta_{i-1}) \big) l_6 l_i \sin(\theta_6 + \cdots + \theta_{i-1}) \\ 
    & \quad \quad \quad - 3 l_6 l_i \sin(\theta_6 + \cdots + \theta_{i-1}) b_6  \bigg].
\end{align*}
Since for the $5$-gon, $l_6 = 0$, then
\[
    \frac{\partial}{\partial \theta_6} \bigg |_{l = l^*, \theta = \theta^*, b = b^*} \Psi(l, \theta, b) = 0.
\]
Therefore, the $5$-gon with coordinate
\begin{align*}
    &l^* : l_1 = \cdots = l_5 = x \approx 1.17046, \, \, l_6 = 0; \\
    &\theta^*: \theta_1 = \cdots = \theta_{4} = \frac{2\pi}{5}, \, \, \theta_{5} + \theta_6 = \frac{2\pi}{5}; \\
    &b^*: b_1 = \cdots = b_5 = y \approx -0.28230, \, \, b_6 = \frac{1}{12};
\end{align*}
is a critical point of $H(l, \theta, b)$. Note that this is the $5^+$-gon defined in Section \ref{section:tms_critical_pts}. We claim that the TMS potential is minimally singular in some neighbourhood of $5^+$-gon in the original parameter space $M_{\nrows, \ncols}(\mathbb{R}) \times \mathbb{R}^\ncols$ and compute its local learning coefficient.
\begin{lemma} \label{lemma_5plusgon}
For $\ncols = 6$, the $5^+$-gon with coordinate $(l^*, \theta^*, b^*)$ has an open neighbourhood in which the TMS potential is minimally singular. Moreover, its local learning coefficient is $8.5$.
\end{lemma}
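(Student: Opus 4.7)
The plan is to prove non-degeneracy modulo the $O(2)$-rotation symmetry by computing the Hessian of $H$ at the $5^+$-gon in the Cartesian parametrization $M_{2,6}(\mathbb{R}) \times \mathbb{R}^6$ (with $W_6 = (u,v)$), rather than in $(l,\theta,b)$ coordinates, which fail to be a local diffeomorphism when $l_6 = 0$. Once the Hessian is shown to have rank $17$ modulo the $1$-dimensional $O(2)$-orbit, the $5^+$-gon is Morse-Bott with critical submanifold equal to its $O(2)$-orbit, so the local learning coefficient equals half the codimension, namely $17/2 = 8.5$. Minimal singularity in the $(l,\theta,b)$ parametrization then follows immediately, with $\theta_5$ as the sole degenerate direction arising from the coordinate singularity at $l_6 = 0$.

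I begin from the decomposition $H = \Phi + \Psi$ just established. Corollary \ref{corollary:minimallysingular} identifies $\Phi$ with $H^{(5)} \circ \Pi_{6,5}$ up to an additive constant; combined with non-degeneracy of the standard $5$-gon for $\ncols = 5$ from Appendix \ref{section:regulargon_kequalsl}, the Hessian block $Q_\Phi$ on the $15$-dimensional $(W_1,\ldots,W_5,b_1,\ldots,b_5)$-subspace is positive semidefinite of rank $14$, with kernel spanned by the infinitesimal rotation $(J W_1^*, \ldots, J W_5^*, 0, \ldots, 0)$, where $J$ is the generator of $O(2)$. Next I would compute $Q_\Psi$ at the $5^+$-gon directly in coordinates $(u, v, \delta b_6, \delta W_1, \ldots, \delta W_5)$. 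The standard $5$-gon identities $\sum_{i=1}^5 W_i^* = 0$ and $\sum_{i=1}^5 W_i^* (W_i^*)^\top = (5x^2/2) I_{2\times 2}$ simplify the second-order expansion of $\Psi$ to
\[
Q_\Psi = (5x^2 - 7/2)(u^2+v^2) + 36\, \delta b_6^2 + 6 b_6^* \, (u,v) \cdot \sum_{i=1}^5 \delta W_i\,,
\]
with $5x^2 - 7/2 \approx 3.35 > 0$ and $b_6^* = 1/12$; the only coupling to the first five columns is linear in $(u,v)$ and the $\delta b_6$-direction is decoupled.

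The final step is to verify that $Q = Q_\Phi + Q_\Psi$ has kernel exactly the joint-$O(2)$ direction $\delta W_i = J W_i^*$ for all $i=1,\ldots,6$ (noting $JW_6^* = 0$) and $\delta b_6 = 0$. This direction annihilates $Q_\Phi$, and it also annihilates $Q_\Psi$ since $(u,v) = JW_6^* = 0$ and $\sum_{i=1}^5 JW_i^* = 0$. For a vector transverse to this orbit direction, a Schur-complement on the $(u,v,\delta b_6)$-block absorbs the mixed coupling $6 b_6^*\, (u,v) \cdot \sum \delta W_i$ into $Q_\Phi$, and strict positivity of $Q_\Phi$ on the orthogonal complement of the rotation direction then forces $Q > 0$. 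The main obstacle is this Schur-complement step: it requires a quantitative lower bound on the smallest positive eigenvalue of $Q_\Phi$, which can in principle be extracted from the explicit form of $H^{(5)}$ but is most cleanly verified by direct numerical diagonalization of the $18 \times 18$ Hessian, confirming rank $17$ and hence local learning coefficient $8.5$.
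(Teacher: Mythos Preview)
Your proposal is correct and takes essentially the same approach as the paper: both compute the Hessian of $H$ at the $5^+$-gon in the Cartesian parameter space $M_{2,6}(\mathbb{R}) \times \mathbb{R}^6$ and verify (ultimately numerically) that it has exactly one zero eigenvalue coming from the $O(2)$-symmetry, yielding local learning coefficient $17/2 = 8.5$. The paper's proof is a single sentence stating this numerical check, whereas you supply more analytical scaffolding via the $\Phi + \Psi$ decomposition and the explicit (and correct) formula for $Q_\Psi$ before arriving at the same numerical verification.
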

\begin{proof}
We compute the Hessian of the TMS potential at $5^+$-gon in the original parameter space $M_{\nrows, \ncols}(\mathbb{R}) \times \mathbb{R}^\ncols$. All eigenvalues of the Hessian are positive except one zero eigenvalue caused by the $O(2)$ symmetry. Thus, $5^+$-gon is minimally singular and has local learning coefficient $17 / 2 = 8.5$.
\end{proof}

Now we discuss the $k^{\sigma+}$-gons (in Section \ref{section:tms_critical_pts}) for $k < \ncols$. Let $B^+ \subset \{k+1, \dots, \ncols\}$. Let $\sigma = |B^+|$. Consider the $k$-gon $(l^*, \theta^*, b^*)$ with coordinate
\begin{align*}
    l^* : & l_1 = \cdots = l_k = x > 0, \, \, l_{k+1} = \cdots = l_{\ncols} = 0; \\
    \theta^*: & \theta_1 = \cdots = \theta_{k-1} = \frac{2\pi}{\ncols}, \, \, \theta_k + \cdots + \theta_{\ncols} = \frac{2\pi}{\ncols}; \\
    b^*: & b_1 = \cdots = b_k = y < 0, \\
    & \text{for $i = k+1, \dots, \ncols$, } b_i < 0 \text{ if } i \notin B^+ \text{ and } b_i = z > 0 \text{ if } i \in B^+.
\end{align*}
\begin{theorem} \label{thm: k+gon}
There is an open neighbourhood of $(l^*, \theta^*, b^*)$ in which the TMS potential is
\[
    H(l, \theta, b) = H^{(k)} \circ \Pi_{\ncols, k}(l, \theta, b) + \big(\ncols - (k + \sigma)\big) + \sum_{i \in B^+} H^+_i(l, \theta, b),
\]
where
\begin{enumerate}
    \item $H^{(k)}$ is the TMS potential for $\ncols = k$;
    \item $\Pi_{\ncols, k}$ is the projection:
    \[
        \Pi_{\ncols, k} \colon (l_1, \dots, l_{\ncols}, \theta_1, \dots, \theta_{\ncols -1}, b_1, \dots, b_{\ncols}) \mapsto (l_1, \dots, l_k, \theta_{1}, \dots, \theta_{k-1}, b_1, \dots, b_k );
    \]
    \item for each $i \in B^+$,
    \begin{align*}
        H^+_i(l, \theta, b) &= \sum_{j \ne i} \bigg[ \big (l_i l_j \cos(\theta_{ij}) \big)^2 + 3\big( l_i l_j \cos(\theta_{ij}) \big) b_i + 3b_i^2 \bigg] \\
        & \quad + \big[ (1 - l_i^2)^2 - 3(1 - l_i^2)b_i + 3b_i^2 \big],
    \end{align*}
    where $\theta_{ij}$ denote the angle between $W_i$ and $W_j$.
\end{enumerate}
\end{theorem}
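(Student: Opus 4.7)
The plan is to mirror the structure of Corollary \ref{corollary:minimallysingular}, isolating three groups of columns: the active indices $\{1,\ldots,k\}$, the positive-bias vestigial indices $B^+$, and the negative-bias vestigial indices $\{k+1,\ldots,\ncols\} \setminus B^+$. Each of these groups should contribute one of the three terms in the claimed decomposition, and the interactions across groups should vanish in a sufficiently small neighborhood of $(l^*,\theta^*,b^*)$.

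First I would specify the neighborhood $U \subset \paramspace$. All strict inequalities holding at $(l^*,\theta^*,b^*)$, namely $x>0$, $y<0$, $z>0$, $b_i<0$ for each $i>k$ with $i\notin B^+$, and the $k$-gon assumption $-x^2\cos(s\cdot 2\pi/k)<y$, persist on some open set. I would shrink $U$ further so that each $l_i$ with $i>k$ stays smaller than any fixed bound determined by $|y|$ and $z$, and so that the angles $\theta_k,\ldots,\theta_\ncols$ lie in a single chamber (the only angle constraint at the critical point is $\theta_k+\cdots+\theta_\ncols = 2\pi/k$, so perturbing each $\theta_i$ slightly yields a well-defined wedge set $\mathcal{M}$).

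Second I would verify that the indicator functions appearing in (\ref{eq:fulllocalpotential}) take constant values on $U$. For $i\notin B^+$ with $i>k$, the small norm of $W_i$ forces $\delta(T_i)=0$ and $\delta(T_{M_{ij}}^{(1)}) = \delta(T_{M_{ij}}^{(2)}) = 0$ whenever $M_{ij}$ involves $i$, because $|W_i\cdot W_j|$ stays small while $|b_i|$ is bounded away from zero; this yields $H_i^- \equiv 1$ on $U$, contributing $\ncols-k-\sigma$ in total. For $i\in B^+$, the same magnitude argument forces $\delta(S_{ij})=0$ for every $j\neq i$, so $H_i^+$ collapses on $U$ to precisely the quadratic expression in the theorem. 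For $i\le k$, the hypothesis $-x^2\le y$ gives $\delta(P_i)=1$ throughout $U$, while $\delta(P_{i,j})=0$ for $j>k$ (again because $W_i\cdot W_j$ is small in magnitude whereas $-b_i$ is bounded away from zero). Thus $H_i^-$ for $i\le k$ retains only interactions with indices in $\{1,\ldots,k\}\setminus\{i\}$ plus the diagonal $N_i$ term; these are exactly the terms appearing in the $k$-column TMS potential, so $\sum_{i\le k} H_i^- = H^{(k)}\circ \Pi_{\ncols,k}$ on $U$ by the identification used in Corollary \ref{corollary:minimallysingular}. Summing the three contributions yields the stated decomposition.

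The main obstacle I anticipate is the chamber-boundary ambiguity at $(l^*,\theta^*,b^*)$: when $l_i=0$ for $i>k$, the individual angles $\theta_k,\ldots,\theta_\ncols$ are not pinned down, and the critical point lies on the boundary of several chambers (as observed for the $5$-gon in Appendix \ref{section:regularkgon_klessl}). Fortunately, the vanishing of $l_i$ makes every wedge involving such $i$ disappear uniformly across nearby chambers, so the decomposition produced above does not depend on the chamber choice. Checking that the three indicator arguments in the previous paragraph go through uniformly across the competing chambers is the only real technical subtlety; once this is handled, the remaining arithmetic is a direct transcription of the calculation in Corollary \ref{corollary:minimallysingular}.
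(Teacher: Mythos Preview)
Your proposal is correct and follows essentially the same approach as the paper, whose proof is the single line ``This follows from applying the same argument in the $\ncols = 6$ case.'' Your three-group decomposition (active indices, positive-bias vestigials, negative-bias vestigials) and the indicator-function analysis you outline are precisely the content of that $\ncols=6$ calculation combined with Corollary~\ref{corollary:minimallysingular}, and your discussion of the chamber-boundary ambiguity correctly identifies the one genuine subtlety and resolves it the same way the paper does.
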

\begin{proof}
This follows from applying the same argument in the $\ncols = 6$ case.
\end{proof}

\begin{theorem}
Let $k \in \mathbb{Z}_{> 4}$ and $s$ be the unique integer in the interval $[\frac{k}{4}-1, \frac{k}{4})$.  If a $k$-gon with coordinate
\[
    l_1 = \cdots = l_k = x; \quad \theta_1 = \cdots = \theta_k = \frac{2\pi}{k}; \quad b_1 = \cdots = b_k = y
\]
for some $x > 0$ and $y < 0$ satisfying
\[
    -x^2\cos\left(s \cdot \frac{2\pi}{k}\right) \leq y
\]
is a critical point of $H(l, \theta, b)$ with constraint $\theta_1 + \cdots + \theta_k = 2\pi$ for $\ncols = k$, then for any integer $0 \leq \sigma \leq \ncols - k$, the $k^{\sigma+}$-gons defined in Section \ref{section:tms_critical_pts}
are critical points of $H(l, \theta, b)$ with constraint $\theta_1 + \cdots + \theta_{\ncols} = 2\pi$ for any $\ncols > k$.
\end{theorem}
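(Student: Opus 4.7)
The approach is to leverage the local decomposition of $H$ provided by Theorem \ref{thm: k+gon}. In an open neighbourhood of the $k^{\sigma+}$-gon $(l^*,\theta^*,b^*)$ we have
\[
H(l,\theta,b) = H^{(k)}\circ\Pi_{\ncols,k}(l,\theta,b) + (\ncols-(k+\sigma)) + \sum_{i\in B^+} H_i^+(l,\theta,b),
\]
where $B^+\subseteq\{k+1,\dots,\ncols\}$ collects the $\sigma$ indices carrying positive biases. By linearity of partial differentiation, the Lagrange multiplier conditions for the constraint $\theta_1+\cdots+\theta_\ncols=2\pi$ can be verified summand by summand.

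The first step is to handle the background term $H^{(k)}\circ\Pi_{\ncols,k}$. Since this depends on the full parameter only through $(l_1,\dots,l_k,\theta_1,\dots,\theta_{k-1},b_1,\dots,b_k)$, and these coordinates at the $k^{\sigma+}$-gon agree with those at the standard all-negative-bias $k$-gon in $\ncols$ dimensions, its partial derivatives at our point coincide with those at that standard $k$-gon. Together with the additive constant (which drops out of every partial), this contribution is therefore exactly the gradient of $H$ at the standard negative-bias $k$-gon. The hypothesis on $(x,y)$ together with Theorem \ref{theorem:classify_gons} then supplies a common Lagrange multiplier $\lambda^*$ such that this contribution yields $\partial_{\theta_a}=\lambda^*$ and $\partial_{l_a}=\partial_{b_a}=0$ for every $a=1,\dots,\ncols$.

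The second step is to show that each $H_i^+$ (with $i\in B^+$, so $l_i=0$ and $b_i=1/(2\ncols)$) contributes nothing to any partial derivative at the $k^{\sigma+}$-gon. Inspecting
\[
H_i^+ = \sum_{j\neq i}\left[(l_i l_j\cos\theta_{ij})^2 + 3(l_i l_j\cos\theta_{ij})b_i + 3b_i^2\right] + (1-l_i^2)^2 - 3(1-l_i^2)b_i + 3b_i^2,
\]
every term containing a $\theta$-variable or an $l_j$ with $j\neq i$ carries an explicit factor of $l_i$, so $\partial_{\theta_a}H_i^+$ and $\partial_{l_j}H_i^+$ for $j\neq i$ all vanish at $l_i=0$, and $\partial_{b_j}H_i^+$ is identically zero for $j\neq i$. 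The two remaining cases are direct: $\partial_{l_i}H_i^+|_{l_i=0}=3b_i\sum_{j\neq i}l_j\cos\theta_{ij}$, which, because $l_j=0$ for every $j\notin\{1,\dots,k\}$, reduces to $3b_i x\sum_{j=1}^k\cos\theta_{ij}$, and this vanishes by the elementary identity $\sum_{m=0}^{k-1}\cos(\phi+2\pi m/k)=0$ applied to the regular $k$-gon angular positions of $W_1,\dots,W_k$; and $\partial_{b_i}H_i^+|_{l_i=0,\,b_i=z}=6\ncols z-3$, which is zero precisely because the definition of the $k^{\sigma+}$-gon sets $z=1/(2\ncols)$.

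Combining the two steps, the full gradient of $H$ at the $k^{\sigma+}$-gon coincides with that of the background term, so $\partial_{\theta_a}H=\lambda^*$ and $\partial_{l_a}H=\partial_{b_a}H=0$ for all $a$, establishing that the $k^{\sigma+}$-gon is a critical point under the constraint. The main obstacle is the geometric identity $\sum_{j=1}^k\cos\theta_{ij}=0$: although elementary, one must verify that it holds regardless of how one parametrises the ``direction'' of the vestigial zero-length column $W_i$, since with $l_i=0$ the vector $W_i$ has no intrinsic angular position. The cyclic symmetry of the $k$-gon vertices makes the identity insensitive to the choice of reference phase, which settles the issue.
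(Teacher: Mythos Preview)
Your proof is correct and follows essentially the same approach as the paper: the paper's own argument (stated only as ``the same argument in the $\ncols=6$ case'') decomposes $H$ via Theorem~\ref{thm: k+gon}, uses the hypothesis on $(x,y)$ to handle the $H^{(k)}\circ\Pi_{\ncols,k}$ piece, and then verifies directly that each $H_i^+$ has vanishing partials at $l_i=0,\ b_i=1/(2\ncols)$, invoking the same trigonometric identity $\sum_{m=0}^{k-1}\cos(\phi+2\pi m/k)=0$ for the $\partial_{l_i}$ term. Your write-up is in fact more explicit than the paper's one-line deferral to the worked $\ncols=6$ example.
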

\begin{proof}
This follows from applying the same argument in the $\ncols = 6$ case.
\end{proof}
\begin{remark}
In Lemma \ref{lemma_5plusgon}, we show that the $5^+$-gon for $\ncols = 6$ is minimally singular and compute the local learning coefficient. In general, we do not know whether the $k^{\sigma+}$-gons are minimally singular or not for $\ncols > k$. However, given a $k^{\sigma +}$-gon, the method to check whether it is minimally singular or not is the same as the method used in Lemma \ref{lemma_5plusgon}. We compute the Hessian of the TMS potential at each $k^{\sigma +}$-gon in the original parameter space $M_{\nrows, \ncols}(\mathbb{R}) \times \mathbb{R}^\ncols$. Then check that the Hessian of the TMS potential is non-degenerate in the direction normal to the tangent space of $k^{\sigma +}$-gons. If this is the case, then we conclude that the $k^{\sigma +}$-gon is minimally singular by Morse-Bott lemma.
\end{remark}

%
%
\clearpage
\newpage
\section{$4$-gons}\label{section:4gons} 

In this section we discuss $4$-gons. The subtlety here is that the TMS potential is not analytic at $4$-gons. In particular, the directional Hessian of the TMS potential at $4$-gons depends on the direction.

\subsection{$\ncols = 4$} \label{subsection:4gonwhencequals4}

\subsubsection{Standard $4$-gon} \label{subsection:standard4gon}
For $\ncols = 4$, consider the (standard) $4$-gon $(l^*, \theta^*, b^*)$, where
\[
    l^* = (1, 1, 1, 1), \, \theta^* = \left(\frac{\pi}{2}, \frac{\pi}{2}, \frac{\pi}{2}, \frac{\pi}{2}\right), \, b^* = (0, 0, 0, 0).
\]
Since $H(l, \theta, b) \geq 0$ and $H(l^*, \theta^*, b^*) = 0$, we know the $4$-gon is a global minimum. Let's work out the explicit form of $H$ around the $4$-gon $(l^*, \theta^*, b^*)$. Since $\cos(\pi/2) = 0$, the $4$-gon is at boundary of some chambers. Let $\mathcal{M} = \{M_{ij}\}$ be a chamber whose boundary contains the $4$-gon $(l^*, \theta^*, b^*)$. We claim that each wedge $M_{ij}$ is either empty or contains one number. Assume, by contradiction, there is a $M_{ij}$ in $\mathcal{M}$ contains more than one number. Then we show that the $4$-gon $(l^*, \theta^*, b^*)$ is not in the boundary of the chamber described by $\mathcal{M}$. Let $\epsilon_\theta \in \mathbb{R}_{>0}$ be such that
\[
    \pi - 2\epsilon_\theta > \frac{\pi}{2}.
\]
Then for any $\theta, \theta' \in \left(\frac{\pi}{2} - \epsilon, \frac{\pi}{2} + \epsilon\right)$,
\[
    \theta + \theta' > \frac{\pi}{2}-\epsilon_\theta + \frac{\pi}{2}-\epsilon_\theta = \pi - 2\epsilon_\theta > \frac{\pi}{2}.
\]
Let $\epsilon_l \in \mathbb{R}_{>0}$ be such that $1-\epsilon_l > 0$. So $(l^*, \theta^*, b^*)$ has an open neighbourhood given by
\[
    (1-\epsilon_l, 1+\epsilon_l)^4 \times \left(\frac{\pi}{2} - \epsilon_{\theta}, \frac{\pi}{2} + \epsilon_{\theta}\right)^3 \times \mathbb{R}^4
\]
which does not intersect the interior of the chamber described by $\mathcal{M}$. Thus, $(l^*, \theta^*, b^*)$ is not in the boundary of the chamber described by $\mathcal{M}$ when some of $M_{ij}$ in $\mathcal{M}$ contains more than one element. So each wedge $M_{ij}$ in $\mathcal{M}$ contains at most one element. Because of the permutation symmetry, there are four possible $\mathcal{M}$:
\begin{center}
    $\mathcal{M}_1$ =
    \begin{tabular}{||c||} 
    \hline
    $(1)$    \\ 
    \hline
    \end{tabular}, \quad
    $\mathcal{M}_2$ =
    \begin{tabular}{||c||} 
    \hline
    $(1)$    \\ 
    \hline
    $(2)$      \\
    \hline
    \end{tabular}, \quad
    $\mathcal{M}_3$ =
    \begin{tabular}{||c||} 
    \hline
    $(1)$    \\ 
    \hline
    $(3)$      \\
    \hline
    \end{tabular}, \quad
    $\mathcal{M}_4$ =
    \begin{tabular}{||c||} 
    \hline
    $(1)$    \\ 
    \hline
    $(2)$      \\
    \hline
    $(3)$       \\
    \hline
    \end{tabular}.
\end{center}
Since $-1^2 < 0$, $(l^*, \theta^*, b^*)$ has an open neighbourhood in which 
\[
    -l_i^2 < b_i
\]
for $i = 1, 2, 3, 4$. Since $\cos(\pi) = -1 < 0$, the $4$-gon $(l^*, \theta^*, b^*)$ has an open neighbourhood in which for all $i = 1, 2, 3, 4$, if $b_i > 0$, then
\[
    l_i l_{i+2} \cos(\theta_i + \theta_{i+1}) < -b_i.
\]
Recall the formula \ref{eq:fulllocalpotential} for the local TMS potential in Section \ref{section:high_sparsity_limit}. The $4$-gon $(l^*, \theta^*, b^*)$ has an open neighbourhood in which the local TMS potential is
\begin{equation} 
    H(l, \theta, b) = \sum_{i=1}^4 \delta(b_i \leq 0) H^{-}_i(l, \theta, b) + \delta(b_i > 0) H^{+}_i(l, \theta, b),
\end{equation}
where 
\begin{align*}
    H^{-}_i(l, \theta, b) &= \delta(T_{M_{(i-1)1}}^{(1)}) \frac{\big[ l_{i-1} l_i \cos(\theta_{i-1}) + b_i \big]^3}{l_{i-1} l_i \cos(\theta_{i-1})} + \delta(T^{(2)}_{M_{i1}}) \frac{\big[ l_i l_{i+1} \cos(\theta_i) + b_i \big]^3}{l_i l_{i+1} \cos(\theta_i)} \\
    & \quad  + \bigg [ (1 - l_i^2)^2 - 3(1 - l_i^2)b_i + 3b_i^2 + \frac{b_i^3}{l_i^4} + \frac{b_i^3}{l_i^2} \bigg ],
\end{align*}
and
\begin{align*}
    H^{+}_i(l, \theta, b) &= \delta(S_{i (i+1)}) \bigg[ \frac{-b_i^3}{l_i l_{i+1} \cos(\theta_i)} \bigg] \\
    & \quad + \big( 1 - \delta(S_{i(i+1)}) \big) \bigg [ \big( l_i l_{i+1} \cos(\theta_i) \big)^2 + 3\big( l_i l_{i+1} \cos(\theta_i) \big) b_i + 3 b_i^2 \bigg ] \\
    & \quad + \bigg[ \frac{-b_i^3}{l_i l_{i+2} \cos(\theta_i + \theta_{i+1})} \bigg] \\
    & \quad + \delta(S_{i(i+3)})  \bigg[ \frac{-b_i^3}{l_i l_{i+3} \cos(\theta_{i+3})} \bigg] \\
    & \quad + \big( 1 - \delta(S_{i(i+3)}) \big) \bigg [ \big( l_i l_{i+3} \cos(\theta_{i+3}) \big)^2 + 3\big( l_i l_{i+3} \cos(\theta_{i+3}) \big) b_i + 3 b_i^2 \bigg ] \\
    & \quad + (1 - l_i^2)^2 - 3(1 - l_i^2)b_i + 3b_i^2
\end{align*}
We checked that each term in $H_i^-(l, \theta, b)$ and $H_i^+(l, \theta, b)$ has gradient zero when approaching $(l^*, \theta^*, b^*)$ in the region specified by the indicator function associated with it. Thus, the TMS potential is differentiable at the $(l^*, \theta^*, b^*)$, and $(l^*, \theta^*, b^*)$ is a critical point. However, the TMS potential is not continuously differentiable twice, i.e. there are different directions in which directional Hessians are different. We checked that $(l^*, \theta^*, b^*)$ is minimally singular in each subspace with nonempty interior containing $(l^*, \theta^*, b^*)$ in the boundary. So we obtain a list $\{4, 4.5, 5, 5.5\}$ of local learning coefficient when approached from these different subspaces.

\subsubsection{$4^{\phi -}$-gon} \label{subsubsection:4phi-gon}

Let $B^- \subset \{1, 2, 3, 4\}$ and $\phi = |B^-|$. Consider the $4^{\phi-}$-gon (Section \ref{section:tms_critical_pts}) with coordinate
\begin{align*}
    \theta^*: & \theta_1 = \theta_2 = \theta_{3} = \theta_4 = \frac{\pi}{2}; \\
    b^*: & b_i < 0 \text{ if } i \in B^- \text{ and } b_i = 0 \text{ if } i \notin B^-; \\
    l^* : & 0 < l_i^2 < -b_i \text{ if } i \in B^- \text{ and } l_i = 1 \text{ if } i \notin B^-.
\end{align*}
Since $\cos(\pi/2) = 0$, the $4^{\phi -}$-gon is on the boundary of some chambers. Let $\mathcal{M} = \{M_{ij}\}$ be a chamber whose boundary contains $4^{\phi -}$-gon. Using the same arguments in Section \ref{subsection:standard4gon}, we know that each $M_{ij}$ is either empty or contains one number.  Because of the $O(2)$-symmetry, there are four possible $\mathcal{M}$:
\begin{center}
    $\mathcal{M}_1$ =
    \begin{tabular}{||c||} 
    \hline
    $(1)$    \\ 
    \hline
    \end{tabular}, \quad
    $\mathcal{M}_2$ =
    \begin{tabular}{||c||} 
    \hline
    $(1)$    \\ 
    \hline
    $(2)$      \\
    \hline
    \end{tabular}, \quad
    $\mathcal{M}_3$ =
    \begin{tabular}{||c||} 
    \hline
    $(1)$    \\ 
    \hline
    $(3)$      \\
    \hline
    \end{tabular}, \quad
    $\mathcal{M}_4$ =
    \begin{tabular}{||c||} 
    \hline
    $(1)$    \\ 
    \hline
    $(2)$      \\
    \hline
    $(3)$       \\
    \hline
    \end{tabular}.
\end{center}
For $i \notin B^-$, we have $l_i^2 = 1 > 0 = b_i$. For $i \in B^-$, we have $-l_i^2 > b_i$, $-l_i l_{i+1} \cos (\theta_i) = 0 > b_i$, and $-l_{i-1} l_i \cos(\theta_{i-1}) = 0 > b_i$. Since $\cos(\pi) = -1 < 0$, the $4^{\phi -}$-gon has an open neighbourhood in which for all $i = 1, 2, 3, 4$, if $b_i > 0$, then
\[
    l_i l_{i_1} \cos(\theta_i + \theta_{i+1}) < b_i.
\]
So the $4^{\phi -}$-gon has an open neighbourhood in which the local TMS potential is 
\begin{equation} 
    H(l, \theta, b) = \sum_{i=1}^4 \delta(b_i \leq 0) H^{-}_i(l, \theta, b) + \delta(b_i > 0) H^{+}_i(l, \theta, b),
\end{equation}
where
\begin{enumerate}
    \item if $i \notin B^-$, then
    \begin{align*}
        H^{-}_i(l, \theta, b) &= \delta(T_{M_{(i-1)1}}^{(1)}) \frac{\big[ l_{i-1} l_i \cos(\theta_{i-1}) + b_i \big]^3}{l_{i-1} l_i \cos(\theta_{i-1})} + \delta(T^{(2)}_{M_{i1}}) \frac{\big[ l_i l_{i+1} \cos(\theta_i) + b_i \big]^3}{l_i l_{i+1} \cos(\theta_i)} \\
        & \quad  + \bigg [ (1 - l_i^2)^2 - 3(1 - l_i^2)b_i + 3b_i^2 + \frac{b_i^3}{l_i^4} + \frac{b_i^3}{l_i^2} \bigg ],
    \end{align*}
    and
    \begin{align*}
    H^{+}_i(l, \theta, b) &= \delta(S_{i (i+1)}) \bigg[ \frac{-b_i^3}{l_i l_{i+1} \cos(\theta_i)} \bigg] \\
    & \quad + \big( 1 - \delta(S_{i(i+1)}) \big) \bigg [ \big( l_i l_{i+1} \cos(\theta_i) \big)^2 + 3\big( l_i l_{i+1} \cos(\theta_i) \big) b_i + 3 b_i^2 \bigg ] \\
    & \quad + \bigg[ \frac{-b_i^3}{l_i l_{i+2} \cos(\theta_i + \theta_{i+1})} \bigg] \\
    & \quad + \delta(S_{i(i+3)})  \bigg[ \frac{-b_i^3}{l_i l_{i+3} \cos(\theta_{i+3})} \bigg] \\
    & \quad + \big( 1 - \delta(S_{i(i+3)}) \big) \bigg [ \big( l_i l_{i+3} \cos(\theta_{i+3}) \big)^2 + 3\big( l_i l_{i+3} \cos(\theta_{i+3}) \big) b_i + 3 b_i^2 \bigg ] \\
    & \quad + (1 - l_i^2)^2 - 3(1 - l_i^2)b_i + 3b_i^2;
    \end{align*}
    \item if $i \in B^-$, then
    \begin{align*}
        H_i^-(l, \theta, b) = 1,
    \end{align*}
    and
    \begin{align*}
        H_i^+(l, \theta, b) = 0.
    \end{align*}
\end{enumerate}
If $\phi = 4$, then the $4^{\phi -}$-gon has an open neighbourhood in which the TMS potential is the zero function, hence it is a critical point with local learning coefficient $0$. For $0 \leq \phi \leq 3$, we checked that each term in $H_i^-(l, \theta, b)$ and $H_i^+(l, \theta, b)$ has gradient zero when approaching the $4^{\phi -}$-gon in the region specified by the indicator function associated with it. Thus, the TMS potential is differentiable at the $4^{\phi -}$-gon, and the $4^{\phi -}$-gon is a critical point. However, the TMS potential is not continuously differentiable twice, i.e. there are different directions in which directional Hessians are different. We checked that the $4^{\phi -}$-gon is minimally singular in each subspace with nonempty interior containing the $4^{\phi -}$-gon in the boundary. So we obtain lists
\begin{align*}
    \phi = 1: & \{3, 3.5, 4, 4.5\}; \\
    \phi = 2: & \{2, 2.5, 3, 3.5\} \text{ for $B^- = \{i, i+1\}$, where $i = 1, 2, 3 ,4$}, \\
    &\{2.5, 3, 3.5, 4\} \text{ for $B^- = \{i, i+2\}$, where $i = 1, 2, 3 ,4$}; \\
    \phi = 3: &\{1, 1.5, 2, 2.5\}; \\
    \phi = 4: &\{0\}.
\end{align*}
of local learning coefficient for each when approached from these different subspaces.

\subsection{$\ncols > 4$}\label{section:n_greater_4}

We analyse four typical $4$-gons appearing as critical points of TMS potential when $\ncols > 4$ in this section. In particular, we state their coordinates (hence computing their loss), and show they are actually critical points of the TMS potential. Because of the permutation symmetry, we may assume that $4$-gons have $\theta$-coordinate
\[
    \theta_1 = \theta_2 = \theta_3 = \frac{\pi}{2}, \quad \theta_{4} + \cdots + \theta_{\ncols} = \frac{\pi}{2}.
\]
So for given $l_1, \dots, l_{\ncols}$ and biases $b_1, \dots, b_{\ncols}$, there is a set of $4$-gons given by $(\theta_4, \dots, \theta_{\ncols})$ with $\theta_4 + \cdots + \theta_{\ncols} = \pi/2$.
As discussing in Appendix \ref{subsection:4gonwhencequals4}, $4$-gons are in the boundary of some chambers. Let $\mathcal{M} = \{M_{ij}\}$ be wedges describing a chamber containing $4$-gons in its boundary.


Consider the standard $4$-gons with coordinate 
\begin{align*}
    &l_1 = l_2 = l_3 = l_4 = 1, \quad l_{5} = \cdots = l_{\ncols} = 0; \\
    &\theta_1 = \theta_2 = \theta_3 = \frac{\pi}{2}, \quad \theta_{4} + \cdots + \theta_{\ncols} = \frac{\pi}{2}; \\
    &b_1 = b_2 = b_3 = b_4 = 0, \quad b_{k+1}, \dots, b_{\ncols} < 0.
\end{align*}
\begin{theorem} \label{thm:4goninc>4}
For a fixed $\mathcal{M}$, let $H^{(4)}$ denote the TMS potential in some neighbourhood of the standard $4$-gon for $\ncols = 4$ (Appendix \ref{subsection:standard4gon}). Consider the projection
\[
    \Pi_{\ncols, 4} \colon (l_1, \dots, l_{\ncols}, \theta_1, \dots, \theta_{\ncols -1}, b_1, \dots, b_{\ncols}) \mapsto (l_1, \dots, l_4, \theta_1, \dots, \theta_{3}, b_1, \dots, b_4).
\]
The TMS potential in the chamber described by $\mathcal{M}$ is  
\[
    H(l, \theta, b) = H^{(4)} \circ \Pi_{\ncols, 4}(l, \theta, b) + \sum_{i=5}^{\ncols} G_i(l, \theta, b) + (\ncols - 4),
\]
where 
\begin{align*} 
    G_i(l, \theta, b) &= \delta(T^{(1)}_{M_{i(\ncols-i+1)}}) \frac{\big[ l_i l_1 \cos\big( \sum_{k \in M_{i(\ncols-i+1)}} \theta_{k} \big) + b_1 \big ]^3}{l_i l_1 \cos \big( \sum_{k \in M_{i(\ncols-i+1)}} \theta_{k} \big)} \\
    &\quad + \delta(T^{(1)}_{M_{i(\ncols-i+2)}}) \frac{\big[ l_i l_2 \cos\big( \sum_{k \in M_{i(\ncols-i+2)}} \theta_{k} \big) + b_2 \big ]^3}{l_i l_2 \cos \big( \sum_{k \in M_{i(\ncols-i+2)}} \theta_{k} \big)} \\
    &\quad + \delta(T^{(2)}_{M_{3(i-3)}}) \frac{\big[ l_3 l_i \cos\big( \sum_{k \in M_{3(i-3)}} \theta_{k} \big) + b_3 \big ]^3}{l_3 l_i \cos \big( \sum_{k \in M_{3(i-3)}} \theta_{k} \big)} \\
    &\quad + \delta(T^{(2)}_{M_{4(i-4)}}) \frac{\big[ l_4 l_i \cos\big( \sum_{k \in M_{4(i-4)}} \theta_{k} \big) + b_4 \big ]^3}{l_4 l_i \cos \big( \sum_{k \in M_{4(i-4)}} \theta_{k} \big)}.
\end{align*}
\end{theorem}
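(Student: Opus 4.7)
The plan is to expand the general local form (\ref{eq:fulllocalpotential}) of the TMS potential in a small neighborhood $U$ of the standard $4$-gon inside the chamber $\mathcal{M}$ and regroup the resulting sum into the three pieces claimed. First, I would pin down $U$ by the inequalities controlling the self-index indicators $\delta(T_i)$: for $i \in \{1,2,3,4\}$, $l_i \approx 1$ and $b_i \approx 0$ force $\delta(T_i) = 1$, so the corresponding self-term contributes the standard $H^{(4)}$ self-piece; for $i \in \{5, \ldots, \ncols\}$, $l_i \approx 0$ while $b_i < 0$ is uniformly bounded away from zero, giving $\delta(T_i) = 0$ and a constant contribution of $1$ per such index, summing to $(\ncols - 4)$.

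Next, I would classify each wedge $M_{ij} \in \mathcal{M}$ by the residues of its endpoints $i$ and $i+j$ modulo $\ncols$ into three classes: class (A) where both endpoints lie in $\{1,2,3,4\}$; class (B) where exactly one lies in $\{5,\dots,\ncols\}$; and class (C) where both lie in $\{5,\dots,\ncols\}$. For class (C), the product $l_i l_{i+j}$ is arbitrarily small on $U$ while both $b_i, b_{i+j}$ are bounded strictly below zero, so the inequalities defining $T^{(1)}_{M_{ij}}$ and $T^{(2)}_{M_{ij}}$ both fail and these wedges drop out. For class (A), the surviving wedge terms combined with the $\{1,2,3,4\}$-indexed self-terms reproduce $H^{(4)} \circ \Pi_{\ncols,4}$ via the constraint $\sum_i \theta_i = 2\pi$, which lets $\theta_4 + \cdots + \theta_{\ncols}$ play the role of the fourth $\theta$-parameter of the four-column problem, while the restriction of $\mathcal{M}$ to wedges supported on $\{1,2,3,4\}$ matches one of the admissible chambers listed in Appendix \ref{subsection:standard4gon}.

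For class (B) wedges, exactly one of $\delta(T^{(1)}_{M_{ij}})$ and $\delta(T^{(2)}_{M_{ij}})$ survives: the one labeled by the bias at the endpoint in $\{1,2,3,4\}$ (which is near zero), while the other is killed by the large-magnitude bias at the $\geq 5$ endpoint. The admissibility condition $\theta_i + \cdots + \theta_{i+j-1} < \pi/2$ is extremely restrictive here because $\theta_1 = \theta_2 = \theta_3 = \pi/2$ at the critical point: any wedge starting at or crossing indices $1, 2,$ or $3$ other than as a single element has angle sum at least $\pi/2$. A case analysis then shows the only class-(B) wedges that can be in $\mathcal{M}$ are $M_{i, \ncols-i+1}$ and $M_{i, \ncols-i+2}$ for $i \geq 5$ (ending at column $1$ or $2$) together with $M_{3, (i+j)-3}$ and $M_{4, (i+j)-4}$ ending at column $i+j \geq 5$; reindexing the latter two families by the $\geq 5$ endpoint collects the surviving terms precisely into the four summands of $G_i$.

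The main obstacle I anticipate is the combinatorial bookkeeping in the class-(B) step: one must verify both that the four displayed wedge families exhaust the admissible options and that the surviving indicator is consistently the one attached to the $\{1,2,3,4\}$-endpoint bias. The remaining identifications (class (C) vanishing, class (A) producing $H^{(4)} \circ \Pi_{\ncols,4}$, and the excess self-terms producing the constant $\ncols - 4$) reduce to direct continuity and rearrangement arguments once the neighborhood $U$ has been chosen small enough.
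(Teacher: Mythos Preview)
Your proposal is correct and follows exactly the approach the paper intends: the paper's own proof is the single line ``This theorem follows from the same arguments used in Corollary~\ref{corollary:minimallysingular}'', and your three-class wedge decomposition (endpoints both in $\{1,2,3,4\}$, mixed, or both in $\{5,\dots,\ncols\}$) together with the $\delta(T_i)$ bookkeeping is precisely a specialization of that corollary's argument to the $4$-gon. Your class-(B) case analysis---that only wedges starting at $3$ or $4$ (reaching a column $\geq 5$) or starting at some $i\geq 5$ (reaching column $1$ or $2$) can lie in $\mathcal{M}$, with the surviving indicator always attached to the $\{1,2,3,4\}$-endpoint bias---is the combinatorial content the paper leaves implicit, and it matches the four summands of $G_i$ exactly.
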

\begin{proof}
This theorem follows from the same arguments used in Corollary \ref{corollary:minimallysingular}. 
\end{proof}
Thus, we conclude that the standard $4$-gons are critical points of the TMS potential by checking that each term in the TMS potential has gradient zero when approaching the standard $4$-gons in the region specified by the indicator function associated to it.

Let $B^- \subset \{1, 2, 3, 4\}$. Let $\phi = |B^-|$. Consider the $4^{\phi -}$-gons (Section \ref{section:tms_critical_pts}) with coordinate
\begin{align*}
    \theta^*: & \quad \theta_1 = \theta_2 = \theta_{3} = \frac{\pi}{2}, \quad \theta_4 + \cdots + \theta_c = \frac{\pi}{2} \\
    b^*: & \quad \text{for $i = 1, 2 ,3 ,4$, } b_i < 0 \text{ if } i \in B^- \text{ and } b_i= 0 \text{ if } i \notin B^-, \\
    &\quad \text{for $j = 5, 6, \cdots, \ncols$, } b_j < 0; \\
    l^* : & \quad \text{for $i = 1, 2, 3, 4$, }  0 < l_i^2 < -b_i \text{ if } i \in B^- \text{ and } l_i = 1 \text{ if } i \notin B^-, \\
    & \quad \text{for $j = 5, 6, \dots, \ncols$, } l_j = 0.
\end{align*}
\begin{theorem}
For a fixed $\mathcal{M}$, let $H^{(4,-)}$ denote the TMS potential in some neighbourhood of $4^{\phi -}$-gon for $\ncols = 4$ (Appendix \ref{subsubsection:4phi-gon}). Consider the projection
\[
    \Pi_{\ncols, 4} \colon (l_1, \dots, l_{\ncols}, \theta_1, \dots, \theta_{\ncols -1}, b_1, \dots, b_{\ncols}) \mapsto (l_1, \dots, l_4, \theta_1, \dots, \theta_{3}, b_1, \dots, b_4).
\]
The TMS potential in the chamber described by $\mathcal{M}$ is  
\[
    H(l, \theta, b) = H^{(4,-)} \circ \Pi_{\ncols, 4}(l, \theta, b) + \sum_{i = 5}^\ncols G_i(l, \theta, b) +(\ncols - 4),
\]
where $G_i(l, \theta, b)$ is defined in Theorem \ref{thm:4goninc>4}.
\end{theorem}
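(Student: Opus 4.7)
The plan is to mirror the proof of Theorem \ref{thm:4goninc>4} (the standard $4$-gon case for $\ncols > 4$) while replacing the role played by the standard $4$-gon with that of the $4^{\phi-}$-gon. As in Corollary \ref{corollary:minimallysingular} and Theorem \ref{thm:4goninc>4}, the strategy is to exhibit an explicit open neighbourhood $U$ of the critical point on which the piecewise-defined local potential (given by equation \ref{eq:fulllocalpotential}) simplifies to the claimed closed-form expression, and then to read off the decomposition directly.

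First I would construct $U = L \times \Theta \times B$ as a product of neighbourhoods of the $l$, $\theta$, $b$ components. The choices must ensure three things simultaneously. (a) The inequalities governing the ``main'' block $i,j \in \{1,2,3,4\}$ restrict to exactly those defining the chamber appearing around the $\ncols=4$ $4^{\phi-}$-gon of Appendix \ref{subsubsection:4phi-gon}, so the contribution from indices $1,\dots,4$ agrees with $H^{(4,-)} \circ \Pi_{\ncols,4}$. (b) For each vestigial index $i \ge 5$, the conditions $l_i = 0$ and $b_i < 0$ (together with continuity) force $-l_i^2 > b_i$ and $-l_i l_j \cos(\cdot) > b_i$ for every wedge $M_{ij}$ starting at index $i$, which collapses the corresponding $H_i^-$ contribution to the constant $1$, producing the $\ncols-4$ summand. (c) For any wedge $M_{ab}$ contained entirely in $\{5,\dots,\ncols\}$, smallness of $l_a l_{a+b-1}$ likewise makes such wedges contribute zero. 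What then survives are exactly wedges of mixed type, with one endpoint in $\{1,2,3,4\}$ and the other in $\{5,\dots,\ncols\}$; these collect into the $G_i$ terms of Theorem \ref{thm:4goninc>4}.

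Next I would verify the algebraic identity on $U$. On the ``main'' block, the set of wedges on $\{1,2,3,4\}$ appearing in the $\ncols$-ambient potential is in bijection with those used in $H^{(4,-)}$, and the governing indicators $T^{(1)}_{M_{ij}}, T^{(2)}_{M_{ij}}, T_i, S_{ij}$ evaluate identically whether we view the configuration as a parameter for $\ncols = 4$ or for $\ncols$, because the relevant products $l_i l_j \cos(\cdot)$ depend only on the first four columns. A subtlety specific to $4^{\phi-}$-gons (absent from the purely negative-bias $k$-gon cases of Corollary \ref{corollary:minimallysingular}) is that for $i \in B^-$ the corresponding $H_i^-$ term already collapses to $1$ at the $\ncols=4$ level, so the claimed sum $H^{(4,-)} \circ \Pi_{\ncols,4}$ already absorbs these collapses and no recounting is needed.

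The main obstacle I expect is the careful bookkeeping around the boundary structure: because $\theta_1 = \theta_2 = \theta_3 = \pi/2$ with $\theta_4 + \cdots + \theta_\ncols = \pi/2$ places the critical point on the boundary of several chambers simultaneously (see the discussion in Section \ref{section:regularkgon_klessl}), one must fix a single $\mathcal{M}$ and verify that $\Theta$ can be chosen small enough to remain in the interior of the associated chamber away from the critical submanifold, and that the partial derivatives of each indicator-governed summand vanish from the correct side as one approaches the critical point. Once the decomposition is established, showing that the $4^{\phi-}$-gon is a critical point of $H$ reduces, by additivity of $\nabla$, to the corresponding statement for $H^{(4,-)}$ at the $\ncols = 4$ level (Appendix \ref{subsubsection:4phi-gon}) together with the direct computation that each $G_i$ has vanishing gradient at a configuration with $l_i = 0$, which proceeds exactly as in the proof of Theorem \ref{thm: k+gon}.
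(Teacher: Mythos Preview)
Your proposal is correct and follows essentially the same route as the paper: the paper's proof is the single line ``This theorem follows from the same arguments used in Corollary~\ref{corollary:minimallysingular},'' and what you have written is a faithful expansion of those arguments, adapted to the $4^{\phi-}$-gon by replacing $H^{(k)}$ with $H^{(4,-)}$ and retaining the mixed-wedge contributions $G_i$ exactly as in Theorem~\ref{thm:4goninc>4}. Your final paragraph about criticality is not part of the theorem itself but matches the paper's subsequent sentence, so no harm done.
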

\begin{proof}
This theorem follows from the same arguments used in Corollary \ref{corollary:minimallysingular}. 
\end{proof}
Thus, we conclude that the $4^{\phi -}$-gons are critical points of the TMS potential by checking that each term in the TMS potential has gradient zero when approaching the $4^{\phi -}$-gons in the region specified by the indicator function associated to it.

Let $B^+ \subset \{5, 6, \dots, \ncols\}$. Let $\sigma = |B^+|$. Consider the $4^{\sigma +}$-gons (Section \ref{section:tms_critical_pts}) with coordinate
\begin{align*}
    l^* : & \quad l_1 = l_2 = l_3 = l_4 = 1 > 0, \, \, l_{k+1} = \cdots = l_{\ncols} = 0; \\
    \theta^*: & \quad \theta_1 = \theta_2 = \theta_3 = \frac{\pi}{2}, \, \, \theta_4 + \cdots + \theta_{\ncols} = \frac{\pi}{2}; \\
    b^*: & \quad b_1 = b_2 = b_3 = b_4 = 0, \\
    & \quad \text{for $i = 5, 6, \dots, \ncols$, } b_i < 0 \text{ if } i \notin B^+ \text{ and } b_i = \frac{1}{2\ncols} \text{ if } i \in B^+.
\end{align*}
\begin{theorem}
For a fixed $\mathcal{M}$, let $H^{(4)}$ denote the TMS potential in some neighbourhood of the standard $4$-gon for $\ncols = 4$ (Appendix \ref{subsection:standard4gon}). Consider the projection
\[
    \Pi_{\ncols, 4} \colon (l_1, \dots, l_{\ncols}, \theta_1, \dots, \theta_{\ncols -1}, b_1, \dots, b_{\ncols}) \mapsto (l_1, \dots, l_4, \theta_1, \dots, \theta_{3}, b_1, \dots, b_4).
\]
The TMS potential in the chamber described by $\mathcal{M}$ is  
\[
    H(l, \theta, b) = H^{(4)} \circ \Pi_{\ncols, 4}(l, \theta, b) + \sum_{i=5}^\ncols G_i(l, \theta, b) + \big(\ncols - (4 + \sigma)\big) + \sum_{i \in B^+} H^+_i(l, \theta, b),
\]
where $G_i(l, \theta, b)$ is defined in Theorem \ref{thm:4goninc>4}, and for each $i \in B^+$,
\begin{align*}
    H^+_i(l, \theta, b) &= \sum_{j \ne i} \bigg[ \big (l_i l_j \cos(\theta_{ij}) \big)^2 + 3\big( l_i l_j \cos(\theta_{ij}) \big) b_i + 3b_i^2 \bigg] \\
    & \quad + \big[ (1 - l_i^2)^2 - 3(1 - l_i^2)b_i + 3b_i^2 \big],
\end{align*}
where $\theta_{ij}$ denote the angle between $W_i$ and $W_j$.
\end{theorem}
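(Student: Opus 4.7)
The plan is to construct an explicit open neighbourhood $U$ of the $4^{\sigma+}$-gon on which the TMS potential decomposes as claimed, following the template of Corollary \ref{corollary:minimallysingular} and Theorem \ref{thm: k+gon}. The argument should proceed in three stages: (i) identify which wedges $M_{ij}$ and which indicators $\delta(P_i), \delta(Q_{i,j}), \delta(T_i), \delta(S_{ij})$ can remain active on $U$; (ii) use those restrictions to read off the closed-form expression for $H$ from formula (\ref{eq:fulllocalpotential}); and (iii) match the surviving terms with the summands on the right-hand side of the claimed identity.

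For stage (i), I would pick $U$ small enough that three kinds of inequalities are preserved. First, since $l_j = 0$ and $b_j < 0$ for $j \in \{5,\ldots,\ncols\}\setminus B^+$, one can guarantee $-l_j^2 > b_j$ and $-l_i l_j \cos(\cdots) > b_i, b_j$ whenever either end of the wedge lies in this index set, which zeros out all associated cubic terms except for the constant $1$ coming from $(1-\delta(P_j))$; summing over the $\ncols - 4 - \sigma$ such indices produces the constant $\big(\ncols - (4+\sigma)\big)$. Second, for $i \in B^+$, since $b_i = 1/(2\ncols) > 0$ and $l_i = 0$, every term $-l_i l_j \cos(\theta_{ij})$ is $0 < b_i$ on $U$, so $\delta(S_{ij}) = 0$ throughout $U$; only the quadratic branch of $H_i^+$ survives, yielding exactly the stated $H_i^+(l,\theta,b)$. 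Third, for the block $i \in \{1,2,3,4\}$ one restricts to the same chamber analysis as the $\ncols = 4$ standard $4$-gon, which is permissible because the angles $\theta_1 = \theta_2 = \theta_3 = \pi/2$ and the positive biases $b_i = 0$ reproduce precisely the configuration of Appendix \ref{subsection:standard4gon}.

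For stage (ii)-(iii), I would separate the $i$-th column contributions for $i \in \{1,2,3,4\}$ into two groups: wedges $M_{ij}$ or $M_{(i-j)j}$ lying entirely inside $\{1,2,3,4\}$, which reassemble into $H^{(4)} \circ \Pi_{\ncols,4}$, and wedges that cross into the vestigial block $\{5,\ldots,\ncols\}$, which assemble into $\sum_{i=5}^\ncols G_i$ exactly as in Theorem \ref{thm:4goninc>4}. The key is that the surviving cross-wedges for each vestigial index $i \in \{5,\ldots,\ncols\}$ are precisely $M_{i(\ncols-i+1)}$, $M_{i(\ncols-i+2)}$, $M_{3(i-3)}$, $M_{4(i-4)}$, because the $\pi/2$ angles between columns $1$–$4$ prevent any wedge of length $\ge 2$ whose endpoints both sit in $\{1,2,3,4\}$ from being active. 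I would then note that elements $i \in B^+$ with positive biases contribute nothing to $G_i$ on $U$ (again because $\delta(T^{(1)}) = \delta(T^{(2)}) = 0$ when $b_j > 0$ and $l_i = 0$), so the $G_i$ summation runs over all of $\{5,\ldots,\ncols\}$ consistently with stage (i).

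The main obstacle is the careful bookkeeping at stage (i) to verify that no unexpected wedge or indicator survives on $U$: in particular one must rule out interactions between a positive-bias index $i \in B^+$ and a negative-bias index $j \in \{5,\ldots,\ncols\}\setminus B^+$, and check that the $\delta(S_{ij})$ conditions remain off even as $\theta_4 + \cdots + \theta_\ncols$ is perturbed away from $\pi/2$. Once this chamber geometry is nailed down, the algebraic assembly of the four summands on the right-hand side is immediate, and one concludes by appealing to Lagrange multipliers exactly as in Appendix \ref{section:positive_bias} to identify the critical-point condition.
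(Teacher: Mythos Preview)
Your proposal is correct and follows essentially the same route as the paper. The paper's proof is a one-line reference to Theorem \ref{thm: k+gon}, and what you have written is a faithful unpacking of that argument (combined with Corollary \ref{corollary:minimallysingular} and Theorem \ref{thm:4goninc>4}) specialized to the $4^{\sigma+}$-gon: shrink a neighbourhood so that the vestigial negative-bias indices contribute only the constant $\ncols - (4+\sigma)$, the positive-bias indices land in the quadratic branch $H_i^+$ via $\delta(S_{ij}) = 0$, and the remaining block reproduces $H^{(4)} \circ \Pi_{\ncols,4}$ together with the cross-wedge terms $G_i$.

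One small clarification: your remark that ``elements $i \in B^+$ contribute nothing to $G_i$'' is slightly misphrased. The $G_i$ terms are indexed by the vestigial column $i$ but carry the \emph{main-column} biases $b_1,\ldots,b_4$ in their numerators, so whether $i \in B^+$ is irrelevant to the structure of $G_i$; the sum $\sum_{i=5}^{\ncols} G_i$ simply records all cross-wedges between a vestigial column and a main column, with indicators intact. This does not affect the validity of your decomposition. Also, the Lagrange-multiplier step you mention at the end belongs to the subsequent corollary (that the $4^{\sigma+}$-gon is critical), not to the present theorem, which concerns only the local form of $H$.
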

\begin{proof}
This follows from the same arguments in Theorem \ref{thm: k+gon}. 
\end{proof}
Thus, we conclude that the $4^{\sigma +}$-gons are critical points of the TMS potential by checking that each term in the TMS potential has gradient zero when approaching the $4^{\sigma +}$-gons in the region specified by the indicator function associated to it.

Let $B^- \subset \{1, 2, 3, 4\}$ and  $B^+ \subset \{5, 6, \cdots, \ncols\}$. Consider the $4^{\sigma +, \phi -}$-gon (Section \ref{section:tms_critical_pts}) with coordinate
\begin{align*}
    \theta^*: & \quad \theta_1 = \theta_2 = \theta_{3} = \frac{\pi}{2}, \quad \theta_4 + \cdots + \theta_c = \frac{\pi}{2} \\
    b^*: & \quad \text{for $i = 1, 2 ,3 ,4$, } b_i < 0 \text{ if } i \in B^- \text{ and } b_i= 0 \text{ if } i \notin B^-, \\
    &\quad \text{for $j = 5, 6, \cdots, \ncols$, } b_j < 0 \text{ if $j \notin B^+$ and } b_j = \frac{1}{2\ncols} \text{ if $j \in B^+$}; \\
    l^* : & \quad \text{for $i = 1, 2, 3, 4$, }  0 < l_i^2 < -b_i \text{ if } i \in B^- \text{ and } l_i = 1 \text{ if } i \notin B^-, \\
    & \quad \text{for $j = 5, 6, \dots, \ncols$, } l_j = 0.
\end{align*}
\begin{theorem}
For a fixed $\mathcal{M}$, let $H^{(4,-)}$ denote the TMS potential in some neighbourhood of the $4^{\phi -}$-gon for $\ncols = 4$ (Appendix \ref{subsubsection:4phi-gon}). Consider the projection
\[
    \Pi_{\ncols, 4} \colon (l_1, \dots, l_{\ncols}, \theta_1, \dots, \theta_{\ncols -1}, b_1, \dots, b_{\ncols}) \mapsto (l_1, \dots, l_4, \theta_1, \dots, \theta_{3}, b_1, \dots, b_4).
\]
The TMS potential in the chamber described by $\mathcal{M}$ is  
\[
    H(l, \theta, b) = H^{(4,-)} \circ \Pi_{\ncols, 4}(l, \theta, b) +  \sum_{i=5}^\ncols G_i(l, \theta, b)  + \big(\ncols - (4 + \sigma)\big) + \sum_{i \in B^+} H^+_i(l, \theta, b),
\]
where $G_i(l, \theta, b)$ is defined in Theorem \ref{thm:4goninc>4}, and for each $i \in B^+$,
\begin{align*}
    H^+_i(l, \theta, b) &= \sum_{j \ne i} \bigg[ \big (l_i l_j \cos(\theta_{ij}) \big)^2 + 3\big( l_i l_j \cos(\theta_{ij}) \big) b_i + 3b_i^2 \bigg] \\
    & \quad + \big[ (1 - l_i^2)^2 - 3(1 - l_i^2)b_i + 3b_i^2 \big],
\end{align*}
where $\theta_{ij}$ denote the angle between $W_i$ and $W_j$.
\end{theorem}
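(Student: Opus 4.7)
The plan is to combine the two previous decomposition results. Namely, the argument proceeds in three layers: first isolate the behaviour of the ``dead'' columns $i \in \{5,\dots,\ncols\}$ (as in Theorem for standard $4$-gons with $c>4$), then handle the separation between the zero and negative biases among $\{1,2,3,4\}$ (as in the $4^{\phi-}$-gon theorem above), and finally peel off the positive-bias contributions $H_i^+$ (as in the $4^{\sigma+}$-gon theorem). Since each of these earlier results was proved by the same chamber-localization technique, the proof for $4^{\sigma+,\phi-}$ is essentially an assembly of those pieces.

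Concretely, I would first construct an open neighbourhood $U$ of the critical point $(l^*,\theta^*,b^*)$ with three properties. (i) For each $j \in \{5,\dots,\ncols\}$ with $j \notin B^+$, one has $-l_j^2 > b_j$ and any wedge inequality $-l_a l_j \cos(\cdot) > b_a, b_j$ holds throughout $U$; this is possible because $l_j^* = 0$ and $b_j^* < 0$. (ii) For each $i \in B^-$, the constraints $l_i^2 < -b_i$, $-l_i l_{i\pm1}\cos(\theta) > b_i$, and $-l_i^2 > b_i$ persist throughout $U$. (iii) For each $i \in B^+$ and each $j \ne i$, the relevant positive-bias wedge $S_{ij}$ is empty because in a small enough neighbourhood $-l_i l_j\cos(\theta_{ij}) < b_i = 1/(2\ncols)$. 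These three facts together identify which indicator functions $\delta(T^{(\cdot)}_{M_{ij}})$, $\delta(T_i)$, and $\delta(S_{ij})$ are $0$ or $1$ on $U$.

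With the indicators pinned down, the expression~(\ref{eq:fulllocalpotential}) collapses term by term. The contributions indexed by wedges entirely contained in $\{1,2,3,4\}$ assemble exactly into $H^{(4,-)}\circ \Pi_{\ncols,4}$, as in the $4^{\phi-}$-gon theorem. The contributions indexed by wedges that cross from $\{1,2,3,4\}$ into the dead block $\{5,\dots,\ncols\}$ assemble into $\sum_{i=5}^{\ncols} G_i(l,\theta,b)$, exactly as in Theorem~\ref{thm:4goninc>4}. Each dead $j \notin B^+$ contributes the constant $1$ coming from $(1-\delta(T_j))$, producing $\ncols - (4+\sigma)$ after summation. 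Finally, each $i \in B^+$ contributes $H_i^+$ with all $\delta(S_{ij}) = 0$, matching the formula in Theorem~\ref{thm: k+gon}.

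To verify criticality, it then suffices to differentiate the decomposition and observe that each summand is critical at $(l^*,\theta^*,b^*)$: $H^{(4,-)}\circ \Pi_{\ncols,4}$ by the $\ncols=4$ result of Appendix~\ref{subsubsection:4phi-gon}, each $G_i$ because it vanishes to sufficient order in $l_i$ at $l_i^*=0$, and each $H_i^+$ because $b_i^* = 1/(2\ncols)$ is exactly the value chosen to zero out the bias-derivative (the $\theta$ and $l$ derivatives either vanish for the same reason or, as in the $5^+$-gon calculation, sum to zero by the equal-spacing of angles). The main technical point — and the likely obstacle — is that the $4$-gon sits on the boundary of several chambers, so one must be careful that the neighbourhood $U$ stays inside the fixed chamber described by $\mathcal{M}$; this is handled exactly as in Appendix~\ref{subsection:standard4gon} by choosing the $\theta$-perturbations small enough that no new wedge becomes active, which is feasible because $\cos(\pi/2)=0$ and all the ``diagonal'' interactions are bounded away from the boundary by the strict inequalities $-l_i l_{i+2}\cos(\theta_i+\theta_{i+1}) < b_i$ for positive $b_i$.
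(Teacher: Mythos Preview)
Your proposal is correct and follows essentially the same approach as the paper. The paper's own proof is a single sentence deferring to the argument of Theorem~\ref{thm: k+gon} (the $k^{\sigma+}$-gon decomposition), and your sketch simply unpacks what that deferral means: construct a neighbourhood on which the relevant indicators are constant, then read off the four blocks $H^{(4,-)}\circ\Pi_{\ncols,4}$, $\sum G_i$, the constant $\ncols-(4+\sigma)$, and $\sum_{i\in B^+}H_i^+$. Your final paragraph on criticality is extra (the theorem itself is only the decomposition; criticality is stated separately afterwards), but harmless.
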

\begin{proof}
This follows from the same arguments in Theorem \ref{thm: k+gon}. 
\end{proof}
Thus, we conclude that the $4^{\sigma +, \phi -}$-gons are critical points of the TMS potential by checking that each term in the TMS potential has gradient zero when approaching the $4^{\sigma +, \phi -}$-gons in the region specified by the indicator function associated to it.

\begin{remark}
For $\ncols > 4$, we do not know the theoretical local learning coefficients of these $4$-gons. In Appendix \ref{appendix:lambdahat_details}, we provide an estimation of local learning coefficients for various $4$-gons when $\ncols = 6$.
\end{remark}

\clearpage
\newpage
\section{Details of local learning coefficient estimation} \label{appendix:lambdahat_details}
In this section, we discuss technical details and caveats about the values of the local learning coefficient estimates $\hat{\lambda}$ given throughout the paper. It was claimed in \cite{quantifdegen} that the $\hat{\lambda}$ algorithm is valid for comparing or ordering critical points by their level of degeneracy. Obtaining the correct local learning coefficient can prove challenging. For TMS, we find that 
\begin{itemize}
    \item The ordering of $\hat{\lambda}$ for different critical points lines up with the theoretical prediction. 
    
    \item For critical points with low loss such as $6$, $5$ and $5^{+}$-gon depicted in Figure \ref{fig:energylevels}, the $\hat{\lambda}$ values are close to theoretically derived values. 
    
    \item However, for critical points with higher loss, mis-configured SGLD step size used in the algorithm can causing the sample path itself to undergo a phase transition to a lower loss state. See the diagnostic trace plot on the right of Figure \ref{fig:sgldtraceplot} for an example where SGLD trajectory drop to a different phase. This is the reason for \emph{negative $\hat{\lambda}$} values shown in Figure \ref{fig:staircase_plot}, in which we opted to use a uniform set of SGLD hyperparameters since we cannot a priori predict which critical point an SGD trajectory will visit. 
    
    \item Lowering SGLD step size can ameliorate this issue, at the cost of increasing the required number of sampling steps needed. 
\end{itemize}

Table \ref{tab:lambdahatvalues} shows a set of $\hat{\lambda}$ values computed using bespoke SGLD step size (explained below) for each group of 3 critical points with similar loss (again c.f. Figure \ref{fig:energylevels}. Specifically, we take the dataset size $n = 5000$, and SGLD hyperparameters $\gamma=0.1$, number of steps $=10000$. 

Furthermore, for each critical point, we run $10$ independent SGLD chains and discard any chain where more than $5\%$ of the samples have loss values that are lower than the critical point itself. The $\hat{\lambda}$ estimate and the standard deviation are then calculated from the remaining chains. The SGLD step size is manually chosen so that the majority of chains in each group passes the test above.

\begin{figure}
    \centering
    \includegraphics[width=0.45\linewidth]{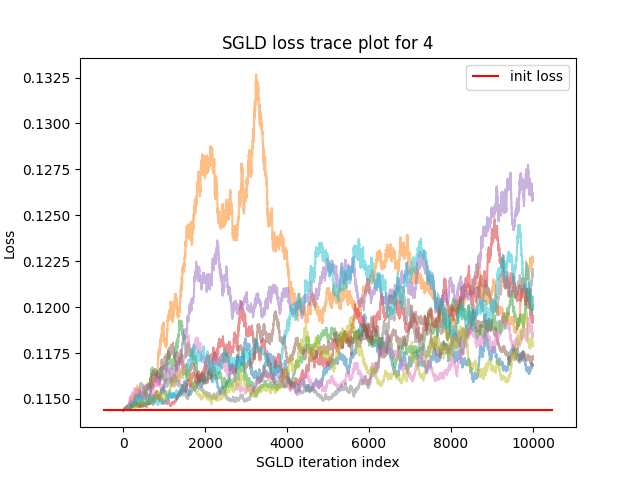}
    \includegraphics[width=0.45\linewidth]{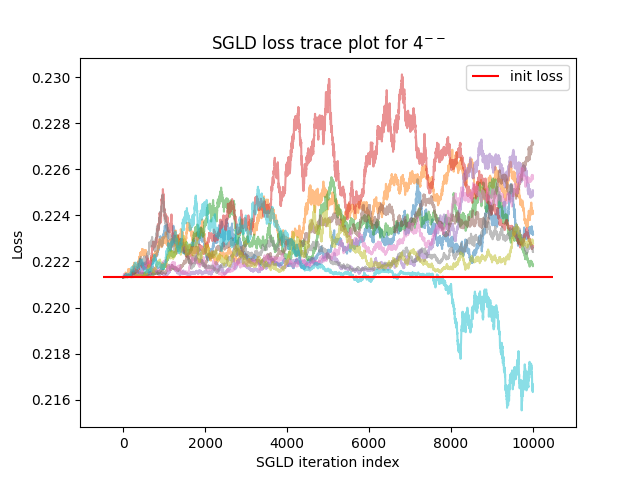}
    \caption{Loss trace plots for samples used for $\hat{\lambda}$ produced via SGLD sampling. The left plot show SGLD chains that are all ``healthy'' and the right plot shows a trajectory that escapes the phase of the initialising critical point (loss indicated by red horizontal line) to another phase with lower loss. To obtain the estimates $\hat{\lambda}$ listed in Table \ref{tab:lambdahatvalues}, such unhealthy chains are removed from consideration.}
    \label{fig:sgldtraceplot}
\end{figure}

\renewcommand{\arraystretch}{1.5}
\begin{table}
    \centering
    \begin{tabular}{| l | c | c |}
        \hline
        Critical point & Estimated $\hat{\lambda}$ (std) & SGLD step size \\
        \hline 
        $4^{----}$ & 0.000 (0.00) & 0.0000005\\
        $4^{+----}$ & 0.540 (0.16) & 0.0000005\\
        $4^{++----}$ & 0.998 (0.54) & 0.0000005\\

        $4^{---}$ & 1.024 (0.74) & 0.000001\\
        $4^{+---}$ & 1.619 (0.89) & 0.000001 \\
        $4^{++---}$ & 1.899 (0.76) & 0.000001\\

        $4^{--}$ & 1.689 (0.88) & 0.000001\\
        $4^{+--}$ & 2.096 (1.00) & 0.000001\\
        $4^{++--}$ & 2.597 (0.88) & 0.000001 \\

        $4^{-}$ & 2.991 (0.35) & 0.000005\\
        $4^{+-}$ & 3.393 (0.65) & 0.000005\\
        $4^{++-}$ & 4.097 (0.65) & 0.000005\\

        $4$ & 5.297 (0.04) & 0.00001\\
        $4^{+}$ & 5.761 (1.53) & 0.00001\\
        $4^{++}$ & 6.203 (0.99) & 0.00001\\
        
        $5$ & 7.705 (0.85) & 0.00005 \\
        $5^{+}$ & 9.906 (1.27) & 0.00005 \\
        $6$ & 9.027 (0.59) & 0.00005 \\
        \hline
    \end{tabular}
    \caption{$\hat{\lambda}$ for known critical points in $\nrows = 2, \ncols = 6$, their standard deviation across viable SGLD chains and the SGLD step size used.}
    \label{tab:lambdahatvalues}
\end{table}
\renewcommand{\arraystretch}{1}

\end{document}